\documentclass{article}

\usepackage[preprint]{neurips_2026}

\usepackage{amsmath,amsfonts,bm}

\def\eqref#1{equation~\ref{#1}}

\def\1{\bm{1}}

\def\ve{{\bm{e}}}

\def\vp{{\bm{p}}}

\def\vu{{\bm{u}}}

\def\mA{{\bm{A}}}
\def\mB{{\bm{B}}}

\def\mE{{\bm{E}}}

\def\mG{{\bm{G}}}

\def\mI{{\bm{I}}}

\def\mM{{\bm{M}}}

\def\mP{{\bm{P}}}
\def\mQ{{\bm{Q}}}
\def\mR{{\bm{R}}}
\def\mS{{\bm{S}}}

\def\mU{{\bm{U}}}
\def\mV{{\bm{V}}}
\def\mW{{\bm{W}}}
\def\mX{{\bm{X}}}
\def\mY{{\bm{Y}}}
\def\mZ{{\bm{Z}}}

\DeclareMathAlphabet{\mathsfit}{\encodingdefault}{\sfdefault}{m}{sl}
\SetMathAlphabet{\mathsfit}{bold}{\encodingdefault}{\sfdefault}{bx}{n}

\def\gC{{\mathcal{C}}}

\def\gL{{\mathcal{L}}}

\def\gR{{\mathcal{R}}}

\DeclareMathOperator{\Tr}{Tr}

\usepackage{hyperref}
\usepackage{url}
\usepackage[ruled,vlined]{algorithm2e}

\usepackage{amsmath}
\usepackage{amsthm}
\usepackage{mathtools}
\usepackage{enumitem}
\setlist[itemize]{noitemsep,nolistsep}
\usepackage[utf8]{inputenc} 
\usepackage[T1]{fontenc}    
\usepackage{booktabs}       
\usepackage{amsfonts}       
\usepackage{nicefrac}       
\usepackage{microtype}      
\usepackage{amssymb}
\usepackage{multirow}
\usepackage{multicol}
\usepackage{bm}
\usepackage[flushleft]{threeparttable}
\usepackage{setspace}
\usepackage{pifont}
\usepackage{subcaption}
\usepackage{graphicx}
\usepackage{xcolor}
\usepackage{caption}
\usepackage{wrapfig}
\usepackage{lipsum}
\usepackage{arydshln}
\usepackage{bbm}
\usepackage{footmisc}
\usepackage{tcolorbox}
\usepackage{hhline}
\usepackage{makecell}
\usepackage{mwe}
\usepackage{tabularx, array}
\usepackage[table]{xcolor}
\definecolor{ForestGreen}{RGB}{34,139,34}

\usepackage[capitalize,noabbrev]{cleveref}

\usepackage{tikz}

\usepackage{stackengine}

\theoremstyle{plain}
\newtheorem{theorem}{Theorem}[section]

\newtheorem{lemma}[theorem]{Lemma}

\theoremstyle{definition}
\newtheorem{definition}[theorem]{Definition}
\newtheorem{assumption}[theorem]{Assumption}
\theoremstyle{remark}

\newcommand{\methodname}{SDS-LoRA}

\title{\methodname: Overcoming Anisotropic Gradient Scaling in Low-Rank Adaptation}

\author{%
  Junghun Oh \\
  Dept. of ECE, ASRI\\
  Seoul National University\\
  \texttt{dh6dh@snu.ac.kr} \\
  \And
  Sungyong Baik \\
  Dept. of Artificial Intelligence\\
  Dept. of Data Science\\
  Hanyang University\\
  \texttt{dsybaik@hanyang.ac.kr} \\
  \And
  Kyoung Mu Lee \\
  Dept. of ECE, ASRI, IPAI\\
  Seoul National University\\
  \texttt{kyoungmu@snu.ac.kr} \\
}

\begin{document}

\maketitle

\begin{abstract}
Low-Rank Adaptation (LoRA) enables efficient adaptation of large pretrained models to downstream tasks by parameterizing weight updates with low-rank matrices.
In this paper, we investigate the limitations of the LoRA parameterization from a geometric perspective.
Specifically, we show that when a full fine-tuning gradient is backpropagated to the low-rank matrices, it undergoes anisotropic scaling driven by their singular values.
We argue that this phenomenon is undesirable because it distorts the full fine-tuning gradient by skewing it toward dominant singular directions while suppressing others.
Our analyses demonstrate that anisotropic gradient scaling reduces the effective rank of the low-rank matrices' gradients and fails to provide the best possible alignment between the full fine-tuning gradient and its low-rank approximation in LoRA for arbitrary gradients, thereby exacerbating the gap to full fine-tuning.
To address these limitations, we propose a new low-rank parameterization, SDS-LoRA, which \textbf{S}tructurally \textbf{D}ecouples \textbf{S}ingular values from the backward pass.
Our method ensures that the full fine-tuning gradient backpropagates only through the orthonormal bases of the low-rank matrices' subspaces, independent of their scales.
Convergence analysis demonstrates that while LoRA's convergence rate degrades with the condition number of the low-rank matrices, that of SDS-LoRA remains independent of it.
Experimental results across natural language and vision benchmarks show that SDS-LoRA improves loss convergence and reduces the gap to full fine-tuning, significantly enhancing adaptation performance.
\end{abstract}

\section{Introduction}
Large-scale pretrained Transformer models \citep{Vaswani2017attention}, such as GPT~\citep{radford2018improving}, LLaMA~\citep{touvron2023llama}, and ViTs~\citep{Dosovitskiy2021vit}, have achieved remarkable success by learning rich representations from massive datasets.
In recent years, adapting these versatile models to a wide range of downstream tasks has gained immense popularity.
While this pretraining-and-adaptation paradigm has become the standard in the field, full fine-tuning of these models is often computationally expensive and impractical.
For example, GPT-3~\citep{brown2020gpt3} has approximately 175 billion parameters, and LLaMA-3 variants~\citep{grattafiori2024llama3} range from 8 to 405 billion.
To address this, Low-Rank Adaptation (LoRA) \citep{hu2022lora} leverages the observation that model updates often reside in a low-rank subspace \citep{Aghajanyan2021intrinsic,Li2018measuring}, representing weight updates through trainable low-rank matrices: $\Delta \mW = \mB\mA$.
Consequently, LoRA drastically reduces the number of trainable parameters.

\begin{figure*}[t!]
    \centering
    \begin{subfigure}[b]{0.32\textwidth}
        \centering
        \includegraphics[width=\textwidth]{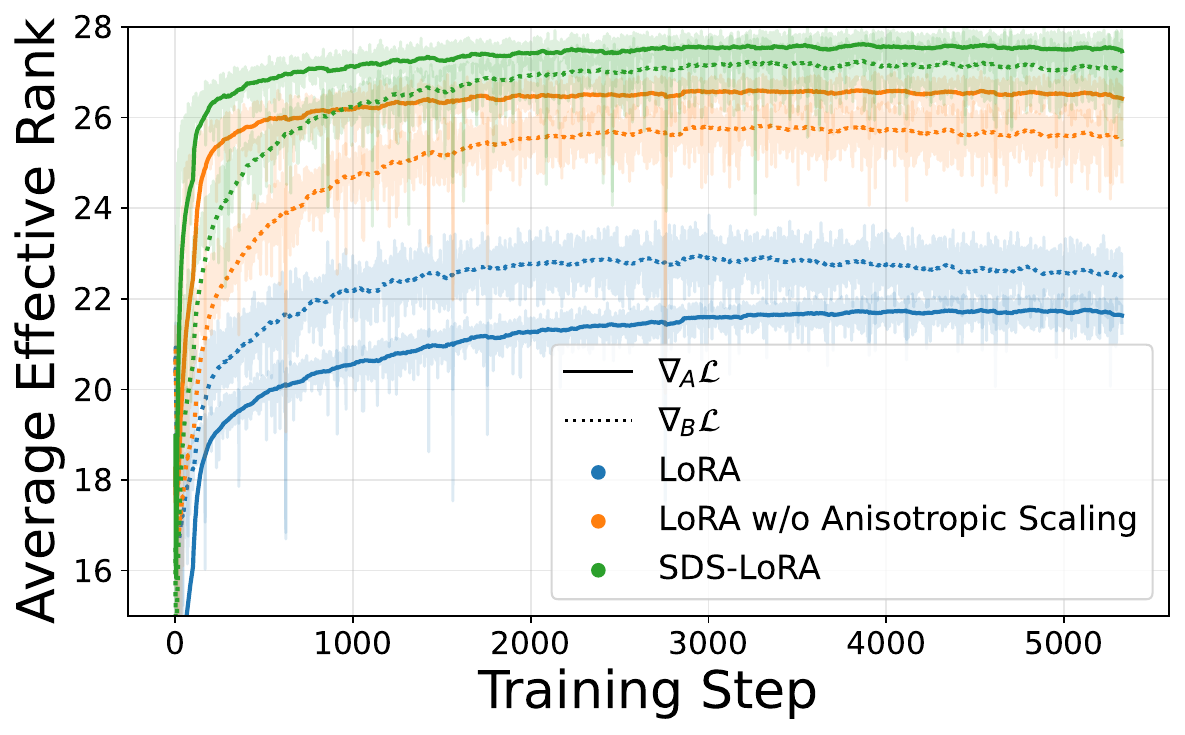}
        \vspace{-1.5em}
        \caption{Effective rank of gradients}
        \label{fig:motivation_effrank}
    \end{subfigure}
    \hspace{0.3em}
    \begin{subfigure}[b]{0.62\textwidth}
        \centering
        \includegraphics[width=\textwidth]{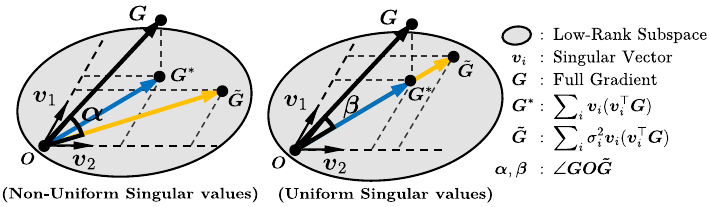}
        \vspace{-1.5em}
        \caption{Alignment between the full gradient $\mG$ and its approximation $\Tilde{\mG}$}
        \label{fig:motivation_alignment}
    \end{subfigure}
    \vspace{-0.5em}
    \caption{\textbf{Adverse effects of anisotropic gradient scaling in LoRA.}
    \textbf{(a)} At each step of LoRA training, we compare the effective rank of the original gradients of the LoRA matrices (\textcolor{blue}{blue}) to the modified gradients where anisotropic gradient scaling induced by singular values has been removed (\textcolor{orange}{orange}).
    We observe a substantial reduction in effective rank when gradients are scaled by singular values.
    In contrast, \methodname~overcomes anisotropic gradient scaling, thereby maintaining a higher effective gradient rank (\textcolor{ForestGreen}{green}).
    \textbf{(b)} Non-uniform singular values of LoRA matrices cause the effective gradient with respect to $\Delta \mW$ ($\Tilde{\mG}$) to misalign with the orthogonal projection of $\mG$ onto the low-rank subspace ($\mG^*$) by inducing anisotropic scaling to $\mG^*$.
    This results in suboptimal alignment between $\mG$ and $\Tilde{\mG}$ ($\alpha > \beta$).
    }
    \label{fig:motivation}
    \vspace{-2em}
\end{figure*}

Although LoRA has attracted significant attention for its efficiency, a substantial performance gap remains relative to full fine-tuning, motivating further improvements to the LoRA framework.
In this paper, we investigate the limitations of LoRA parameterization from a geometric perspective.
Specifically, we note that the optimization dynamics in LoRA are adversely affected by the singular-value structure of the LoRA matrices (i.e., $\mA$ and $\mB$).
For example, when calculating the gradient with respect to $\mA$, the gradient with respect to $\Delta \mW$, which we refer to as the full (fine-tuning) gradient, is backpropagated through $\mB$.
In this process, the singular values of $\mB$ strengthen directions associated with large singular values while relatively diminishing others.
This phenomenon may be undesirable because it distorts the full gradient by biasing it toward dominant singular directions, thereby restricting its rich information to a narrower subspace.
Figure~\ref{fig:motivation_effrank} demonstrates that the effective rank of the LoRA matrices' gradients is significantly reduced due to the anisotropic scaling, potentially limiting LoRA's learning capacity.
Furthermore, we theoretically show that, as illustrated in Figure~\ref{fig:motivation_alignment}, anisotropic gradient scaling results in an effective gradient with respect to $\Delta \mW$ that cannot approximate arbitrary full gradients as well as a uniform spectrum does, thereby widening the gap to full fine-tuning.

\begin{wrapfigure}{r}{0.37\textwidth}
    \begin{center}
    \vspace{-0.5cm}
    \includegraphics[width=1\linewidth]{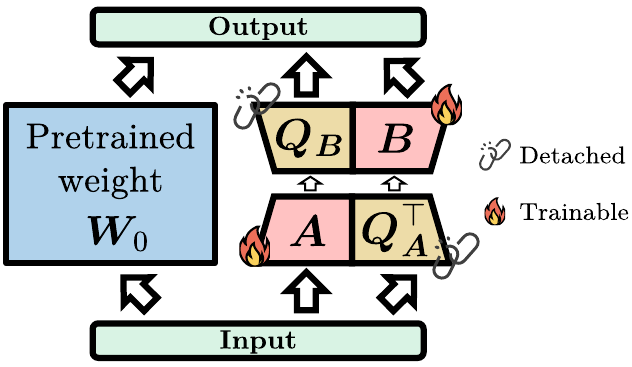}
    \end{center}
    \vspace{-0.3cm}
    \caption{\textbf{Illustration of \methodname}
    }
    \vspace{-0.5cm}
    \label{fig:method_illustration}
\end{wrapfigure}
To overcome the adverse effects of anisotropic gradient scaling, we propose a new parameterization for $\Delta \mW$, named SDS-LoRA, that \textbf{S}tructurally \textbf{D}ecouples the \textbf{S}ingular values of the LoRA matrices from the backward pass.
As illustrated in Figure~\ref{fig:method_illustration}, \methodname~multiplies each LoRA matrix by a matrix representing the orthonormal bases of the other's subspace (i.e., $\mQ_\mA$ and $\mQ_\mB$).
This formulation ensures that the singular values of the LoRA matrices do not participate in the gradient computation, thereby eliminating anisotropic gradient scaling.
Consequently, as shown in Figure~\ref{fig:motivation_effrank}, our method results in a significantly higher effective rank for the gradients of $\mA$ and $\mB$ compared to LoRA.
Moreover, as shown by the uniform-singular-value case in Figure~\ref{fig:motivation_alignment}, our method yields an effective gradient with respect to $\Delta \mW$ that best approximates the full gradient regardless of which gradient is encountered, thereby reducing the gap to full fine-tuning.
Our theoretical analysis proves that, while the convergence rate of LoRA is degraded by the condition number of the LoRA matrices, that of our method remains independent of it.
Experiments across various benchmarks and models demonstrate that \methodname~significantly improves loss convergence and narrows the gap to full fine-tuning, outperforming existing methods.


We summarize the contribution of our work as follows:
\begin{itemize}[itemsep=1pt]
    \vspace{-0.5em}
    \item We provide a novel perspective on the limitations of LoRA, showing that LoRA suffers from anisotropic gradient scaling induced by the singular values of the LoRA matrices.
    \item We demonstrate that anisotropic gradient scaling reduces the effective rank of the LoRA matrices' gradients and does not attain the best approximation of arbitrary full fine-tuning gradients.
    \item We propose a new parameterization, \methodname, that overcomes anisotropic gradient scaling in LoRA by structurally decoupling the singular values from the backward pass.
    \item \methodname~substantially improves loss convergence and narrows the gap to full fine-tuning, outperforming LoRA and existing methods.
    \vspace{-0.5em}
\end{itemize}

\section{Related Work}
\paragraph{LoRA Variants.}
Given the success of LoRA, numerous follow-up studies have been conducted to further enhance its performance~\citep{kalajdzievski2023rslora,Hayou2024loraplus,si2025unleashing}.
Several works propose alternative formulations of low-rank weight updates to further improve the performance–efficiency trade-off~\citep{liu2024dora,Kopiczko2024vera,Lingam2024svft,Albert2025randlora,Koohpayegani2024nola}.
Other research directions include developing effective initialization strategies for the trainable low-rank matrices~\citep{Hayou2024impact,wang2024loraga,meng2024pissa,Zhang2025loraone,Li2025beyond}, improving the rank of weight updates~\citep{Huang2025hira,Lialin2024relora}, and designing layer-wise adaptive rank search methods~\citep{Zhang2023adalora,ke2024unveiling}.
Another line of work leverages orthogonalization: OLoRA~\citep{buyukakyuz2024olora} initializes the LoRA matrices from a QR decomposition of the pretrained weight, and QR-LoRA~\citep{yang2025qrlora} freezes an orthonormal factor and trains only the triangular one.
The former restores isotropy only at initialization, while the latter confines the row space of $\Delta \mW$ to a subspace determined by the pretrained weight, sacrificing representational capacity.
See Section~\ref{sec:orthogonal_lora} for a mechanism-level and empirical comparison.

\vspace{-0.5em}
\paragraph{Full Fine-Tuning Approximation Perspective.}
Several recent studies have examined LoRA from the perspective of approximating the full fine-tuning gradient~\citep{wang2024loraga,Zhang2025loraone,wang2025lorapro,riemannian2024zhang,altlora2025yu,tastan2026loft}.
Wang et al.~\citep{wang2024loraga} and Zhang et al.~\citep{Zhang2025loraone} propose initialization strategies for LoRA that minimize the gradient approximation error at the first training step.
Several recent works attempt to minimize the gap between the full-fine-tuning gradient and the effective gradient with respect to weight updates via preconditioning~\citep{wang2025lorapro,riemannian2024zhang,altlora2025yu,tastan2026loft,yen2025lora,zhang2025reflora}.
Specifically, they scale the gradient of the LoRA matrices by the inverse of their Gram matrix.
While these approaches eliminate the influence of singular values on the effective gradient, anisotropic gradient scaling persists within the LoRA matrices' gradients.
Moreover, because these methods modify the original gradients, they require additional mechanisms to adjust momentum, which hinders their practical adoption.
See Section~\ref{sec:precond_discussion} for a detailed comparison to the preconditioning-based methods.

\vspace{-0.5em}
\paragraph{Learning in Stiefel Manifolds.}
Li et al.~\citep{li2025stella} and Lion et al.~\citep{lion2025polar} formulate weight updates as $\Delta \mW = \mU\mS\mV^\top$, where $\mU$ and $\mV$ are constrained to Stiefel manifolds and $\mS$ is a scale matrix.
These methods also modify the Euclidean gradient to produce a Riemannian gradient and require additional retraction or regularization to ensure that $\mU$ and $\mV$ remain on the manifolds.
Importantly, the updates for $\mU$ and $\mV$ depend on the scale matrix $\mS$, failing to overcome anisotropic gradient scaling.

Unlike prior works, we focus on the adverse effects of anisotropic gradient scaling, offering a novel perspective on LoRA's limitations.
Our method overcomes this by structurally eliminating the scaling effects from the backward pass without modifying the original gradients.

\section{Proposed Method}
\subsection{Preliminaries}

Let $\mW_0 \in \mathbb{R}^{d_\text{out} \times d_\text{in}}$ denote the weight matrix of a linear layer in a pretrained model, with input dimension $d_\text{in}$ and output dimension $d_\text{out}$.
We consider fine-tuning $\mW_0$ to adapt the model to a downstream task. Instead of updating all the parameters of $\mW_0$, Low-Rank Adaptation (LoRA) assumes a low-rank structure for the desired weight update~\citep{Aghajanyan2021intrinsic,Li2018measuring} and learns a low-rank decomposition of the update:
\begin{equation}\label{eq:definition_of_lora}
    \mW_\text{eff} = \mW_0 + \Delta \mW_\text{LoRA} = \mW_0 + s \mB \mA,
\end{equation}
where $\mA \in \mathbb{R}^{r \times d_\text{in}}$ and $\mB \in \mathbb{R}^{d_\text{out} \times r}$ are trainable LoRA matrices of rank $r$, and $s$ is a constant scaling factor. For a single layer, the total number of trainable parameters in LoRA is $r \times (d_\text{out} + d_\text{in})$.
Assuming $r \ll \text{min}(d_\text{out}, d_\text{in})$, this is significantly smaller than the $d_\text{out} \times d_\text{in}$ parameters required in a full fine-tuning scenario.

Let $\mG = \nabla_{\mW_\text{eff}} \gL$ denote the gradient of the loss $\gL$ with respect to the effective weights.
We refer to $\mG$ as the \textit{full gradient}, as it is used in full fine-tuning.
By the chain rule, the gradients with respect to the LoRA matrices are given by $\nabla_\mA \gL = s\mB^\top \mG$, $\nabla_\mB \gL = s\mG \mA^\top$.
Let $\mA = \mU_\mA \boldsymbol{\Sigma}_\mA \mV_\mA^\top$ and $\mB = \mU_\mB \boldsymbol{\Sigma}_\mB \mV_\mB^\top$ be the rank-$r$ truncated singular value decompositions (SVD) of the LoRA matrices, where $\mV_\mA \in \mathbb{R}^{d_\text{in} \times r}$ and $\mU_\mB \in \mathbb{R}^{d_\text{out} \times r}$ have orthonormal columns, $\boldsymbol{\Sigma}_\mA, \boldsymbol{\Sigma}_\mB \in \mathbb{R}^{r \times r}$ are the diagonal matrices of singular values, and $\mU_\mA, \mV_\mB \in \mathbb{R}^{r \times r}$ are orthogonal matrices.
Substituting the SVD representations into the gradient equations, we obtain
\begin{equation}\label{eq:grad_svd}
    \nabla_\mA \gL = s\mV_\mB \boldsymbol{\Sigma}_\mB \mU_\mB^\top \mG, \:\: \nabla_\mB \gL = s \mG \mV_\mA \boldsymbol{\Sigma}_\mA \mU_\mA^\top\:\:\:\:\:\text{(in LoRA)}.
\end{equation}

\subsection{Adverse Effects of Anisotropic Gradient Scaling via Singular Values}\label{sec:lora_limitation}
We provide a geometric interpretation of how the full gradient $\mG$ is backpropagated to yield $\nabla_\mA \gL$ and $\nabla_\mB \gL$.
Equation~\ref{eq:grad_svd} shows that $\mG$ undergoes the following transformations:
\begin{itemize}[itemsep=2pt]
    \item $\mV_\mA$ and $\mU_\mB^\top$: Acting as partial isometries, these matrices perform isometric projections of $\mG$ into their respective subspaces.
    \item $\boldsymbol{\Sigma}_\mA$ and $\boldsymbol{\Sigma}_\mB$: If the singular values of $\mA$ and $\mB$ are non-uniform, these matrices induce anisotropic scaling that skews the projected gradient toward the dominant singular directions.
    \item $\mU_\mA^\top$ and $\mV_\mB$: These matrices perform isometric transformations.
\end{itemize}
Note that the isometric projection by $\mV_\mA$ and $\mU_\mB^\top$ is an inherent bottleneck of low-rank adaptation, and $\mU_\mA^\top$ and $\mV_\mB$ act merely as isometries, preserving the geometric properties of the projected gradient.
\textbf{However, the anisotropic scaling through $\boldsymbol{\Sigma}_\mA$ and $\boldsymbol{\Sigma}_\mB$ may be undesirable because it makes the backpropagated gradient depend on which directions the spectrum happens to emphasize, leaving the remaining directions with an arbitrarily small share of the full gradient in the worst case.}

As discussed in Section~\ref{sec:stable_rank}, we observe that the stable ranks of $\mA$ and $\mB$ are substantially lower than the intrinsic rank $r$, suggesting that their singular values are highly skewed.
To demonstrate the adverse effects of anisotropic gradient scaling resulting from these skewed singular values, we compare the effective rank of the original gradients (i.e., $s\mV_\mB\boldsymbol{\Sigma}_\mB \mU_\mB^\top\mG$ and $s\mG \mV_\mA \boldsymbol{\Sigma}_\mA \mU_\mA^\top$) with that of modified gradients that are independent of singular values (i.e., $s\mV_\mB \mU_\mB^\top\mG$ and $s\mG \mV_\mA  \mU_\mA^\top$).
Figure~\ref{fig:motivation_effrank} shows that the effective rank is significantly reduced when the gradients are scaled by the singular values.
These results suggest that anisotropic gradient scaling collapses the rich information of $\mG$ into a narrower subspace by restricting it to a few dominant directions, potentially limiting LoRA's learning capacity.

Furthermore, anisotropic gradient scaling causes the optimization dynamics of LoRA to markedly deviate from those of full fine-tuning.
To examine this, we first derive the effective gradient with respect to $\mW_\text{eff}$.
The change in $\mW_\text{eff}$ resulting from updates to $\mA$ and $\mB$ is given by $\Delta \mW_\text{eff} \approx s (\Delta \mB \mA + \mB \Delta \mA) = -\eta s^2 (\mG \mA^\top\mA+\mB\mB^\top\mG)$ where $\eta$ is the learning rate.
Substituting the SVD of $\mA$ and $\mB$ into this, the effective gradient with respect to $\mW_\text{eff}$ is expressed as
\begin{equation}\label{eq:eff_grad}
    \Tilde{\mG} = s^2(\mG \mV_\mA \boldsymbol{\Sigma}_\mA^2 \mV_\mA^\top + \mU_\mB \boldsymbol{\Sigma}_\mB^2 \mU_\mB^\top\mG) \:\:\:\:\:\text{(in LoRA)}.
\end{equation}
Consequently, LoRA can be interpreted as full fine-tuning with $\Tilde{\mG}$, a low-rank approximation of $\mG$~\citep{wang2024loraga,wang2025lorapro,riemannian2024zhang,altlora2025yu,tastan2026loft}.
We analyze how the gap between $\mG$ and $\Tilde{\mG}$ influences the optimization dynamics through the descent lemma~\citep{Nesterov:2018}.
Assuming that the loss function $\mathcal{L}$ is $\beta$-smooth with respect to $\mW_\text{eff}$, we obtain the following loss decrease bound when $\mW_\text{eff}$ is updated by $\Tilde{\mG}$:
\begin{equation}\label{eq:descent_lemma}
    \mathcal{L}(\mW_\text{eff}-\eta \Tilde{\mG}) - \mathcal{L}(\mW_\text{eff}) \leq -\eta\langle \mG, \Tilde{\mG} \rangle_F + \frac{\beta}{2} \eta^2 ||\Tilde{\mG}||^2_F,
\end{equation}
where $\langle \cdot, \cdot \rangle_F$ and  $||\cdot||_F$ denote the Frobenius inner product and norm, respectively.
Equation~\ref{eq:descent_lemma} suggests that, for a sufficiently small $\eta$, the loss decrease bound is degraded as $\Tilde{\mG}$ diverges from $\mG$ in terms of the Frobenius inner product.
As such, we consider $\langle \mG, \Tilde{\mG} \rangle_F$ as a measure of the approximation quality of $\Tilde{\mG}$.
Note that $\mA$ and $\mB$ persist throughout training and encounter a different full gradient $\mG$ at every step and layer.
A desirable singular value configuration must therefore remain effective for arbitrary $\mG$ rather than being tuned to a single instance, which motivates a worst-case criterion over $\mG$.
To formalize this, define $\mM_\mA = \mV_\mA^\top \mG^\top \mG \mV_\mA$ and $\mM_\mB = \mU_\mB^\top \mG \mG^\top \mU_\mB$, so that the gradient approximation quality in LoRA is written as
\begin{equation*}
    \langle \mG, \Tilde{\mG} \rangle_F = s^2 J, \:\:\:\:\text{where}\:\:\:\: J = \Tr(\mM_\mA \boldsymbol{\Sigma}_\mA^2) + \Tr(\mM_\mB \boldsymbol{\Sigma}_\mB^2).
\end{equation*}
A worst-case value of $J$ alone, however, is not comparable across different $\mG$, since $J$ is quadratic in $\mG$ and an adversarial choice could simply take a $\mG$ of small norm.
We therefore compare $J$ against the best value achievable within the given subspaces for that same $\mG$, $J^\star(\mG) = E_\mA \lambda_{\max}(\mM_\mA) + E_\mB \lambda_{\max}(\mM_\mB)$ with $E_\mA = ||\mA||_F^2$ and $E_\mB = ||\mB||_F^2$, and evaluate a configuration by the scale-invariant worst-case ratio $\min_{\mG} J/J^\star(\mG)$, where the minimum is over $\mG$ with $J^\star(\mG) > 0$.
\vspace{-0.3em}
\begin{theorem}\label{theorem:approx}
Assume that the row space of $\mA$ and the column space of $\mB$ are given, together with the energies $E_\mA$ and $E_\mB$, so that the admissible configurations are those satisfying $\Tr(\boldsymbol{\Sigma}_\mA^2) = E_\mA$ and $\Tr(\boldsymbol{\Sigma}_\mB^2) = E_\mB$.
Then, $\boldsymbol{\Sigma}_\mA = \sqrt{E_\mA/r} \mI_r$ and $\boldsymbol{\Sigma}_\mB = \sqrt{E_\mB/r} \mI_r$ uniquely maximize the worst-case ratio $\min_{\mG} J/J^\star(\mG)$, attaining the optimal value $\frac{1}{r}$.
\end{theorem}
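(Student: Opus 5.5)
We first establish a lower bound on the worst-case ratio that holds for every admissible configuration. We then show it is attained with equality exactly at the uniform spectra. Write $a_i = \sigma_i(\mA)^2$ and $b_i = \sigma_i(\mB)^2$, so that $\sum_i a_i = E_\mA$ and $\sum_i b_i = E_\mB$.

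\textbf{Step 1 (upper bound on the worst case of any configuration).} We restrict attention to gradients $\mG$ that excite a single direction of one factor. Take $\mG = \vu\,\ve_k^\top \mV_\mA^\top$ with $\vu$ a unit vector orthogonal to the column space of $\mB$. This is possible whenever $d_\text{out} > r$, which follows from $r \ll \min(d_\text{out}, d_\text{in})$.

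\begin{itemize}
\item For this $\mG$ we have $\mU_\mB^\top \mG = 0$, so $\mM_\mB = 0$.
\item We also have $\mM_\mA = \ve_k \ve_k^\top$.
\item Hence $J = a_k$ and $J^\star = E_\mA$, giving $J/J^\star = a_k/E_\mA$.
\end{itemize}

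Minimizing over $k$ gives
\[
\min_\mG J/J^\star \le \min_k a_k/E_\mA \le 1/r,
\]
since the minimum of the $a_k$ is at most their average. Equality requires $a_k = E_\mA/r$ for all $k$. The symmetric construction on the $\mB$ side forces $b_k = E_\mB/r$. This already gives uniqueness of the maximizer, provided we show that the uniform configuration attains $1/r$.

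\textbf{Step 2 (the uniform configuration attains $1/r$).} With $a_i = E_\mA/r$ and $b_i = E_\mB/r$, we get
\[
J = \tfrac{E_\mA}{r}\Tr(\mM_\mA) + \tfrac{E_\mB}{r}\Tr(\mM_\mB).
\]
Since $\mM_\mA$ and $\mM_\mB$ are positive semidefinite, $\Tr(\mM) \ge \lambda_{\max}(\mM)$. Therefore $J \ge \tfrac{1}{r} J^\star(\mG)$ for every $\mG$ with $J^\star(\mG) > 0$. Combined with Step 1, the worst-case ratio for the uniform configuration equals exactly $1/r$. Every non-uniform configuration has some $a_k < E_\mA/r$ or some $b_k < E_\mB/r$, so by Step 1 its worst-case ratio is strictly below $1/r$. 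This proves both optimality and uniqueness.

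\textbf{Main obstacle.} The delicate point is the construction in Step 1, namely isolating one factor's term. This requires $\mM_\mB = 0$ while $\mM_\mA \neq 0$, which needs a direction orthogonal to $\operatorname{col}(\mB)$. If $d_\text{out} = r$, this fails. In that case I would instead take $\mG = \vp\, \ve_k^\top\mV_\mA^\top$ with $\vp$ chosen to align with a minimal $\mB$-direction, and use a combined-ratio argument. Under the paper's assumption $r \ll \min(d_\text{out}, d_\text{in})$ this case does not arise, and the clean construction suffices.

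A minor point is the degenerate case $E_\mA = 0$. There the $\mA$ terms vanish identically, and the argument runs on the $\mB$ side alone, so the statement still holds.
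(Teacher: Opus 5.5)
Your proof is correct in the LoRA regime $r < \min(d_\text{out}, d_\text{in})$, but it uses a different adversarial gradient from the paper.

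\textbf{Your route.} You isolate one factor at a time. The gradient $\mG = \vu\,\ve_k^\top\mV_\mA^\top$ with $\mU_\mB^\top\vu = 0$ annihilates $\mM_\mB$ and gives the ratio $a_k/E_\mA$. The mirror construction gives $b_k/E_\mB$. So each spectrum is forced to be uniform on its own.

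\textbf{The paper's route.} It takes a single rank-one gradient lying inside both subspaces, $\mG = \hat{\vu}\hat{\vp}^\top$. Here $\hat{\vp}$ is the column of $\mV_\mA$ at the smallest entry of $\boldsymbol{\Sigma}_\mA^2$, and $\hat{\vu}$ is the column of $\mU_\mB$ at the smallest entry of $\boldsymbol{\Sigma}_\mB^2$. Then $\mM_\mA$ and $\mM_\mB$ are both coordinate projectors, and the ratio is the mediant $(\lambda_{\min}(\boldsymbol{\Sigma}_\mA^2)+\lambda_{\min}(\boldsymbol{\Sigma}_\mB^2))/(E_\mA+E_\mB)\le 1/r$. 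Equality holds only if both spectra are uniform. For attainment at the uniform point, the paper likewise uses a rank-one $\mG$ inside the subspaces rather than reusing the upper-bound construction.

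\textbf{What each buys.} The paper's construction never leaves the given subspaces, so it needs no dimension assumption and remains valid when $r = d_\text{out}$ or $r = d_\text{in}$. It also needs only $E_\mA+E_\mB>0$, so there is no case split on vanishing energies. It is exactly the ``combined-ratio'' fallback you sketch in your last paragraph. Writing out that one line would let you drop the assumption $r<\min(d_\text{out},d_\text{in})$ entirely. Your construction, in exchange, exhibits a sharper worst case when both energies are positive: $\min(a_{\min}/E_\mA,\, b_{\min}/E_\mB)$, which never exceeds the paper's mediant. It therefore shows that skew in either factor alone is penalized at its own normalized level, regardless of the other factor.

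\textbf{One precision point.} What your Step 1 actually uses is $\mU_\mB^\top\vu = 0$, i.e.\ $\vu$ orthogonal to the span of $\mU_\mB$. That span is strictly larger than the column space of $\mB$ when some $b_k = 0$, so choose $\vu$ orthogonal to the span of $\mU_\mB$ rather than merely to the column space.
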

\vspace{-0.5em}
See Section~\ref{sec:approx_proof} for the proof.
Theorem~\ref{theorem:approx} suggests that, under the same subspace and energy configurations, uniform singular values are the unique configuration that guarantees the best gradient approximation quality for arbitrary $\mG$.
A non-uniform configuration, in contrast, is favorable only for the gradient directions that its spectrum happens to emphasize: for a $\mG$ concentrated on the smallest singular directions, its ratio degrades to $\frac{\lambda_{\min}(\boldsymbol{\Sigma}_\mA^2)+\lambda_{\min}(\boldsymbol{\Sigma}_\mB^2)}{E_\mA+E_\mB}$ (Equation~\ref{eq:approx_worstcase}), which falls strictly below $\frac{1}{r}$ and vanishes as the spectrum becomes more skewed.
Under the optimal condition, $\Tilde{\mG}$ is simplified to $s^2(\frac{E_\mA}{r} \mG \mP_{\gR(\mA)} + \frac{E_\mB}{r} \mP_{\gC(\mB)}\mG)$ where $\mP_{\gR(\mA)}$ and $\mP_{\gC(\mB)}$ denote the orthogonal projection matrices onto the row space of $\mA$ ($\gR(\mA)$) and the column space of $\mB$ ($\gC(\mB)$), respectively.
As illustrated in Figure~\ref{fig:motivation_alignment}, while the non-uniform singular values distort the projected gradients, the uniform singular values scale the projection equally in all directions, resulting in a maximal alignment between $\Tilde{\mG}$ and $\mG$.

\subsection{\methodname: Structural Decoupling of Singular Values from Backward Pass}\label{sec:method}

Our analysis in Section~\ref{sec:lora_limitation} demonstrates the negative effects of non-uniform singular values of $\mA$ and $\mB$ during the backward pass.
However, addressing this limitation is not straightforward, as the roles of singular values in the forward and backward passes conflict.
In the forward pass, the singular values play a crucial role in representing $\Delta \mW_\text{LoRA}$.
Enforcing uniform singular values of $\mA$ and $\mB$ significantly restricts the class of representable weight updates.\footnote{If both $\mA$ and $\mB$ have uniform singular values, $\Delta \mW_\text{LoRA}$ also has uniform singular values. If $\mA$ has uniform singular values, then $\mA = c\,\mU_\mA\mV_\mA^\top$ for some $c>0$, and similarly $\mB = d\,\mU_\mB\mV_\mB^\top$ for some $d>0$. Then $\Delta \mW_\text{LoRA} = \mB\mA = cd\,\mU_\mB(\mV_\mB^\top\mU_\mA)\mV_\mA^\top$. Since $\mV_\mB^\top\mU_\mA$ is a product of orthogonal matrices and hence orthogonal, both $\mU_\mB(\mV_\mB^\top\mU_\mA)$ and $\mV_\mA$ have orthonormal columns, so all singular values of $\Delta \mW_\text{LoRA}$ equal $cd$.}
Conversely, allowing fully expressive weight updates conflicts with the uniform singular value condition for $\mA$ and $\mB$.

To address this, we propose a new low-rank parameterization of weight updates, named \methodname, which \textbf{S}tructurally \textbf{D}ecouples the \textbf{S}ingular values of LoRA matrices from the backward pass while preserving their role in the forward pass for representing weight updates.
To do so, we utilize the orthonormal bases of $\mP_{\gR(\mA)}$ and $\mP_{\gC(\mB)}$.
Specifically, we perform QR decomposition of $\mA^\top$ and $\mB$: $\mA^\top = \mQ_\mA \mR_\mA$ and $\mB = \mQ_\mB \mR_\mB$ where the columns of $\mQ_\mA$ and $\mQ_\mB$ correspond to the orthonormal bases of $\mP_{\gR(\mA)}$ and $\mP_{\gC(\mB)}$, respectively.
We then formulate the weight updates as follows:
\begin{equation}\label{eq:method}
    \Delta \mW_{\text{\methodname}} = s(\mQ_\mB \mA + \mB \mQ_\mA^\top)=s[\mQ_\mB \; \mB] \begin{bmatrix} \mA\\ \mQ_\mA^\top \end{bmatrix}.
\end{equation}
Importantly, we treat $\mQ_\mA$ and $\mQ_\mB$ as constants during the backward pass.
This ensures that the LoRA matrices receive gradients only through $\mQ_\mA$ and $\mQ_\mB$, while also making training more efficient by eliminating the costly backpropagation through the QR decomposition.
Consequently, the gradients with respect to $\mA$ and $\mB$ are expressed as
\begin{equation}\label{eq:grad_ours}
    \nabla_\mA \gL = s\mQ_\mB^\top\mG,\:\nabla_\mB \gL = s\mG\mQ_\mA \:\:\:\:\:\:\text{(in \methodname)}.
\end{equation}
Equation~\ref{eq:grad_ours} shows that \textbf{the full gradient $\mG$ is backpropagated to the LoRA matrices only through the partial isometries $\mQ_\mA$ and $\mQ_\mB$, eliminating the negative effects of anisotropic scaling via singular values.}
This can be achieved without compromising the role of singular values in the forward pass, allowing $\Delta \mW_{\text{\methodname}}$ to represent any rank-$r$ matrix.

As shown in Figure~\ref{fig:motivation_effrank}, \methodname~overcomes the adverse effects of anisotropic gradient scaling, resulting in a higher effective rank in the gradients of $\mA$ and $\mB$ compared to LoRA.
Moreover, \methodname~narrows the gap between the full gradient $\mG$ and its low-rank approximation $\tilde{\mG}$.
To examine this, we derive the change in the effective weight under the \methodname~formulation, $\mW_\text{eff} = \mW_0 + \Delta \mW_\text{\methodname}$.
Although $\mQ_\mA$ and $\mQ_\mB$ are treated as constants during the backward pass, we periodically update them to reflect updates in the LoRA matrices, as described in the section on the training procedure.
Taking into account the variations in $\mQ_\mA$ and $\mQ_\mB$, the resulting change in $\mW_\text{eff}$ is given by $\Delta \mW_\text{eff} \approx s(\mQ_\mB \Delta \mA + \Delta \mB \mQ_\mA^\top + \Delta \mQ_\mB \mA + \mB \Delta \mQ_\mA^\top) = -\eta s^2(\mG \mQ_\mA \mQ_\mA^\top + \mQ_\mB \mQ_\mB^\top \mG) + s(\Delta \mQ_\mB \mA + \mB \Delta \mQ_\mA^\top)$.
Consequently, the effective gradient with respect to $\mW_\text{eff}$ is expressed as
\begin{equation}\label{eq:eff_grad_ours}
    \Tilde{\mG} = s^2(\mG \mQ_\mA \mQ_\mA^\top + \mQ_\mB \mQ_\mB^\top \mG) + \mE \:\:\:\:\:\text{(in SDS-LoRA)},
\end{equation}
where $\mE = -\frac{s}{\eta}(\Delta \mQ_\mB \mA + \mB \Delta \mQ_\mA^\top)$ collects the terms arising from the change in $\mQ_\mA$ and $\mQ_\mB$.
Because $\mQ_\mA\mQ_\mA^\top = \mP_{\gR(\mA)}$ and $\mQ_\mB\mQ_\mB^\top = \mP_{\gC(\mB)}$, our method attains the worst-case-optimal form of the gradient approximation characterized in Theorem~\ref{theorem:approx}, up to the error described by $\mE$.
In Section~\ref{sec:subspace_change}, we show that the contribution of $\mE$ is small because $\mQ_\mA$ and $\mQ_\mB$ change very slowly: the cosine similarity between $\mQ_\mA$ and $\mQ_\mA + \Delta \mQ_\mA$ (or $\mQ_\mB$ and $\mQ_\mB + \Delta \mQ_\mB$) remains close to 1 throughout training.
Thus, our method closely approaches the optimal $\Tilde{\mG}$, substantially reducing the gap to full fine-tuning.

\paragraph{Training Procedure.}

Following the standard initialization of $\mA$ and $\mB$, we sample $\mA$ from a uniform distribution and set $\mB = 0$.
Consequently, $\mQ_\mB$ cannot be defined from $\mB$ at the beginning of training.
To address this, we perform a warm-up training of $T$ iterations using the standard LoRA formulation.
After the warm-up, we compute the rank-$r$ truncated SVD of the weight updates, $\Delta \mW_\text{LoRA} = \mU_r\boldsymbol{\Sigma}_r\mV_r^\top$, and reparameterize it using the \methodname~formulation by setting the LoRA matrices and orthonormal bases as follows:
\begin{equation}\label{eq:initialization}
    \mA = \frac{1}{2 s} \boldsymbol{\Sigma}_r \mV_r^\top, \; \mB = \frac{1}{2 s} \mU_r \boldsymbol{\Sigma}_r,\; \mQ_\mA = \mV_r,\; \mQ_\mB = \mU_r.
\end{equation}
The coefficients in Equation~\ref{eq:initialization} are chosen so that the reparameterization exactly reproduces the warm-up result: substituting them into Equation~\ref{eq:method} gives $s(\mQ_\mB\mA + \mB\mQ_\mA^\top) = s(\frac{1}{2s}\mU_r\boldsymbol{\Sigma}_r\mV_r^\top + \frac{1}{2s}\mU_r\boldsymbol{\Sigma}_r\mV_r^\top) = \mU_r\boldsymbol{\Sigma}_r\mV_r^\top = \Delta \mW_\text{LoRA}$.
Using $\boldsymbol{\Sigma}_r^{1/2}$ instead would not reproduce this warm-up result, since Equation~\ref{eq:method} sums $\mQ_\mB\mA$ and $\mB\mQ_\mA^\top$ rather than multiplying $\mA$ and $\mB$ directly.
We reset all optimizer states after this process to ensure that the warm-up stage does not affect the subsequent main training stage.
During the main training stage, we recompute $\mQ_\mA$ and $\mQ_\mB$ as $\mA$ and $\mB$ evolve.
Although QR decomposition introduces minor computational overhead, we mitigate this cost by updating $\mQ_\mA$ and $\mQ_\mB$ intermittently.
To account for the larger learning rates early in training, we update them more frequently in the initial phase.
Specifically, at the $t$-th iteration, we update them if $t \bmod \left\lceil \frac{kt}{T_\text{total}} \right\rceil = 0$ where $k$ is a hyperparameter and $T_\text{total}$ indicates the total number of iterations.
This means that training is divided into $k$ phases, with the update interval in the $i$-th phase set to $i$.
See Algorithm~\ref{alg:training} for the overall training procedure.

{
\begin{algorithm}[t!]
\caption{Training Procedure of \methodname}
\label{alg:training}
\KwIn{Pretrained weight $\mW_0$, trainset $\mathcal{D}=\{\mathcal{B}_1, \dots \mathcal{B}_{T_\text{total}}\}$, $\mA$ and $\mB$ s.t. $\mB\mA = \boldsymbol{0}$, and warm-up iterations $T$.}
\tcp{\text{Warm-Up Stage}}
\For{$t=1$ \KwTo $T$}{
    $\Delta \mW_\text{LoRA} \leftarrow s\mB\mA$
    
    Forward pass on $\mathcal{B}_t$ using $\mW_\text{eff} = \mW_0+ \Delta \mW_\text{LoRA}$
    
    Backward pass and update $\mA$ and $\mB$
    
    \If{$t=T$}{
    Compute the rank-$r$ truncated SVD of $\Delta \mW_\text{LoRA}$: $\Delta \mW_\text{LoRA} = \mU_r \boldsymbol{\Sigma}_r \mV_r^\top$\\
    $\mA \leftarrow \frac{1}{2s} \boldsymbol{\Sigma}_r \mV_r^\top,\:\mB \leftarrow \frac{1}{2s} \mU_r \boldsymbol{\Sigma}_r,\:\mQ_\mA \leftarrow \mV_r,\:\mQ_\mB \leftarrow \mU_r\:$\tcp{\text{Equation~\ref{eq:initialization}}}
    
    Clear all optimizer states
    }    
}
\tcp{\text{Main Training Stage}}
\For{$t=1$ \KwTo $\normalfont T_\text{total}$}{
    $\Delta \mW_\text{\methodname} \leftarrow s(\mQ_\mB \mA + \mB \mQ_\mA^\top)\:$\tcp{\text{Equation~\ref{eq:method}}}
    
    Forward pass on $\mathcal{B}_t$ using $\mW_\text{eff} = \mW_0+ \Delta \mW_\text{\methodname}$
    
    Backward pass and update $\mA$ and $\mB$
    
    \If{$t \bmod \left\lceil \frac{kt}{T_\text{total}} \right\rceil = 0$}{
        Compute the QR decomposition of $\mA^\top$ and $\mB$: $\mA^\top = \mQ'_\mA \mR'_\mA$ and $\mB = \mQ'_\mB \mR'_\mB$
        $\mQ_\mA \leftarrow \mQ'_\mA,\:\mQ_\mB \leftarrow \mQ'_\mB$
    }
}
\Return $\mA, \mB$
\end{algorithm}
}

\begin{table*}[t!]
\caption{\textbf{Results on commonsense reasoning tasks.}
}
\vspace{-0.7em}
\label{tab:commonsense}
\centering
\resizebox{0.95\linewidth}{!}{
\begin{tabular}{p{1.8cm}l>{\centering\arraybackslash}p{0.5cm}ccccccccc}
    \toprule
     Model & Method & Rank & BoolQ & PIQA & SIQA & HellaSwag & WinoGrande & ARC-c & ARC-e & OBQA & Avg.\\
     \toprule
     \toprule
     \multirow{13}{*}{Gemma-2B} & Full FT. & - & 66.96{\scriptsize$\pm$0.21}&80.31{\scriptsize$\pm$0.12}&75.85{\scriptsize$\pm$0.47}&85.48{\scriptsize$\pm$0.43}&75.61{\scriptsize$\pm$0.67}&65.16{\scriptsize$\pm$0.70}&80.32{\scriptsize$\pm$0.38}&76.27{\scriptsize$\pm$0.34}&75.75\\
     \cmidrule{2-12}
     &LoRA&\multirow{5}{*}{8}&63.94{\scriptsize$\pm$0.04}&76.46{\scriptsize$\pm$0.28}&70.62{\scriptsize$\pm$0.36}&44.50{\scriptsize$\pm$0.48}&65.48{\scriptsize$\pm$0.74}&57.20{\scriptsize$\pm$0.28}&75.06{\scriptsize$\pm$0.56}&65.60{\scriptsize$\pm$1.41}&64.86\\
     &rsLoRA&&64.58{\scriptsize$\pm$0.07}&77.68{\scriptsize$\pm$0.56}&72.65{\scriptsize$\pm$0.60}&55.14{\scriptsize$\pm$1.73}&69.48{\scriptsize$\pm$0.15}&58.70{\scriptsize$\pm$0.00}&76.19{\scriptsize$\pm$0.38}&69.40{\scriptsize$\pm$0.57}&67.98\\
     &LoRA+&&63.35{\scriptsize$\pm$0.00}&76.82{\scriptsize$\pm$0.03}&72.00{\scriptsize$\pm$0.36}&77.18{\scriptsize$\pm$1.01}&70.90{\scriptsize$\pm$1.49}&58.47{\scriptsize$\pm$0.44}&76.06{\scriptsize$\pm$0.02}&70.20{\scriptsize$\pm$0.28}&70.62\\
     &PiSSA&&64.94{\scriptsize$\pm$0.20}&77.49{\scriptsize$\pm$0.13}&72.79{\scriptsize$\pm$0.31}&59.89{\scriptsize$\pm$3.06}&69.27{\scriptsize$\pm$0.04}&58.22{\scriptsize$\pm$0.52}&76.18{\scriptsize$\pm$0.12}&71.33{\scriptsize$\pm$0.19}&68.77\\
     &DoRA&&64.72{\scriptsize$\pm$0.16}&77.57{\scriptsize$\pm$0.41}&72.77{\scriptsize$\pm$0.36}&56.56{\scriptsize$\pm$0.61}&69.82{\scriptsize$\pm$0.41}&59.02{\scriptsize$\pm$0.56}&75.94{\scriptsize$\pm$0.08}&69.73{\scriptsize$\pm$0.47}&68.27\\
     \cmidrule{2-12}
     &\cellcolor{orange!25}\textbf{SDS-LoRA}&\cellcolor{orange!25}&\cellcolor{orange!25}\textbf{65.86}{\scriptsize $\pm$0.44}&\cellcolor{orange!25}\textbf{79.10}{\scriptsize $\pm$0.66}&\cellcolor{orange!25}\textbf{74.48}{\scriptsize $\pm$0.63}&\cellcolor{orange!25}\textbf{80.31}{\scriptsize $\pm$0.45}&\cellcolor{orange!25}\textbf{73.08}{\scriptsize $\pm$0.55}&\cellcolor{orange!25}\textbf{62.08}{\scriptsize $\pm$0.78}&\cellcolor{orange!25}\textbf{78.52}{\scriptsize $\pm$0.36}&\cellcolor{orange!25}\textbf{73.19}{\scriptsize $\pm$0.49}&\cellcolor{orange!25}\textbf{73.33}\\
     \cmidrule{2-12}
     &LoRA&\multirow{5}{*}{32}&64.24{\scriptsize$\pm$0.10}&76.22{\scriptsize$\pm$0.15}&70.71{\scriptsize$\pm$0.05}&45.05{\scriptsize$\pm$1.40}&65.98{\scriptsize$\pm$0.11}&57.51{\scriptsize$\pm$0.24}&74.96{\scriptsize$\pm$0.18}&65.13{\scriptsize$\pm$0.47}&64.98\\
     &rsLoRA&&65.63{\scriptsize$\pm$0.04}&78.89{\scriptsize$\pm$0.62}&73.97{\scriptsize$\pm$0.17}&77.69{\scriptsize$\pm$0.00}&71.03{\scriptsize$\pm$0.45}&61.26{\scriptsize$\pm$0.72}&78.55{\scriptsize$\pm$0.08}&72.80{\scriptsize$\pm$1.41}&72.48\\
     &LoRA+&&64.62{\scriptsize$\pm$0.14}&78.32{\scriptsize$\pm$0.10}&73.18{\scriptsize$\pm$0.65}&81.73{\scriptsize$\pm$0.10}&71.48{\scriptsize$\pm$0.45}&61.59{\scriptsize$\pm$0.20}&76.91{\scriptsize$\pm$0.30}&73.20{\scriptsize$\pm$0.28}&72.63\\
     &PiSSA&&65.74{\scriptsize$\pm$0.32}&77.98{\scriptsize$\pm$0.05}&74.14{\scriptsize$\pm$0.34}&72.90{\scriptsize$\pm$1.05}&70.96{\scriptsize$\pm$0.33}&59.47{\scriptsize$\pm$0.97}&76.91{\scriptsize$\pm$0.28}&74.40{\scriptsize$\pm$0.28}&71.56\\
     &DoRA&&65.30{\scriptsize$\pm$0.07}&78.89{\scriptsize$\pm$0.77}&74.19{\scriptsize$\pm$0.10}&74.17{\scriptsize$\pm$1.16}&71.56{\scriptsize$\pm$0.19}&61.06{\scriptsize$\pm$0.68}&78.56{\scriptsize$\pm$0.20}&72.60{\scriptsize$\pm$1.41}&72.04\\
     \cmidrule{2-12}
     &\cellcolor{orange!25}\textbf{SDS-LoRA}&\cellcolor{orange!25}&\cellcolor{orange!25}\textbf{66.34}{\scriptsize $\pm$0.35}&\cellcolor{orange!25}\textbf{80.78}{\scriptsize $\pm$0.53}&\cellcolor{orange!25}\textbf{74.86}{\scriptsize $\pm$0.58}&\cellcolor{orange!25}\textbf{84.93}{\scriptsize $\pm$0.52}&\cellcolor{orange!25}\textbf{74.01}{\scriptsize $\pm$0.64}&\cellcolor{orange!25}\textbf{62.84}{\scriptsize $\pm$0.85}&\cellcolor{orange!25}\textbf{79.34}{\scriptsize $\pm$0.44}&\cellcolor{orange!25}\textbf{75.29}{\scriptsize $\pm$0.82}&\cellcolor{orange!25}\textbf{74.80}\\
     \midrule
     \multirow{13}{*}{LLaMA3-8B} & Full FT. & - & 75.24{\scriptsize$\pm$0.21}& 89.70{\scriptsize$\pm$0.22} & 81.51{\scriptsize$\pm$0.24} & 96.11{\scriptsize$\pm$0.15} & 87.66{\scriptsize$\pm$0.17} & 81.23{\scriptsize$\pm$0.54} & 92.07{\scriptsize$\pm$0.19} & 88.02{\scriptsize$\pm$0.31} & 86.44\\
     \cmidrule{2-12}
     &LoRA&\multirow{5}{*}{8}&72.98{\scriptsize$\pm$0.06}&87.60{\scriptsize$\pm$0.31}&79.67{\scriptsize$\pm$0.39}&94.35{\scriptsize$\pm$0.14}&83.61{\scriptsize$\pm$0.15}&78.95{\scriptsize$\pm$0.44}&90.14{\scriptsize$\pm$0.02}&83.87{\scriptsize$\pm$1.04}&83.89\\
     &rsLoRA&&73.40{\scriptsize$\pm$0.19}&88.23{\scriptsize$\pm$0.26}&80.26{\scriptsize$\pm$0.41}&95.19{\scriptsize$\pm$0.03}&84.98{\scriptsize$\pm$0.19}&79.21{\scriptsize$\pm$0.32}&90.70{\scriptsize$\pm$0.12}&84.80{\scriptsize$\pm$0.28}&84.60\\
     &LoRA+&&73.07{\scriptsize$\pm$0.61}&88.03{\scriptsize$\pm$0.44}&80.22{\scriptsize$\pm$0.29}&94.75{\scriptsize$\pm$0.01}&85.00{\scriptsize$\pm$0.07}&78.95{\scriptsize$\pm$0.36}&90.16{\scriptsize$\pm$0.06}&85.20{\scriptsize$\pm$0.00}&84.42\\
     &PiSSA&&73.68{\scriptsize$\pm$0.01}&88.16{\scriptsize$\pm$0.10}&80.37{\scriptsize$\pm$0.19}&95.20{\scriptsize$\pm$0.05}&85.82{\scriptsize$\pm$0.19}&\textbf{80.12}{\scriptsize$\pm$0.60}&90.36{\scriptsize$\pm$0.06}& 85.67{\scriptsize$\pm$0.66} &84.92\\
     &DoRA&&73.20{\scriptsize$\pm$0.01}&87.85{\scriptsize$\pm$0.10}&80.21{\scriptsize$\pm$0.27}&95.22{\scriptsize$\pm$0.00}&84.37{\scriptsize$\pm$0.45}&79.66{\scriptsize$\pm$0.16}&90.53{\scriptsize$\pm$0.12}&85.53{\scriptsize$\pm$0.19}&84.57\\
     \cmidrule{2-12}
     &\cellcolor{orange!25}\textbf{SDS-LoRA}&\cellcolor{orange!25}&\cellcolor{orange!25}\textbf{73.91}{\scriptsize $\pm$0.41}&\cellcolor{orange!25}\textbf{88.77}{\scriptsize $\pm$0.80}&\cellcolor{orange!25}\textbf{80.91}{\scriptsize $\pm$0.40}&\cellcolor{orange!25}\textbf{96.00}{\scriptsize $\pm$0.57}&\cellcolor{orange!25}\textbf{87.08}{\scriptsize $\pm$0.52}&\cellcolor{orange!25}79.90{\scriptsize $\pm$0.45}&\cellcolor{orange!25}\textbf{90.80}{\scriptsize $\pm$0.74}&\cellcolor{orange!25}\textbf{86.02}{\scriptsize $\pm$0.48}&\cellcolor{orange!25}\textbf{85.42}\\
     \cmidrule{2-12}
     &LoRA&\multirow{5}{*}{32}&73.06{\scriptsize$\pm$0.13}&87.45{\scriptsize$\pm$0.05}&79.68{\scriptsize$\pm$0.00}&94.50{\scriptsize$\pm$0.08}&83.32{\scriptsize$\pm$0.15}&79.58{\scriptsize$\pm$0.52}&90.31{\scriptsize$\pm$0.02}&84.07{\scriptsize$\pm$0.75}&84.00\\
     &rsLoRA&&72.56{\scriptsize$\pm$0.89}&88.47{\scriptsize$\pm$0.08}&80.74{\scriptsize$\pm$0.39}&91.80{\scriptsize$\pm$5.36}&85.61{\scriptsize$\pm$0.52}&79.69{\scriptsize$\pm$0.60}&90.90{\scriptsize$\pm$0.08}&85.53{\scriptsize$\pm$1.04}&84.41\\
     &LoRA+&&73.43{\scriptsize$\pm$0.30}&88.25{\scriptsize$\pm$0.10}&80.03{\scriptsize$\pm$0.05}&94.99{\scriptsize$\pm$0.01}&85.87{\scriptsize$\pm$0.41}&79.92{\scriptsize$\pm$0.20}&90.05{\scriptsize$\pm$0.32}&85.73{\scriptsize$\pm$1.04}&84.78\\
     &PiSSA&&74.46{\scriptsize$\pm$0.52}&89.03{\scriptsize$\pm$0.44}&80.79{\scriptsize$\pm$0.41}&95.48{\scriptsize$\pm$0.02}&86.98{\scriptsize$\pm$0.11}&80.12{\scriptsize$\pm$0.36}&90.75{\scriptsize$\pm$0.14}&\textbf{86.33}{\scriptsize$\pm$0.66}&85.49\\
     &DoRA&&73.86{\scriptsize$\pm$0.23}&88.72{\scriptsize$\pm$0.33}&80.98{\scriptsize$\pm$0.27}&95.67{\scriptsize$\pm$0.05}&85.85{\scriptsize$\pm$0.41}&79.78{\scriptsize$\pm$0.48}&91.08{\scriptsize$\pm$0.00}&85.87{\scriptsize$\pm$0.09}&85.23\\
     \cmidrule{2-12}
     &\cellcolor{orange!25}\textbf{SDS-LoRA}&\cellcolor{orange!25}&\cellcolor{orange!25}\textbf{74.69}{\scriptsize $\pm$0.49}&\cellcolor{orange!25}\textbf{89.43}{\scriptsize $\pm$0.66}&\cellcolor{orange!25}\textbf{81.05}{\scriptsize $\pm$0.58}&\cellcolor{orange!25}\textbf{96.21}{\scriptsize $\pm$0.75}&\cellcolor{orange!25}\textbf{88.08}{\scriptsize $\pm$0.55}&\cellcolor{orange!25}\textbf{81.03}{\scriptsize $\pm$0.77}&\cellcolor{orange!25}\textbf{92.01}{\scriptsize $\pm$0.34}&\cellcolor{orange!25}86.18{\scriptsize $\pm$0.78}&\cellcolor{orange!25}\textbf{86.08}\\
     \bottomrule
\end{tabular}}
\vspace{-1em}
\end{table*}

\begin{table*}[t!]
\centering
\begin{minipage}{0.6\textwidth}
\caption{\textbf{Results on natural language generation tasks.}}
\vspace{-0.5em}
\label{tab:nlg}
\centering
\resizebox{1\linewidth}{!}{
\begin{tabular}{p{1.8cm}>{\centering\arraybackslash}p{0.5cm}cccccc}
    \toprule
    \multirow{2}{*}{Method} & \multirow{2}{*}{Rank} & \multicolumn{3}{c}{Gemma-2B} & \multicolumn{3}{c}{LLaMA3-8B} \\
    \cmidrule(lr){3-5} \cmidrule(lr){6-8}
    & & MATH & GSM8K & HumanEval & MATH & GSM8K & HumanEval \\
    \toprule
    \toprule
    Full FT. & - & 19.17{\scriptsize$\pm$0.22} & 56.23{\scriptsize$\pm$0.23} & 33.35{\scriptsize$\pm$0.65} & 26.47{\scriptsize$\pm$0.41} & 77.04{\scriptsize$\pm$0.18} & 49.11{\scriptsize$\pm$0.44} \\
    \midrule
    LoRA   & \multirow{5}{*}{8}  & 16.12{\scriptsize$\pm$0.28} & 45.59{\scriptsize$\pm$0.95} & 27.24{\scriptsize$\pm$0.57} & 23.68{\scriptsize$\pm$0.28} & 73.44{\scriptsize$\pm$0.22} & 42.53{\scriptsize$\pm$1.80} \\
    rsLoRA && 17.10{\scriptsize$\pm$0.24} & 49.10{\scriptsize$\pm$1.44} & 29.27{\scriptsize$\pm$1.79} & 24.52{\scriptsize$\pm$0.09} & 75.71{\scriptsize$\pm$0.23} & 42.23{\scriptsize$\pm$1.63} \\
    LoRA+  && 17.09{\scriptsize$\pm$0.46} & 50.30{\scriptsize$\pm$0.81} & 27.64{\scriptsize$\pm$0.76} & 24.65{\scriptsize$\pm$0.63} & 75.27{\scriptsize$\pm$0.59} & 45.27{\scriptsize$\pm$2.84} \\
    PiSSA  && 16.44{\scriptsize$\pm$0.38} & 50.97{\scriptsize$\pm$0.74} & 28.86{\scriptsize$\pm$0.58} & 24.43{\scriptsize$\pm$0.23} & 74.96{\scriptsize$\pm$1.01} & 46.19{\scriptsize$\pm$2.64} \\
    DoRA   && 17.19{\scriptsize$\pm$0.28} & 50.61{\scriptsize$\pm$0.85} & 28.35{\scriptsize$\pm$1.52} & 24.63{\scriptsize$\pm$0.21} & 75.41{\scriptsize$\pm$1.32} & 43.29{\scriptsize$\pm$1.14} \\
    \midrule
    \cellcolor{orange!25}\textbf{SDS-LoRA} & \cellcolor{orange!25} & 
    \cellcolor{orange!25}\textbf{17.62}{\scriptsize $\pm$0.66}&\cellcolor{orange!25}\textbf{52.01}{\scriptsize $\pm$0.72}&\cellcolor{orange!25}\textbf{31.24}{\scriptsize $\pm$1.43}&
    \cellcolor{orange!25}\textbf{25.81}{\scriptsize $\pm$0.53}&\cellcolor{orange!25}\textbf{76.42}{\scriptsize $\pm$0.90}&\cellcolor{orange!25}\textbf{46.95}{\scriptsize $\pm$1.56} \\
    \midrule
    LoRA   & \multirow{5}{*}{32} & 16.31{\scriptsize$\pm$0.13} & 45.67{\scriptsize$\pm$0.91} & 28.66{\scriptsize$\pm$0.50} & 23.66{\scriptsize$\pm$0.25} & 73.49{\scriptsize$\pm$0.59} & 42.53{\scriptsize$\pm$1.09} \\
    rsLoRA && 18.09{\scriptsize$\pm$0.04} & 51.93{\scriptsize$\pm$0.53} & 31.10{\scriptsize$\pm$1.32} & 25.97{\scriptsize$\pm$0.17} & 76.28{\scriptsize$\pm$0.96} & 44.36{\scriptsize$\pm$1.39} \\
    LoRA+  && 17.45{\scriptsize$\pm$0.39} & 52.74{\scriptsize$\pm$0.75} & 31.30{\scriptsize$\pm$1.25} & 25.01{\scriptsize$\pm$0.11} & 76.11{\scriptsize$\pm$0.20} & 46.34{\scriptsize$\pm$1.29} \\
    PiSSA  && 17.23{\scriptsize$\pm$0.24} & 53.38{\scriptsize$\pm$0.77} & 32.11{\scriptsize$\pm$0.76} & 24.80{\scriptsize$\pm$0.41} & 76.39{\scriptsize$\pm$0.75} & 47.10{\scriptsize$\pm$2.38} \\
    DoRA   && 18.27{\scriptsize$\pm$0.14} & 52.41{\scriptsize$\pm$0.24} & 30.79{\scriptsize$\pm$1.52} & 25.87{\scriptsize$\pm$0.42} & 76.42{\scriptsize$\pm$0.17} & 44.97{\scriptsize$\pm$1.00} \\
    \midrule
    \cellcolor{orange!25}\textbf{SDS-LoRA} & \cellcolor{orange!25} & 
    \cellcolor{orange!25}\textbf{18.31}{\scriptsize $\pm$0.54}&\cellcolor{orange!25}\textbf{55.21}{\scriptsize $\pm$0.81}&\cellcolor{orange!25}\textbf{34.01}{\scriptsize $\pm$1.50}& 
    \cellcolor{orange!25}\textbf{26.36}{\scriptsize $\pm$0.73}&\cellcolor{orange!25}\textbf{77.13}{\scriptsize $\pm$0.84}&\cellcolor{orange!25}\textbf{47.61}{\scriptsize $\pm$1.67}\\
    \bottomrule
\end{tabular}}
\end{minipage}
\begin{minipage}{0.32\textwidth}
    \caption{\textbf{Results on image classification tasks with ViT-Base.}
    See Section~\ref{sec:vision} for details and more results.
    }
\vspace{-0.5em}
\label{tab:vision_main}
\centering
\resizebox{1\linewidth}{!}{
\begin{tabular}{l>{\centering\arraybackslash}p{0.5cm}ccc}
    \toprule
    Method & Rank & Cars & CUB200 & SUN397\\
    \toprule
    \toprule
    Full FT. & - & 84.26{\scriptsize$\pm$0.19} & 86.35{\scriptsize$\pm$0.18} & 74.76{\scriptsize$\pm$0.26} \\
    \cmidrule{1-5}
    LoRA & \multirow{5}{*}{8} & 78.66{\scriptsize$\pm$0.23} & 85.65{\scriptsize$\pm$0.05} & 74.35{\scriptsize$\pm$0.28}\\
    rsLoRA & & 78.48{\scriptsize$\pm$0.07} & 85.67{\scriptsize$\pm$0.41} & 74.63{\scriptsize$\pm$0.13}\\
    LoRA+ & & 79.05{\scriptsize$\pm$0.25} & 85.28{\scriptsize$\pm$0.11} & 72.87{\scriptsize$\pm$0.28}\\
    PiSSA & & 78.03{\scriptsize$\pm$0.32} & 85.17{\scriptsize$\pm$0.24} & 72.82{\scriptsize$\pm$0.10}\\
    DoRA & & 78.93{\scriptsize$\pm$0.30} & 85.74{\scriptsize$\pm$0.35} & 74.46{\scriptsize$\pm$0.09}\\
    \cmidrule{1-5}
    \cellcolor{orange!25}\textbf{SDS-LoRA} & \cellcolor{orange!25} & \cellcolor{orange!25}\textbf{80.41}{\scriptsize $\pm$0.20}&\cellcolor{orange!25}\textbf{86.07}{\scriptsize $\pm$0.13}&\cellcolor{orange!25}\textbf{74.85}{\scriptsize $\pm$0.13}\\
    \cmidrule{1-5}
    LoRA & \multirow{5}{*}{32} & 81.48{\scriptsize$\pm$0.20} & 85.75{\scriptsize$\pm$0.28} & 70.82{\scriptsize$\pm$0.07}\\
    rsLoRA & & 80.57{\scriptsize$\pm$0.22} & 85.42{\scriptsize$\pm$0.41} & 73.92{\scriptsize$\pm$0.26}\\
    LoRA+ & & 79.53{\scriptsize$\pm$0.60} & 85.68{\scriptsize$\pm$0.31} & 67.66{\scriptsize$\pm$0.06}\\
    PiSSA & & 80.43{\scriptsize$\pm$0.17} & 84.90{\scriptsize$\pm$0.28} & 69.73{\scriptsize$\pm$0.12}\\
    DoRA & & 81.07{\scriptsize$\pm$0.03} & 85.67{\scriptsize$\pm$0.22} & 74.05{\scriptsize$\pm$0.21}\\
    \cmidrule{1-5}
    \cellcolor{orange!25}\textbf{SDS-LoRA} & \cellcolor{orange!25} & \cellcolor{orange!25}\textbf{82.95}{\scriptsize$\pm$0.25} & \cellcolor{orange!25}\textbf{85.99}{\scriptsize$\pm$0.22} & \cellcolor{orange!25}\textbf{74.89}{\scriptsize$\pm$0.14}\\
    \bottomrule
\end{tabular}

}
\end{minipage}
\vspace{-1em}
\end{table*}

\subsection{Convergence Analysis}\label{sec:convergence}
In Section~\ref{sec:lora_limitation}, we utilize the descent lemma to formally characterize the adverse effects of anisotropic gradient scaling.
To demonstrate how overcoming this issue improves loss convergence, we extend this analysis to compare the convergence rates of LoRA and \methodname.

We first assume that the loss function with respect to $\mW_\text{eff}$ satisfies the Polyak–Łojasiewicz (PL) condition, a standard assumption in non-convex optimization used to establish linear convergence rates: $\|\nabla_{\mW_\text{eff}} \gL\|_F^2 \geq 2\mu(\gL(\mW_\text{eff}) - \gL^*)$ where $\mu > 0$ and $\gL^*$ denotes the minimum loss.
While this condition can be restrictive in general deep learning contexts, it is plausible when fine-tuning from a pretrained model, where the loss landscape is typically better conditioned~\citep{xu2025pl}.
Accordingly, we assume that the PL condition holds in a neighborhood of the pretrained weight $\mW_0$.
We also assume that the change in $\mQ_\mA$ and $\mQ_\mB$ is negligible, resulting in $\mE \approx 0$ in Equation~\ref{eq:eff_grad_ours}.
See Section~\ref{sec:subspace_change} for empirical validation of this assumption.
Additionally, let $\alpha \in [0,2]$ represent the alignment between $\mG$ and the low-rank subspaces $\mP_{\gR(\mA)}$ and $\mP_{\gC(\mB)}$ such that: $\|\mG \mP_{\gR(\mA)}\|_F^2 + \|\mP_{\gC(\mB)}\mG\|_F^2=\alpha \|\mG\|_F^2$.
Under these assumptions, we derive the following theorem regarding the linear convergence rates of LoRA and \methodname:
\begin{theorem}\label{theorem:convergence}
    Assume that the $\mu$-PL condition holds near the pretrained weight $\mW_0$, that the loss $\gL$ is $\beta$-smooth, and that the subspaces of $\mA$ and $\mB$ capture a fraction $\alpha$ of the energy of $\mG$.
    Then, the linear convergence rates of standard LoRA and \methodname~are given by:
    \begin{equation}
        \gL_{t+1} - \gL^* \leq
        \begin{cases}
            (1-\frac{\mu\alpha}{2\beta\kappa^4})(\gL_t- \gL^*) & \mbox {\normalfont(LoRA)} \\
            (1-\frac{\mu\alpha}{2\beta})(\gL_t- \gL^*) & \mbox {\normalfont(\methodname)}
        \end{cases}
    \end{equation}
    where $\gL^*$ denotes the minimal loss and $\kappa = \frac{\max(\sigma_{\normalfont\text{max}}(\mA),\sigma_{\normalfont\text{max}}(\mB))}{\min(\sigma_{\normalfont\text{min}}(\mA),\sigma_{\normalfont\text{min}}(\mB))}$.
\end{theorem}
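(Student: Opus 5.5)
The plan is to run the same descent-lemma argument as in Equation~\ref{eq:descent_lemma} for both parameterizations, viewing each as full fine-tuning driven by its effective gradient $\Tilde{\mG}$ (Equation~\ref{eq:eff_grad} for LoRA, Equation~\ref{eq:eff_grad_ours} with $\mE \approx 0$ for \methodname). For a generic $\Tilde{\mG}$, minimizing the right-hand side of Equation~\ref{eq:descent_lemma} over $\eta$ gives the step $\eta = \langle \mG,\Tilde{\mG}\rangle_F/(\beta\|\Tilde{\mG}\|_F^2)$. With this step,
\begin{equation*}
\gL_{t+1}-\gL_t \le -\frac{\langle \mG,\Tilde{\mG}\rangle_F^2}{2\beta\|\Tilde{\mG}\|_F^2}.
\end{equation*}
So everything reduces to a lower bound on $\langle \mG,\Tilde{\mG}\rangle_F$ and an upper bound on $\|\Tilde{\mG}\|_F^2$, both in terms of $\alpha\|\mG\|_F^2$.

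\textbf{LoRA.} For the inner product I would use the computation preceding Theorem~\ref{theorem:approx}:
\begin{equation*}
\langle \mG,\Tilde{\mG}\rangle_F = s^2\big(\|\mG\mV_\mA\boldsymbol{\Sigma}_\mA\|_F^2+\|\boldsymbol{\Sigma}_\mB\mU_\mB^\top\mG\|_F^2\big).
\end{equation*}
Each diagonal entry of $\boldsymbol{\Sigma}_\mA$ and $\boldsymbol{\Sigma}_\mB$ is at least $\sigma_{\min}:=\min(\sigma_{\min}(\mA),\sigma_{\min}(\mB))$, where $\sigma_{\min}$ is the $r$-th singular value and $\mA,\mB$ are assumed to have full rank $r$. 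Also $\|\mG\mV_\mA\|_F=\|\mG\mP_{\gR(\mA)}\|_F$ and $\|\mU_\mB^\top\mG\|_F=\|\mP_{\gC(\mB)}\mG\|_F$. Together these give $\langle \mG,\Tilde{\mG}\rangle_F \ge s^2\sigma_{\min}^2\,\alpha\|\mG\|_F^2$.

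For the norm, I would apply the triangle inequality and then $(a+b)^2\le 2(a^2+b^2)$. Writing $\mG\mV_\mA\boldsymbol{\Sigma}_\mA^2\mV_\mA^\top$ and bounding the spectral norm of $\boldsymbol{\Sigma}_\mA^2$ by $\sigma_{\max}^2$, with $\sigma_{\max}:=\max(\sigma_{\max}(\mA),\sigma_{\max}(\mB))$, and doing the same for the $\mB$ term, yields $\|\Tilde{\mG}\|_F^2 \le 2s^4\sigma_{\max}^4\,\alpha\|\mG\|_F^2$.

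Plugging both bounds into the decrease gives
\begin{equation*}
\gL_t-\gL_{t+1} \ge \frac{s^4\sigma_{\min}^4\alpha^2\|\mG\|_F^4}{4\beta s^4\sigma_{\max}^4\alpha\|\mG\|_F^2} = \frac{\alpha}{4\beta\kappa^4}\|\mG\|_F^2.
\end{equation*}
Applying the $\mu$-PL inequality $\|\mG\|_F^2\ge 2\mu(\gL_t-\gL^*)$ and subtracting $\gL^*$ from both sides yields the LoRA rate $1-\frac{\mu\alpha}{2\beta\kappa^4}$.

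\textbf{\methodname.} Here the same steps hold with all singular-value factors replaced by identities, because $\mQ_\mA\mQ_\mA^\top=\mP_{\gR(\mA)}$ and $\mQ_\mB\mQ_\mB^\top=\mP_{\gC(\mB)}$ are idempotent. We get $\langle \mG,\Tilde{\mG}\rangle_F = s^2\alpha\|\mG\|_F^2$ exactly, and $\|\Tilde{\mG}\|_F^2 \le 2s^4\alpha\|\mG\|_F^2$ by the same $(a+b)^2$ bound. This is the LoRA computation with $\kappa=1$, giving the rate $1-\frac{\mu\alpha}{2\beta}$.

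The computations are routine. The points needing care are:
\begin{itemize}
\item The case $\alpha=0$: here the bound is trivially a contraction factor of $1$, and the optimal-step formula should be read with $\Tilde{\mG}=0$.
\item The step size: it is iterate-dependent, so I will either state the bound for this exact-line-search step, or note that the constant step $\eta=1/(2\beta s^2\sigma_{\max}^2)$ (resp.\ $1/(2\beta s^2)$) achieves the same factor via the same two bounds.
\item The PL region: the iterates must remain in the neighborhood of $\mW_0$ where PL is assumed, which I will take as part of the hypothesis.
\end{itemize}
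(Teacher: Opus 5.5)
Your argument is correct and is essentially the paper's proof. Both use the descent lemma, lower-bound the descent term by $s^2\sigma_{\min}^2\,\alpha\|\mG\|_F^2$, upper-bound the squared effective update by $2s^4\sigma_{\max}^4\,\alpha\|\mG\|_F^2$ (times $\eta^2$), and finish with PL. The differences are cosmetic. The paper bounds $\|\Delta\mW_\text{eff}\|_F$ through $\|\Delta\mA\|_F$ and $\|\Delta\mB\|_F$ and optimizes $\eta$ after substituting the bounds. You bound $\|\Tilde{\mG}\|_F$ directly and take the exact minimizer of the descent-lemma bound. Both give a decrease of $\frac{\alpha}{4\beta\kappa^4}\|\mG\|_F^2$.

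One correction concerns your fallback step for LoRA. Plugging $\eta = 1/(2\beta s^2\sigma_{\max}^2)$ into your two bounds gives only a decrease of $\frac{\alpha}{4\beta}(2\kappa^{-2}-1)\|\mG\|_F^2$. This is strictly below $\frac{\alpha}{4\beta\kappa^4}\|\mG\|_F^2$ whenever $\kappa>1$, because $(\kappa^{-2}-1)^2>0$, and it is negative once $\kappa^2>2$. So "via the same two bounds" is false. With those bounds the step that works is $\eta = \sigma_{\min}^2/(2\beta s^2\sigma_{\max}^4)$, which is exactly the paper's choice.

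Your step can be rescued with a different penalty bound. From $\|\mG\mV_\mA\boldsymbol{\Sigma}_\mA^2\|_F \le \sigma_{\max}\|\mG\mV_\mA\boldsymbol{\Sigma}_\mA\|_F$ and its $\mB$ analogue, you get $\|\Tilde{\mG}\|_F^2 \le 2s^2\sigma_{\max}^2\langle \mG,\Tilde{\mG}\rangle_F$. Using this, your step in fact yields the stronger factor $\frac{\alpha}{4\beta\kappa^2}$. The SDS-LoRA step $1/(2\beta s^2)$ is fine as stated.
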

See Section~\ref{sec:convergence_proof} for the proof and details.
Theorem~\ref{theorem:convergence} suggests that as the singular values of $\mA$ and $\mB$ become more skewed, the condition number $\kappa$ increases, thereby degrading the convergence rate of LoRA.
In contrast, since singular values do not influence the backward pass in \methodname, its convergence rate is independent of $\kappa$.
Our theoretical analysis further highlights the significance of addressing anisotropic gradient scaling in LoRA.

\section{Experiments}

\begin{table*}[t!]
\caption{\textbf{Comparison with related methods.}
We reproduce the results of existing methods.
We set the rank to 8.
}
\vspace{-0.5em}
\label{tab:gradientbased_r8}
\centering
\resizebox{0.8\linewidth}{!}{
\begin{tabular}{lcccccc}
     \toprule
     \multirow{2}{*}{Method} & \multicolumn{3}{c}{Gemma-2B} & \multicolumn{3}{c}{LLaMA3-8B}\\
     \cmidrule(lr){2-4} \cmidrule(lr){5-7}
     & MATH & GSM8K & HumanEval & MATH & GSM8K & HumanEval\\
     \toprule
     \toprule
     LoRA-GA & 17.02{\scriptsize$\pm$0.28} & 48.60{\scriptsize$\pm$0.74} & 30.69{\scriptsize$\pm$0.76} & 24.74{\scriptsize$\pm$0.23} & 75.21{\scriptsize$\pm$1.01} & 45.12{\scriptsize$\pm$1.22}\\
     LoRA-Pro & 16.52 {\scriptsize$\pm$0.14} & 44.58{\scriptsize$\pm$0.55} & 28.66{\scriptsize$\pm$0.99} & 23.90{\scriptsize$\pm$0.44} & 72.93{\scriptsize$\pm$0.17} & 40.24{\scriptsize$\pm$0.99} \\
     rsLoRA + ScaledAdamW & 16.64{\scriptsize$\pm$0.31} & 45.56{\scriptsize$\pm$0.53} & 26.83{\scriptsize$\pm$0.99} & 24.00{\scriptsize$\pm$0.17} & 73.39{\scriptsize$\pm$0.20} & 43.29{\scriptsize$\pm$1.15} \\
     AltLoRA & 16.62{\scriptsize$\pm$0.28} & 45.34{\scriptsize$\pm$0.81} & 28.66{\scriptsize$\pm$0.91} & 23.66{\scriptsize$\pm$0.68} & 73.69{\scriptsize$\pm$0.23} & 42.07{\scriptsize$\pm$0.82} \\
     \midrule
     \cellcolor{orange!25}\textbf{\methodname} & 
    \cellcolor{orange!25}\textbf{17.62}{\scriptsize $\pm$0.66}&\cellcolor{orange!25}\textbf{52.01}{\scriptsize $\pm$0.72}&\cellcolor{orange!25}\textbf{31.24}{\scriptsize $\pm$1.43}&
    \cellcolor{orange!25}\textbf{25.81}{\scriptsize $\pm$0.53}&\cellcolor{orange!25}\textbf{76.42}{\scriptsize $\pm$0.90}&\cellcolor{orange!25}\textbf{46.95}{\scriptsize $\pm$1.56}\\
     \bottomrule
\end{tabular}}
\vspace{-1em}
\end{table*}

\begin{figure}[t!]
    \centering
    \begin{subfigure}[b]{0.48\textwidth}
        \centering
        \includegraphics[width=0.49\linewidth]{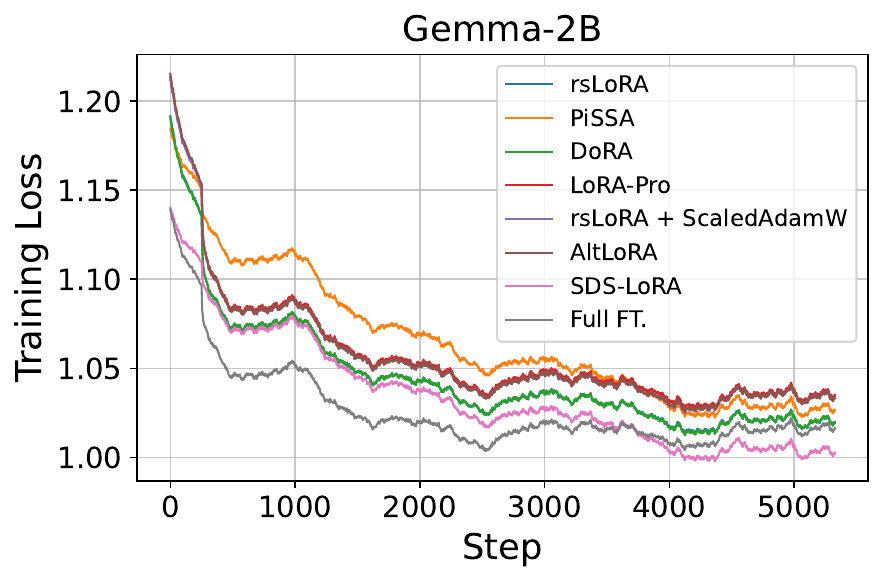}
        \hfill
        \includegraphics[width=0.49\linewidth]{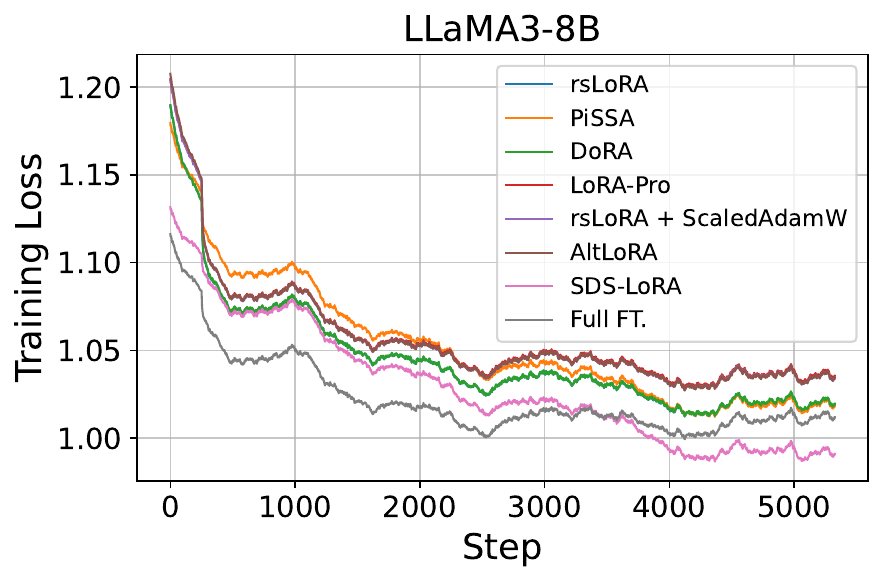}
        \caption{Training loss curve}
        \label{fig:loss_curve}
    \end{subfigure}
    \hspace{0.5em}
    \begin{subfigure}[b]{0.48\textwidth}
        \centering
        \includegraphics[width=0.49\linewidth]{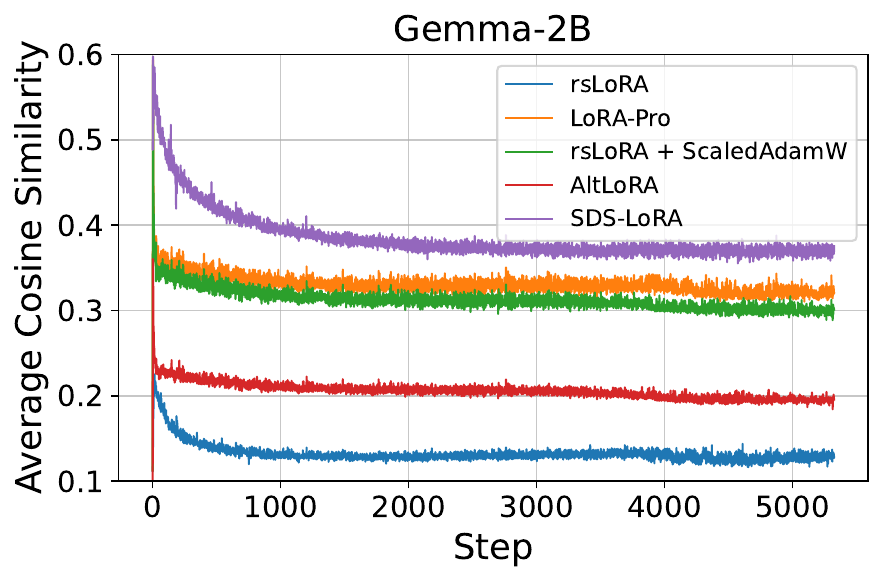}
        \hfill
        \includegraphics[width=0.49\linewidth]{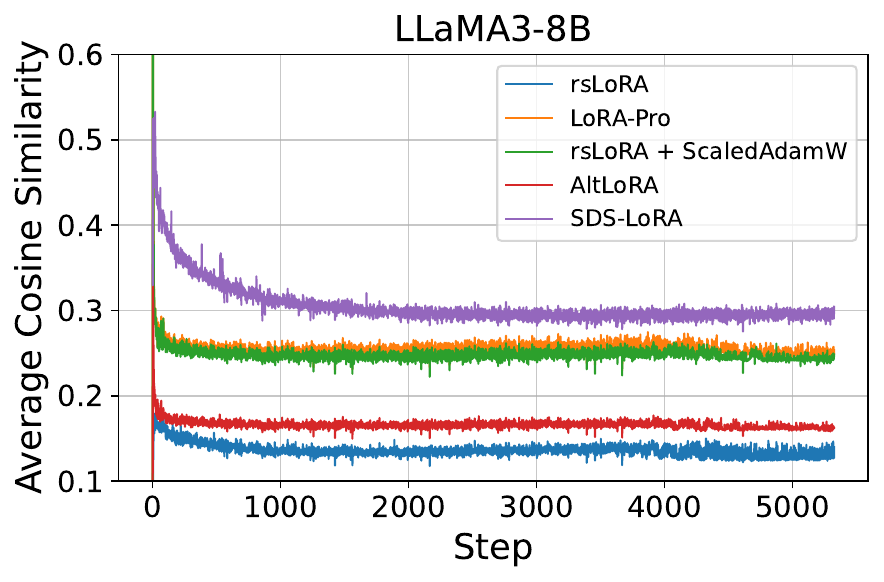}
        \caption{Alignment with $\mG$ and $\Tilde{\mG}$}
        \label{fig:cossim}
    \end{subfigure}
    \caption{\textbf{Comparison of loss convergence and gradient approximation quality.}
    (a) We smooth the loss curve to enhance visual clarity.
    (b) We report the average cosine similarity between $\mG$ and $\Tilde{\mG}$ across all layers.
    Models are fine-tuned using the Commonsense-170K dataset with rank 32.
    }
    \label{fig:convergence_alignment}
\end{figure}

\subsection{Experimental Setup}\label{sec:experimental_setup}
\paragraph{Models and Datasets.}
For commonsense reasoning tasks, we fine-tune on the Commonsense-170K dataset~\citep{hu2023llm} and evaluate on eight standard benchmarks including BoolQ~\citep{clark2019boolq}, PIQA~\citep{Bisk2020piqa}, SIQA~\citep{sap2019socialqa}, HellaSwag~\citep{zellers-etal-2019-hellaswag}, WinoGrande~\citep{Sakaguchi2020winogrande}, ARC-c/e~\citep{clark2018arc}, and OBQA~\citep{mihaylov2018obqa}.
For natural language generation tasks, we select 100K examples from the MetaMathQA~\citep{Yu2024metamath} and Code-Feedback~\citep{zheng2024codefeedback} datasets for fine-tuning.
To assess performance, we use MATH~\citep{Hendrycks2021math} and GSM8K~\citep{cobbe2021gsm8k} benchmarks for the models fine-tuned on MetaMathQA, and HumanEval~\citep{humaneval} benchmark for the models fine-tuned on Code-Feedback.
For the details on image classification tasks, see Section~\ref{sec:vision}.
\paragraph{Implementation Details.}
For optimization, we use the AdamW optimizer~\citep{Loshchilov2019adamw} with standard settings.
We set $\beta_1 = 0.9$, $\beta_2 = 0.999$, and apply zero weight decay to the optimizer.
For SDS-LoRA, we set the warm-up iteration $T$ to 10 and the update scheduling hyperparameter $k$ to 5.
For the scaling factor $s$, we fix it to 2 for LoRA and $\frac{\alpha}{\sqrt{r}}$ for the other methods, following rsLoRA~\citep{kalajdzievski2023rslora}, where $\alpha$ is set to 4.
We compare our method with the original LoRA and its recent variants, including rsLoRA~\citep{kalajdzievski2023rslora}, LoRA+~\citep{Hayou2024loraplus} with scaling ratio 4, PiSSA~\citep{meng2024pissa}, and DoRA~\citep{liu2024dora}.
We reproduce the results of these methods under the same setting.
We report the mean and standard deviation over three trials.
All experiments are conducted on NVIDIA H200 GPUs.
More details are provided in Table~\ref{tab:additional_detail}.

\subsection{Commonsense Reasoning}
Table~\ref{tab:commonsense} shows the experimental results on commonsense reasoning tasks.
\methodname~outperforms LoRA and its variants in nearly all configurations, leading to the highest adaptation performance on average across all models and ranks.
Specifically, \methodname~yields improvements of approximately 9 and 2 percentage points over LoRA on Gemma-2B and LLaMA3-8B, respectively, and surpasses recent variants including PiSSA and DoRA by 3–5 and 0.5–1 percentage points on Gemma-2B and LLaMA3-8B, respectively.
In particular, Gemma-2B with \methodname~shows a substantial improvement over the others on the challenging HellaSwag benchmark, achieving about a 40 percentage point improvement over LoRA.
Compared to the full fine-tuning, \methodname~with rank-32 achieves minimal performance gaps, demonstrating the effectiveness of \methodname.

\subsection{Natural Language Generation}
Table~\ref{tab:nlg} shows the experimental results on the natural language generation tasks.
\methodname~achieves the best adaptation performance on all benchmarks across all models and ranks.
Specifically, \methodname~achieves improvements of approximately 1–2, 1–10, and 2–6 percentage points over existing methods on MATH, GSM8K, and HumanEval, respectively.
Consistent with the commonsense reasoning results, \methodname~demonstrates substantial performance gains with Gemma-2B, especially in challenging adaptation contexts.
Furthermore, \methodname~with rank-32 achieves results comparable to or exceeding those of full fine-tuning.
These results demonstrate that \methodname~significantly improves LoRA's performance.

\subsection{Image Classification}
In addition to natural language processing tasks, we also provide results for image classification tasks.
The results in Table~\ref{tab:vision_main} demonstrate that our method improves the performance of LoRA substantially, surpassing that of the other methods by significant margins.
In particular, we observe that \methodname~achieves about 2-3 percentage points higher accuracy on the Cars dataset, resulting in the smallest performance gap compared to the full fine-tuning case.
These results demonstrate the effectiveness of \methodname~in the vision domain.
See Section~\ref{sec:vision} for experimental details and more results.

\subsection{Comparison with Related Methods}\label{sec:comparison_gradient}
We compare \methodname~with existing methods that view LoRA from the low-rank gradient approximation perspective, including LoRA-GA~\citep{wang2024loraga}, LoRA-Pro~\citep{wang2025lorapro}, ScaledAdamW~\citep{riemannian2024zhang}, and AltLoRA~\citep{altlora2025yu}.
We reproduce the results of these methods with the same experimental setup as detailed in Table~\ref{tab:additional_detail}.
For a fair comparison, we implement LoRA-Pro without tracking the momentum of full fine-tuning gradients.
Table~\ref{tab:gradientbased_r8} shows that \methodname~outperforms existing methods by a substantial margin.
We attribute these results to the fact that while both \methodname~and the previous methods optimize the update rule in the effective weight space, the previous ones fail to account for anisotropic scaling in the LoRA matrices' gradient, leaving them vulnerable to its adverse effects.
We further compare \methodname~with LoRA-RITE~\citep{yen2025lora} and RefLoRA~\citep{zhang2025reflora} in Section~\ref{sec:precond_discussion}, where we also provide further discussion.

\subsection{Loss Convergence and Gradient Approximation Quality}\label{sec:loss_approximation}
Figure~\ref{fig:loss_curve} shows the training loss curves for full fine-tuning, \methodname, and existing methods.
The results demonstrate that \methodname~achieves significantly better loss convergence than both standard LoRA (i.e., rsLoRA) and related methods.
Notably, our method achieves a lower final training loss than full fine-tuning.
Figure~\ref{fig:cossim} illustrates the average cosine similarity between the full gradient $\mG$ and its low-rank approximation $\Tilde{\mG}$ across all layers.
We observe that \methodname~exhibits higher alignment throughout training than the existing methods.
As discussed in Section~\ref{sec:precond_discussion}, the performance gap between SDS-LoRA and the compared methods (i.e., preconditioning-based methods) arises because our method overcomes anisotropic gradient scaling in the LoRA matrices' gradients, whereas the preconditioning-based methods do not.
This highlights the significance of addressing anisotropic gradient scaling.
Overall, these results support our theoretical analysis of both the quality of the gradient approximation (Theorem~\ref{theorem:approx}) and the convergence rate (Theorem~\ref{theorem:convergence}).

\subsection{Ablation Studies}\label{sec:ablation}
To justify the design choice of SDS-LoRA $\Delta \mW_\text{\methodname}=s(\mQ_\mB\mA + \mB \mQ_\mA^\top)$, we compare it against the LoRA formulation and a variant where weight updates are defined as $\Delta \mW = s\mQ_\mB\mQ_\mA^\top$, with both $\mQ_\mA$ and $\mQ_\mB$ being learnable.
Specifically, for this variant, we compute the QR decompositions of $\mQ_\mA$ and $\mQ_\mB$ after their optimization step and replace them with new ones to ensure that their columns remain orthonormal.
Although this approach prevents anisotropic gradient scaling, its representational capacity is severely limited because the non-zero singular values of the weight updates are forced to be uniform.
In contrast, \methodname~eliminates anisotropic gradient scaling without compromising the representational capacity of weight updates.
As shown in Table~\ref{tab:ablation}, \methodname~significantly outperforms both standard LoRA and the variant, each of which satisfies only one of the two desirable properties.
These results suggest that addressing anisotropic gradient scaling in LoRA is not straightforward due to the dual role of singular values in the forward and backward passes.
Additional ablation studies, on orthogonalization-based variants (Section~\ref{sec:orthogonal_lora}) and on the warm-up stage (Section~\ref{sec:warmup_ablation}), are provided in the appendix.

\subsection{Training Overhead}\label{sec:overhead}
To demonstrate the efficiency of \methodname, we present the time and memory overhead of \methodname.
Table~\ref{tab:overhead} shows that \methodname~introduces approximately 5\% overhead in time and less than 1\% overhead in GPU-memory.
These results further highlight the effectiveness of \methodname.

\begin{table*}[t!]
\caption{\textbf{Ablation studies.}
`Full RC of $\Delta \mW$' indicates the full representational capacity of $\Delta \mW$.
Models are fine-tuned with rank 32.
}
\vspace{-0.5em}
\centering
\resizebox{0.95\linewidth}{!}{
\begin{tabular}{lcccccccc}
     \toprule
     \multirow{2}{*}{Formulation of $\Delta \mW$} & \multirow{2}{*}{Anisotropic Gradient Scaling} & \multirow{2}{*}{Full RC of $\Delta \mW$} & \multicolumn{3}{c}{Gemma-2B} & \multicolumn{3}{c}{LLaMA3-8B}\\
     \cmidrule(lr){4-6} \cmidrule(lr){7-9}
     &&& MATH & GSM8K & HumanEval & MATH & GSM8K & HumanEval\\
     \toprule
     \toprule
     Full FT. & - & - & 19.17{\scriptsize$\pm$0.22} & 56.23{\scriptsize$\pm$0.23} & 33.35{\scriptsize$\pm$0.65} & 26.47{\scriptsize$\pm$0.41} & 77.04{\scriptsize$\pm$0.18} & 49.11{\scriptsize$\pm$0.44}\\
     \midrule
     $s\mB\mA$ &\textcolor{red}{\ding{52}}&\textcolor{green}{\ding{52}}& 18.09{\scriptsize$\pm$0.04} & 51.93{\scriptsize$\pm$0.53}& 31.10{\scriptsize$\pm$1.32} &25.97{\scriptsize$\pm$0.17} & 76.28{\scriptsize$\pm$0.96} & 44.36{\scriptsize$\pm$1.39}\\
     $s\mQ_\mB\mQ_\mA^\top$ (trainable $\mQ_\mA$ and $\mQ_\mB$)&\textcolor{green}{\ding{56}}& \textcolor{red}{\ding{56}}& 17.22{\scriptsize$\pm$0.24} & 51.18{\scriptsize$\pm$0.29}& 31.30{\scriptsize$\pm$0.99} &24.14{\scriptsize$\pm$0.27} & 75.68{\scriptsize$\pm$0.35} & 45.04{\scriptsize$\pm$1.02}\\
     \midrule
     \cellcolor{orange!25}$s(\mQ_\mB \mA + \mB \mQ_\mA^\top)$ (\textbf{Ours)} &\cellcolor{orange!25} \textcolor{green}{\ding{56}} &\cellcolor{orange!25}\textcolor{green}{\ding{52}}  &\cellcolor{orange!25}\textbf{18.31}{\scriptsize $\pm$0.54}&\cellcolor{orange!25}\textbf{55.21}{\scriptsize $\pm$0.81}&\cellcolor{orange!25}\textbf{34.01}{\scriptsize $\pm$1.50}& 
    \cellcolor{orange!25}\textbf{26.36}{\scriptsize $\pm$0.73}&\cellcolor{orange!25}\textbf{77.13}{\scriptsize $\pm$0.84}&\cellcolor{orange!25}\textbf{47.61}{\scriptsize $\pm$1.67} \\
     \bottomrule
\end{tabular}}
\label{tab:ablation}
\end{table*}

\begin{table}[t!]
\caption{\textbf{Analysis of training overhead.}
We train the LLaMA3-8B model on a single NVIDIA H200 GPU for one epoch.
For SDS-LoRA, we report the training time for the warm-up (first term) and main training phases (second term) separately.
}
\label{tab:overhead}
\centering
\resizebox{0.75\linewidth}{!}{
\begin{tabular}{lcccc}
     \toprule
     \multirow{2}{*}{Method} & \multicolumn{2}{c}{MetaMATHQA} & \multicolumn{2}{c}{Code-Feedback}\\
     \cmidrule(lr){2-3} \cmidrule(lr){4-5}
     & Training Time (hours) & GPU-Memory (GB) & Training Time (hours) & GPU-Memory (GB)\\
     \toprule
     \toprule
     LoRA & 1.61 & 128.54 & 1.72 & 128.80\\
     \textbf{\methodname} & 0.01 + 1.67 (\textbf{+4.35\%}) & 129.16 (\textbf{+0.48\%}) & 0.01 + 1.80 (\textbf{+5.23\%}) & 129.77 (\textbf{+0.75\%})\\
     \bottomrule
\end{tabular}}
\vspace{-1em}
\end{table}

\section{Conclusion}
In this paper, we provide a novel perspective on the limitations of Low-Rank Adaptation (LoRA).
Our analysis reveals that LoRA suffers from anisotropic gradient scaling induced by singular values, which distorts the rich information contained in the full fine-tuning gradient, thereby exacerbating the gap to full fine-tuning.
To overcome these limitations, we propose a new parameterization for low-rank weight updates that structurally decouples the singular values from the backward pass. Our convergence analysis confirms that the proposed method eliminates the dependence of the convergence rate on the condition number of the LoRA matrices. Experimental results across various benchmarks and pretrained models show that our method significantly accelerates loss convergence and narrows the gap to full fine-tuning, achieving superior adaptation performance.

\paragraph{Limitations.}
In the proposed method, the effective gradient with respect to the full weight space deviates slightly from the optimal gradient, as represented by $\mE$ in Equation~\ref{eq:eff_grad_ours}.
This deviation arises from the updates to $\mQ_\mA$ and $\mQ_\mB$.
Although we demonstrate empirically in Section~\ref{sec:subspace_change} that the contribution of this term is small (see also Table~\ref{tab:residual_e}), we do not provide a formal justification.
For theoretical clarity, we assume this gap is negligible in the convergence analysis (Assumption~\ref{assumption:slow_subspace}).
Also, the convergence analysis assumes idealized SGD training scenarios, which differ from the AdamW setting used in our experiments.

\paragraph{Broader Impacts.}
This research enhances the performance of LoRA, enabling low-rank adaptation to achieve results comparable to full fine-tuning.
By improving the efficacy of parameter-efficient fine-tuning, our work allows high-quality model customization even in resource-constrained settings.
This contributes to the development of more accurate and reliable AI systems across various domains.
While improved performance could be utilized for unintended purposes, its primary impact lies in fostering robust and high-performing open-source models that do not require massive computational overhead.


\bibliographystyle{plain}
\bibliography{neurips_2026}


\newpage
\appendix

The contents of the Appendix are as follows:
\begin{itemize}[itemsep=5pt]
    \item In Section~\ref{sec:approx_proof}, we provide the proof of Theorem~\ref{theorem:approx}.
    \item In Section~\ref{sec:convergence_proof}, we provide the details and proof of Theorem~\ref{theorem:convergence}.
    \item In Section~\ref{sec:interval_sensitivity}, we provide ablation studies on the update scheduling of $\mQ_\mA$ and $\mQ_\mB$ in our method.
    \item In Section~\ref{sec:warmup_ablation}, we provide ablation studies on the warm-up stage and on the update of $\mQ_\mA$ and $\mQ_\mB$.
    \item In Section~\ref{sec:subspace_change}, we provide empirical evidence showing that the subspace of the LoRA matrices changes slowly in \methodname.
    \item In Section~\ref{sec:stable_rank}, we show that the singular values of the LoRA matrices exhibit a highly skewed spectrum.
    \item In Section~\ref{sec:vision}, we provide the details and experimental results on image classification tasks.
    \item In Section~\ref{sec:precond_discussion}, we provide a detailed discussion on the difference between SDS-LoRA and preconditioning-based methods.
    \item In Section~\ref{sec:orthogonal_lora}, we compare \methodname~with orthogonalization-based LoRA variants.
    \item In Table~\ref{tab:additional_detail}, we provide additional implementation details.
\end{itemize}

\section{Proof of
Theorem~\ref{theorem:approx}}\label{sec:approx_proof}

\begin{proof}
Recall $\mM_\mA = \mV_\mA^\top \mG^\top \mG \mV_\mA$, $\mM_\mB = \mU_\mB^\top \mG \mG^\top \mU_\mB$, $J = \Tr(\mM_\mA \boldsymbol{\Sigma}_\mA^2) + \Tr(\mM_\mB \boldsymbol{\Sigma}_\mB^2)$, and $J^\star(\mG) = E_\mA \lambda_{\max}(\mM_\mA) + E_\mB \lambda_{\max}(\mM_\mB)$, as defined in Section~\ref{sec:lora_limitation}.
We first verify that $\langle \mG, \Tilde{\mG} \rangle_F = s^2 J$.
Using $\langle \mX, \mY \rangle_F = \Tr(\mX^\top \mY)$ and the cyclic property of the trace, the two terms of $\Tilde{\mG}$ in Equation~\ref{eq:eff_grad} give
\begin{equation}\label{eq:approx_trace}
    \begin{split}
        \langle \mG, \mG \mV_\mA \boldsymbol{\Sigma}_\mA^2 \mV_\mA^\top \rangle_F &= \Tr(\mG \mV_\mA \boldsymbol{\Sigma}_\mA^2 \mV_\mA^\top \mG^\top) = \Tr(\mV_\mA^\top \mG^\top \mG \mV_\mA \boldsymbol{\Sigma}_\mA^2) = \Tr(\mM_\mA \boldsymbol{\Sigma}_\mA^2),\\
        \langle \mG, \mU_\mB \boldsymbol{\Sigma}_\mB^2 \mU_\mB^\top \mG \rangle_F &= \Tr(\mG^\top \mU_\mB \boldsymbol{\Sigma}_\mB^2 \mU_\mB^\top \mG) = \Tr(\mU_\mB^\top \mG \mG^\top \mU_\mB \boldsymbol{\Sigma}_\mB^2) = \Tr(\mM_\mB \boldsymbol{\Sigma}_\mB^2).
    \end{split}
\end{equation}
Note that $\mM_\mA$ and $\mM_\mB$ are positive semi-definite by construction, and that the energy constraints are constraints on the spectra, since $\Tr(\boldsymbol{\Sigma}_\mA^2) = ||\mA||_F^2 = E_\mA$ and $\Tr(\boldsymbol{\Sigma}_\mB^2) = ||\mB||_F^2 = E_\mB$.
Throughout, the minimum over $\mG$ is taken over all $\mG$ with $J^\star(\mG) > 0$, that is, over all $\mG$ that has a non-zero component in the given subspaces; otherwise both $J$ and $J^\star(\mG)$ vanish and their ratio is undefined.

\textbf{Step 1: the uniform configuration attains a worst-case ratio of $1/r$.}
For any positive semi-definite $\mX \in \mathbb{R}^{r \times r}$, its trace equals the sum of its $r$ nonnegative eigenvalues, so $\Tr(\mX) \geq \lambda_{\max}(\mX)$, with equality if and only if $\text{rank}(\mX) \leq 1$.
At $\boldsymbol{\Sigma}_\mA^2 = \frac{E_\mA}{r} \mI_r$ and $\boldsymbol{\Sigma}_\mB^2 = \frac{E_\mB}{r} \mI_r$, this gives, for every $\mG$ with $J^\star(\mG) > 0$,
\begin{equation*}
    J = \frac{E_\mA}{r} \Tr(\mM_\mA) + \frac{E_\mB}{r} \Tr(\mM_\mB) \geq \frac{E_\mA}{r} \lambda_{\max}(\mM_\mA) + \frac{E_\mB}{r} \lambda_{\max}(\mM_\mB) = \frac{1}{r} J^\star(\mG),
\end{equation*}
so that $J/J^\star(\mG) \geq \frac{1}{r}$ for every such $\mG$, and hence $\min_{\mG} J/J^\star(\mG) \geq \frac{1}{r}$ at the uniform configuration.
This bound is attained.
Let $\hat{\vp}$ be any unit vector in the row space of $\mA$ and $\hat{\vu}$ any unit vector in the column space of $\mB$, and set $\mG = \hat{\vu}\hat{\vp}^\top$.
Then $\mG^\top \mG = \hat{\vp}\hat{\vp}^\top$ and $\mG \mG^\top = \hat{\vu}\hat{\vu}^\top$, so that $\mM_\mA = (\mV_\mA^\top \hat{\vp})(\mV_\mA^\top \hat{\vp})^\top$ and $\mM_\mB = (\mU_\mB^\top \hat{\vu})(\mU_\mB^\top \hat{\vu})^\top$ have rank one for any such $\hat{\vp}$ and $\hat{\vu}$.
Consequently $\Tr(\mM_\mA) = \lambda_{\max}(\mM_\mA)$ and $\Tr(\mM_\mB) = \lambda_{\max}(\mM_\mB)$, and the inequality above holds with equality.
Hence $\min_{\mG} J/J^\star(\mG) = \frac{1}{r}$ exactly at the uniform configuration.

\textbf{Step 2: no configuration attains a worst-case ratio exceeding $1/r$, and only the uniform configuration attains it.}
Fix any $\boldsymbol{\Sigma}_\mA^2$ and $\boldsymbol{\Sigma}_\mB^2$ satisfying $\Tr(\boldsymbol{\Sigma}_\mA^2) = E_\mA$ and $\Tr(\boldsymbol{\Sigma}_\mB^2) = E_\mB$.
Let $i_{\min}$ and $j_{\min}$ index their smallest diagonal entries, let $\hat{\vp}$ be the $i_{\min}$-th column of $\mV_\mA$ and $\hat{\vu}$ the $j_{\min}$-th column of $\mU_\mB$, and consider the adversarial gradient $\mG = \hat{\vu}\hat{\vp}^\top$.
Since $\mV_\mA^\top \hat{\vp} = \ve_{i_{\min}}$ and $\mU_\mB^\top \hat{\vu} = \ve_{j_{\min}}$, the columns of $\mV_\mA$ and $\mU_\mB$ being orthonormal, the computation in Step 1 gives $\mM_\mA = \ve_{i_{\min}} \ve_{i_{\min}}^\top$ and $\mM_\mB = \ve_{j_{\min}} \ve_{j_{\min}}^\top$, so that $\lambda_{\max}(\mM_\mA) = \lambda_{\max}(\mM_\mB) = 1$ and $J^\star(\mG) = E_\mA + E_\mB$, while
\begin{equation*}
    J = \Tr(\ve_{i_{\min}} \ve_{i_{\min}}^\top \boldsymbol{\Sigma}_\mA^2) + \Tr(\ve_{j_{\min}} \ve_{j_{\min}}^\top \boldsymbol{\Sigma}_\mB^2) = \lambda_{\min}(\boldsymbol{\Sigma}_\mA^2) + \lambda_{\min}(\boldsymbol{\Sigma}_\mB^2).
\end{equation*}
Thus, for this particular $\mG$,
\begin{equation}\label{eq:approx_worstcase}
    \frac{J}{J^\star(\mG)} = \frac{\lambda_{\min}(\boldsymbol{\Sigma}_\mA^2) + \lambda_{\min}(\boldsymbol{\Sigma}_\mB^2)}{E_\mA + E_\mB}.
\end{equation}
The minimum eigenvalue of an $r \times r$ positive semi-definite matrix never exceeds its average eigenvalue, so $\lambda_{\min}(\boldsymbol{\Sigma}_\mA^2) \leq \frac{E_\mA}{r}$ and $\lambda_{\min}(\boldsymbol{\Sigma}_\mB^2) \leq \frac{E_\mB}{r}$, each with equality if and only if the corresponding spectrum is uniform.
Hence $J/J^\star(\mG) \leq \frac{1}{r}$ at this $\mG$, with strict inequality unless both $\boldsymbol{\Sigma}_\mA^2$ and $\boldsymbol{\Sigma}_\mB^2$ are uniform.
Since $\min_{\mG'} J/J^\star(\mG') \leq J/J^\star(\mG)$ for this particular $\mG$, we conclude that $\min_{\mG'} J/J^\star(\mG') \leq \frac{1}{r}$ for every configuration, strictly unless both $\boldsymbol{\Sigma}_\mA^2$ and $\boldsymbol{\Sigma}_\mB^2$ are uniform.

Combining Steps 1 and 2, $\max_{\boldsymbol{\Sigma}_\mA^2, \boldsymbol{\Sigma}_\mB^2} \min_{\mG} J/J^\star(\mG) = \frac{1}{r}$, uniquely attained at $\boldsymbol{\Sigma}_\mA = \sqrt{E_\mA/r} \mI_r$ and $\boldsymbol{\Sigma}_\mB = \sqrt{E_\mB/r} \mI_r$.
\end{proof}

\section{Details and Proof of Theorem~\ref{theorem:convergence}}\label{sec:convergence_proof}
In this section, we provide details and proof of Theorem~\ref{theorem:convergence}.
We first formalize the assumptions and definitions for the theorem.

\begin{assumption}\label{assumption:beta_smooth}
    (\textbf{$\beta$-smoothness})
    The loss function $\mathcal{L}$ is $\beta$-smooth with respect to $\mW_{\normalfont\text{eff}}$:
    \begin{equation}\label{eq:beta_smooth}
        \gL(\mW_{\normalfont\text{eff}}+\Delta\mW_{\normalfont\text{eff}}) \leq \gL(\mW_{\normalfont\text{eff}}) + \underbrace{\langle \mG, \Delta\mW_{\normalfont\text{eff}} \rangle_F}_{\text{Descent}} + \underbrace{\frac{\beta}{2} ||\Delta\mW_{\normalfont\text{eff}}||_F^2}_{\text{Penalty}}.
    \end{equation}
\end{assumption}

\begin{assumption}\label{assumption:pl_condition}
    (\textbf{Local PL condition})
    The loss function $\mathcal{L}$ satisfies the $\mu$-Polyak–Łojasiewicz (PL) condition near the pretrained weight $\mW_0$: $||\nabla_{\mW_{\normalfont\text{eff}}} \mathcal{L}||_F^2 \geq 2\mu(\mathcal{L}(\mW_{\normalfont\text{eff}})-\mathcal{L}^*)$ for $\|\mW_{\text{eff}}-\mW_0\|_F \leq \epsilon$, where $\gL^*$ denotes the minimal loss.
    We assume that $\|\mW_{\text{eff}}-\mW_0\|_F \leq \epsilon$ remains satisfied throughout training.
\end{assumption}

\begin{assumption}\label{assumption:slow_subspace}
    (\textbf{Slow subspace change in \methodname})
    In \methodname, the change in $\mQ_\mA$ and $\mQ_\mB$ is negligible after update of $\mA$ and $\mB$ (i.e, $\Delta \mQ_\mA \approx 0$ and $\Delta \mQ_\mB \approx 0$).
\end{assumption}
See Section~\ref{sec:subspace_change} for empirical validation of Assumption~\ref{assumption:slow_subspace}.

\begin{definition}
    Let $\alpha \in [0,2]$ represent the alignment between $\mG$ and the low-rank subspaces $\mP_{\gR(\mA)}$ and $\mP_{\gC(\mB)}$ such that
    \begin{equation}\label{eq:alpha_alignment}
        ||\mG \mP_{\gR(\mA)}||_F^2+||\mP_{\gC(\mB)}\mG||_F^2 = \alpha ||\mG||_F^2.
    \end{equation}
\end{definition}

In Equation~\ref{eq:beta_smooth}, there are two terms to determine the loss decrease bound: the descent term and the penalty term.
We first derive lemmas on these terms.

\begin{lemma}\label{lemma:descent}
    (\textbf{Descent term})
    For LoRA, the descent term in Equation~\ref{eq:beta_smooth} is bounded as follows:
    \begin{equation}
        \langle \mG, \Delta \mW_{\normalfont\text{eff}} \rangle_F \leq -\alpha \eta s^2 \sigma_{\min}^2(\mA,\mB) ||\mG||_F^2 \:\:\:\:\:\:\text{(in LoRA)},
    \end{equation}
    where $\sigma_{\min}(\mA,\mB) = \min(\sigma_{\min}(\mA), \sigma_{\min}(\mB))$.
    For \methodname, the descent term is given by
    \begin{equation}
        \langle \mG, \Delta \mW_{\normalfont\text{eff}} \rangle_F = -\alpha \eta s^2 ||\mG||_F^2 \:\:\:\:\:\:\text{(in SDS-LoRA)}.
    \end{equation}
\end{lemma}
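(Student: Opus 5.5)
We compute $\Delta\mW_{\text{eff}}$ explicitly under one gradient step of size $\eta$ and take its Frobenius inner product with $\mG$. The decomposition $\langle\mG,\Tilde{\mG}\rangle_F = s^2 J$ from the proof of Theorem~\ref{theorem:approx} will carry most of the work.

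\textbf{LoRA.} Following Section~\ref{sec:lora_limitation}, to first order $\Delta\mW_{\text{eff}} = -\eta\Tilde{\mG}$, with $\Tilde{\mG}$ given by Equation~\ref{eq:eff_grad}. Using Equation~\ref{eq:approx_trace}, we get
\begin{equation*}
\langle \mG,\Delta\mW_{\text{eff}}\rangle_F = -\eta s^2\bigl(\Tr(\mM_\mA\boldsymbol{\Sigma}_\mA^2)+\Tr(\mM_\mB\boldsymbol{\Sigma}_\mB^2)\bigr).
\end{equation*}
Because $\mM_\mA$ is positive semi-definite and $\boldsymbol{\Sigma}_\mA^2 \succeq \sigma_{\min}^2(\mA)\mI_r$, we have $\Tr(\mM_\mA\boldsymbol{\Sigma}_\mA^2) = \sum_i (\mM_\mA)_{ii}\sigma_i^2(\mA) \ge \sigma_{\min}^2(\mA)\Tr(\mM_\mA)$. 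The same argument bounds the $\mB$ term. Two identities then finish the estimate:
\begin{equation*}
\Tr(\mM_\mA) = \|\mG\mV_\mA\|_F^2 = \|\mG\mP_{\gR(\mA)}\|_F^2, \qquad \Tr(\mM_\mB) = \|\mP_{\gC(\mB)}\mG\|_F^2.
\end{equation*}
Replacing both minima by the common lower bound $\sigma_{\min}(\mA,\mB)$ and applying the definition of $\alpha$ (Equation~\ref{eq:alpha_alignment}) gives
\begin{equation*}
\Tr(\mM_\mA\boldsymbol{\Sigma}_\mA^2)+\Tr(\mM_\mB\boldsymbol{\Sigma}_\mB^2) \ge \sigma_{\min}^2(\mA,\mB)\,\alpha\|\mG\|_F^2.
\end{equation*}
Multiplying by $-\eta s^2$ yields the stated inequality.

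\textbf{\methodname.} Here we use Equation~\ref{eq:eff_grad_ours} together with Assumption~\ref{assumption:slow_subspace}, so that $\mE=0$ and $\Delta\mW_{\text{eff}} = -\eta s^2(\mG\mQ_\mA\mQ_\mA^\top+\mQ_\mB\mQ_\mB^\top\mG)$. Since $\mQ_\mA\mQ_\mA^\top=\mP_{\gR(\mA)}$ is a symmetric idempotent, $\langle\mG,\mG\mP_{\gR(\mA)}\rangle_F = \Tr(\mG^\top\mG\mP_{\gR(\mA)}) = \|\mG\mP_{\gR(\mA)}\|_F^2$. Likewise $\langle\mG,\mP_{\gC(\mB)}\mG\rangle_F = \|\mP_{\gC(\mB)}\mG\|_F^2$. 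Summing the two and applying Equation~\ref{eq:alpha_alignment} gives exactly $-\alpha\eta s^2\|\mG\|_F^2$, which is an equality.

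\textbf{Main difficulty.} The only delicate points are the first-order treatment of $\Delta\mW_{\text{eff}}$ and the eigenvalue bound. The second-order term $\Delta\mB\Delta\mA$ in LoRA has to be dropped, or accounted for through the same first-order convention used in Equation~\ref{eq:eff_grad}; I would state this convention explicitly, consistent with Section~\ref{sec:lora_limitation}. The trace inequality must use positive semi-definiteness of $\mM_\mA$ and $\mM_\mB$, not merely symmetry, since it relies on their diagonal entries being nonnegative.
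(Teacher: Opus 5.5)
Your proof is correct and follows the paper's argument. You take the first-order $\Delta\mW_{\text{eff}}$, form its inner product with $\mG$, bound it below by $\sigma_{\min}^2$ times the projected energies, and close with the definition of $\alpha$; the \methodname~case is identical to the paper's. The only cosmetic difference is that you write the key LoRA inequality in SVD coordinates, $\Tr(\mM_\mB\boldsymbol{\Sigma}_\mB^2) \ge \sigma_{\min}^2(\mB)\Tr(\mM_\mB)$, while the paper bounds the same quantity $\|\mB^\top\mG\|_F^2$ directly as $\|\mB^\top\mP_{\gC(\mB)}\mG\|_F^2 \ge \sigma_{\min}^2(\mB)\|\mP_{\gC(\mB)}\mG\|_F^2$.
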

\begin{proof}
    \textbf{1. LoRA}

    For LoRA, the gradients with respect to $\mA$ and $\mB$ are given by: $\nabla_\mA \gL = s \mB^\top \mG$ and $\nabla_\mB \gL = s \mG \mA^\top$.
    Consequently, the effective weight update is derived as:
    $\Delta \mW_{\text{eff}} \approx s(\mB \Delta \mA + \Delta \mB \mA) = s(\mB (-\eta s \mB^\top \mG) + (-\eta s \mG \mA^\top) \mA) = -\eta s^2 (\mB \mB^\top \mG + \mG \mA^\top \mA)$.
    Substituting this into the descent term, we obtain
    \begin{equation}
        \begin{split}
            \langle \mG, \Delta \mW_{\normalfont\text{eff}} \rangle_F &= \langle \mG, -\eta s^2 (\mB \mB^\top \mG + \mG \mA^\top \mA) \rangle_F \\
            &= -\eta s^2 ( \langle \mG, \mB \mB^\top \mG \rangle_F + \langle \mG, \mG \mA^\top \mA \rangle_F ) \\
            &= -\eta s^2 ( ||\mB^\top \mG||_F^2 + ||\mG \mA^\top||_F^2 ).
        \end{split}
    \end{equation}
    We relate these terms to the projections onto the subspaces $\gC(\mB)$ and $\gR(\mA)$. For any matrix $\mX$ and $\mB$, $||\mB^\top \mX||_F^2 = ||\mB^\top \mP_{\gC(\mB)} \mX||_F^2 \geq \sigma_{\min}^2(\mB) ||\mP_{\gC(\mB)} \mX||_F^2$.
    Applying this inequality:
    \begin{equation}
        \begin{split}
            ||\mB^\top \mG||_F^2 &\geq \sigma_{\min}^2(\mB) ||\mP_{\gC(\mB)} \mG||_F^2, \\
            ||\mG \mA^\top||_F^2 &\geq \sigma_{\min}^2(\mA) ||\mG \mP_{\gR(\mA)}||_F^2.
        \end{split}
    \end{equation}
    Thus, using $\sigma_{\min}^2(\mA,\mB) = \min(\sigma_{\min}^2(\mA), \sigma_{\min}^2(\mB))$ and Equation~\ref{eq:alpha_alignment}, we obtain
    \begin{equation}
        \begin{split}
            \langle \mG, \Delta \mW_{\normalfont\text{eff}} \rangle_F &\leq -\eta s^2 \sigma_{\min}^2(\mA,\mB) ( ||\mP_{\gC(\mB)} \mG||_F^2 + ||\mG \mP_{\gR(\mA)}||_F^2 ) \\
            &= -\alpha \eta s^2 \sigma_{\min}^2(\mA,\mB) ||\mG||_F^2.
        \end{split}
    \end{equation}
    
    \textbf{2. \methodname}

    Under Assumption~\ref{assumption:slow_subspace}, the effective weight update is given by:
    $\Delta \mW_\text{eff} \approx s(\mQ_\mB \Delta \mA + \Delta \mB \mQ_\mA^\top + \Delta \mQ_\mB \mA + \mB \Delta \mQ_\mA^\top) \approx -\eta s^2 (\mP_{\gC(\mB)} \mG + \mG \mP_{\gR(\mA)})$.
    Substituting this into the inner product:
    \begin{equation}
        \begin{split}
            \langle \mG, \Delta \mW_{\normalfont\text{eff}} \rangle_F & \approx \langle \mG, -\eta s^2 (\mP_{\gC(\mB)} \mG + \mG \mP_{\gR(\mA)}) \rangle_F \\
            &=-\eta s^2 ( \langle \mG, \mP_{\gC(\mB)} \mG \rangle_F + \langle \mG, \mG \mP_{\gR(\mA)} \rangle_F).
        \end{split}
    \end{equation}
    For any matrix $\mX$ and projection $\mP$, $\langle \mX, \mP \mX \rangle_F = \Tr(\mX^\top \mP \mX) = \Tr(\mX^\top \mP^\top \mP \mX) = ||\mP \mX||_F^2$.
    Applying this to our terms:
    \begin{equation}
        \begin{split}
            \langle \mG, \mP_{\gC(\mB)} \mG \rangle_F &= ||\mP_{\gC(\mB)} \mG||_F^2, \\
            \langle \mG, \mG \mP_{\gR(\mA)} \rangle_F &= \langle \mG^\top, \mP_{\gR(\mA)}^\top \mG^\top \rangle_F = ||\mG \mP_{\gR(\mA)}||_F^2.
        \end{split}
    \end{equation}
    Finally, applying Equation~\ref{eq:alpha_alignment}, we obtain
    \begin{equation}
        \begin{split}
            \langle \mG, \Delta \mW_{\normalfont\text{eff}} \rangle_F &= -\eta s^2 (||\mG \mP_{\gR(\mA)}||_F^2+||\mP_{\gC(\mB)}\mG||_F^2)\\
            &= -\alpha \eta s^2 ||\mG||_F^2.
        \end{split}
    \end{equation}
\end{proof}

\begin{lemma}\label{lemma:penalty}
    (\textbf{Penalty term})
    For LoRA, the bound for the penalty term in Equation~\ref{eq:beta_smooth} is given by
    \begin{equation}
        \text{Penalty}_\text{LoRA} \leq \alpha \beta \eta^2 s^4 \sigma_{\normalfont\text{max}}^4(\mA,\mB) ||\mG||_F^2 \:\:\:\:\:\:\text{(in LoRA)}.
    \end{equation}
    For \methodname, the bound for the penalty term is given by
    \begin{equation}
        \text{Penalty}_\text{\methodname} \leq \alpha \beta \eta^2 s^4 ||\mG||_F^2 \:\:\:\:\:\:\text{(in SDS-LoRA)}.
    \end{equation}
    where $\sigma_{\normalfont\text{max}}(\mA,\mB) = \max(\sigma_{\normalfont\text{max}}(\mA),\sigma_{\normalfont\text{max}}(\mB))$.
\end{lemma}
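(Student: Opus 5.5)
The plan is to start from the explicit form of the effective weight update derived in the proof of Lemma~\ref{lemma:descent} and bound its squared Frobenius norm. For LoRA we have $\Delta \mW_{\text{eff}} \approx -\eta s^2(\mB\mB^\top\mG + \mG\mA^\top\mA)$. For \methodname, under Assumption~\ref{assumption:slow_subspace}, we have $\Delta \mW_{\text{eff}} \approx -\eta s^2(\mP_{\gC(\mB)}\mG + \mG\mP_{\gR(\mA)})$. The penalty term in Equation~\ref{eq:beta_smooth} is $\frac{\beta}{2}\|\Delta\mW_{\text{eff}}\|_F^2$. The first step is the elementary inequality $\|\mX+\mY\|_F^2 \le 2\|\mX\|_F^2 + 2\|\mY\|_F^2$, which follows from Cauchy--Schwarz or the parallelogram law. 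It exactly cancels the factor $\frac12$ and gives
\begin{equation*}
\text{Penalty} \le \beta\eta^2 s^4\left(\|\mX_1\|_F^2 + \|\mX_2\|_F^2\right),
\end{equation*}
where $\mX_1,\mX_2$ are the two summands without the factor $-\eta s^2$.

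For \methodname\ this already finishes the argument. We have $\mX_1=\mP_{\gC(\mB)}\mG$ and $\mX_2=\mG\mP_{\gR(\mA)}$. By Equation~\ref{eq:alpha_alignment}, $\|\mX_1\|_F^2+\|\mX_2\|_F^2 = \alpha\|\mG\|_F^2$, which yields $\text{Penalty}_{\text{\methodname}} \le \alpha\beta\eta^2 s^4\|\mG\|_F^2$.

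For LoRA, I would insert the projections, mirroring the argument in Lemma~\ref{lemma:descent} but with the reverse inequality. Since $\mB^\top = \mB^\top\mP_{\gC(\mB)}$, we get $\mB\mB^\top\mG = \mB\mB^\top\mP_{\gC(\mB)}\mG$. Submultiplicativity $\|\mM\mX\|_F \le \|\mM\|_2\|\mX\|_F$ then gives $\|\mB\mB^\top\mG\|_F^2 \le \sigma_{\max}^4(\mB)\|\mP_{\gC(\mB)}\mG\|_F^2$. Symmetrically, $\mA^\top\mA = \mP_{\gR(\mA)}\mA^\top\mA$, so $\|\mG\mA^\top\mA\|_F^2 \le \sigma_{\max}^4(\mA)\|\mG\mP_{\gR(\mA)}\|_F^2$. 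Both factors are then bounded by $\sigma_{\max}^4(\mA,\mB)$, and applying Equation~\ref{eq:alpha_alignment} gives the claimed bound $\alpha\beta\eta^2 s^4\sigma_{\max}^4(\mA,\mB)\|\mG\|_F^2$.

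There is no real obstacle here. The only care points are the following:
\begin{itemize}
\item Making sure the projections are inserted on the correct side. $\mP_{\gC(\mB)}$ acts on the left of $\mG$ and $\mP_{\gR(\mA)}$ on the right, matching $\mB\mB^\top$ and $\mA^\top\mA$ respectively.
\item Using the operator-norm identity $\|\mB\mB^\top\|_2=\sigma_{\max}^2(\mB)$, so that the squared Frobenius norm picks up the fourth power.
\item Noting that the constant from $\|\mX+\mY\|_F^2\le 2(\cdot)$ cancels exactly with $\frac{\beta}{2}$. This gives the stated constants with no extra factor.
\end{itemize}
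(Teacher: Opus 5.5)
Your proposal is correct and uses the same ingredients as the paper: the inequality $\|\mX+\mY\|_F^2\le 2\|\mX\|_F^2+2\|\mY\|_F^2$ (whose factor $2$ cancels the $\frac{\beta}{2}$), inserting $\mP_{\gC(\mB)}$ and $\mP_{\gR(\mA)}$ on the correct sides, submultiplicativity with the spectral norm, and the $\alpha$-alignment identity. The only difference is bookkeeping. The paper first bounds $\|\Delta\mW_\text{eff}\|_F\le s\rho(\|\Delta\mA\|_F+\|\Delta\mB\|_F)$ with a common operator-norm factor $\rho$, equal to $\sigma_{\max}(\mA,\mB)$ for LoRA and $1$ for \methodname, and then bounds the parameter updates. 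You instead bound the two summands of the explicit effective update directly via $\|\mB\mB^\top\|_2=\sigma_{\max}^2(\mB)$, which is marginally tighter before taking the maximum and yields the same stated constants.
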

\begin{proof}
    We bound the penalty term $\frac{\beta}{2} ||\Delta \mW_\text{eff}||_F^2$ by relating it to the parameter ($\boldsymbol{\theta}= (\mA,\mB)$) updates.
    Both parameterizations induce a weight change of the common form $\Delta \mW_\text{eff} \approx s (\mX \Delta \mA + \Delta \mB \mY)$, where $(\mX, \mY) = (\mB, \mA)$ for LoRA, for which the second-order term $s \Delta \mB \Delta \mA$ is dropped, and $(\mX, \mY) = (\mQ_\mB, \mQ_\mA^\top)$ for \methodname, for which the form is exact under Assumption~\ref{assumption:slow_subspace}.
    The triangle inequality and sub-multiplicative property of norms give us
    \begin{equation}
        \begin{split}
            ||\Delta \mW_\text{eff}||_F &\approx s ||\mX \Delta \mA + \Delta \mB \mY||_F \\
            &\leq s (||\mX||_2 ||\Delta \mA||_F + ||\Delta \mB||_F ||\mY||_2) \\
            &\leq s \rho (||\Delta \mA||_F + ||\Delta \mB||_F),
        \end{split}
    \end{equation}
    where $\rho = \max(||\mX||_2, ||\mY||_2)$, so that $\rho = \sigma_{\text{max}}(\mA, \mB)$ for LoRA and $\rho = 1$ for \methodname.
    Using $(x+y)^2 \leq 2x^2 + 2y^2$, we obtain
    \begin{equation}\label{eq:smoothness_param}
        ||\Delta \mW_\text{eff}||_F^2 \leq 2 s^2 \rho^2 (||\Delta \mA||_F^2 + ||\Delta \mB||_F^2).
    \end{equation}
    
    \textbf{1. LoRA}
    
    We relate the norm of $\Delta \mA = -\eta s \mB^\top \mG$ and $\Delta \mB = -\eta s \mG \mA^\top$ to the subspace projections:
    \begin{equation}
        \begin{split}
            ||\Delta \mA||_F^2 &= \eta^2 s^2 ||\mB^\top \mG||_F^2 = \eta^2 s^2 ||\mB^\top \mP_{\gC(\mB)} \mG||_F^2 \leq \eta^2 s^2 \sigma_{\text{max}}^2(\mB) ||\mP_{\gC(\mB)} \mG||_F^2, \\
            ||\Delta \mB||_F^2 &= \eta^2 s^2 ||\mG \mA^\top||_F^2 = \eta^2 s^2 ||\mG \mP_{\gR(\mA)} \mA^\top||_F^2 \leq \eta^2 s^2 \sigma_{\text{max}}^2(\mA) ||\mG \mP_{\gR(\mA)}||_F^2.
        \end{split}
    \end{equation}
    Substituting these into Equation~\ref{eq:smoothness_param} with $\rho = \sigma_{\text{max}}(\mA, \mB)$:
    \begin{equation}
        \begin{split}
            ||\Delta \mW_\text{eff}||_F^2 &\leq 2 s^2 \sigma_{\text{max}}^2(\mA, \mB) \left( \eta^2 s^2 \sigma_{\text{max}}^2(\mB) ||\mP_{\gC(\mB)} \mG||_F^2 + \eta^2 s^2 \sigma_{\text{max}}^2(\mA) ||\mG \mP_{\gR(\mA)}||_F^2 \right) \\
            &\leq 2 \eta^2 s^4 \sigma_{\text{max}}^4(\mA, \mB) \left( ||\mP_{\gC(\mB)} \mG||_F^2 + ||\mG \mP_{\gR(\mA)}||_F^2 \right).
        \end{split}
    \end{equation}
    Applying Equation~\ref{eq:alpha_alignment}:
    \begin{equation}
        ||\Delta \mW_\text{eff}||_F^2 \leq 2 \alpha \eta^2 s^4 \sigma_{\text{max}}^4(\mA, \mB) ||\mG||_F^2.
    \end{equation}
    The penalty is therefore $\frac{\beta}{2} ||\Delta \mW_\text{eff}||_F^2 \leq \alpha \beta \eta^2 s^4 \sigma_{\text{max}}^4(\mA, \mB) ||\mG||_F^2$.

    \textbf{2. \methodname}
    
    Since $\mQ_\mB$ and $\mQ_\mA^\top$ have orthonormal columns and rows, respectively, their spectral norms are 1. Thus, $\rho = 1$.
    Using the update rules $\Delta \mA = -\eta s \mQ_\mB^\top \mG$ and $\Delta \mB = -\eta s \mG \mQ_\mA$:
    \begin{equation}
        \begin{split}
            ||\Delta \mA||_F^2 &= \eta^2 s^2 ||\mQ_\mB^\top \mG||_F^2 = \eta^2 s^2 \Tr(\mG^\top \mQ_\mB \mQ_\mB^\top \mG) = \eta^2 s^2 ||\mP_{\gC(\mB)} \mG||_F^2. \\
            ||\Delta \mB||_F^2 &= \eta^2 s^2 ||\mG \mQ_\mA||_F^2 = \eta^2 s^2 \Tr(\mG \mQ_\mA \mQ_\mA^\top \mG^\top) = \eta^2 s^2 ||\mG \mP_{\gR(\mA)}||_F^2.
        \end{split}
    \end{equation}
    Substituting these into Equation~\ref{eq:smoothness_param} with $\rho = 1$:
    \begin{equation}
        \begin{split}
            ||\Delta \mW_\text{eff}||_F^2 &\leq 2 s^2 \cdot 1^2 \cdot \eta^2 s^2 (||\mP_{\gC(\mB)} \mG||_F^2 + ||\mG \mP_{\gR(\mA)}||_F^2).
        \end{split}
    \end{equation}
    Applying Equation~\ref{eq:alpha_alignment}:
    \begin{equation}
        ||\Delta \mW_\text{eff}||_F^2 \leq 2 \alpha \eta^2 s^4 ||\mG||_F^2.
    \end{equation}
    The penalty is therefore $\frac{\beta}{2} ||\Delta \mW_\text{eff}||_F^2 \leq \alpha \beta \eta^2 s^4 ||\mG||_F^2$.
\end{proof}

Finally, using Lemma~\ref{lemma:descent} and Lemma~\ref{lemma:penalty}, we derive the following proof of Theorem~\ref{theorem:convergence}:
\begin{proof}
    Let $\gL_t = \gL(\mW_\text{eff})$ and $\gL_{t+1} = \gL(\mW_\text{eff} + \Delta \mW_\text{eff})$.

    \textbf{1. LoRA}

    Substitute the Descent and Penalty lemmas for LoRA:
    \begin{equation}\label{eq:reform_lora}
        \gL_{t+1} \leq \gL_t - \alpha \eta s^2 \sigma_{\min}^2 ||\mG||_F^2 + \alpha \beta \eta^2 s^4 \sigma_{\max}^4 ||\mG||_F^2.
    \end{equation}
    We set $f(\eta) = -\alpha s^2 \sigma_{\min}^2 \eta + \alpha \beta s^4 \sigma_{\max}^4 \eta^2$. Solving for optimal $\eta^*$:
    \begin{equation}
        f^\prime(\eta^*) = -\alpha s^2 \sigma_{\min}^2 + 2 \alpha \beta s^4 \sigma_{\max}^4 \eta = 0 \implies \eta^* = \frac{s^2 \sigma_{\min}^2}{2 \beta s^4 \sigma_{\max}^4} = \frac{\sigma_{\min}^2}{2 \beta s^2 \sigma_{\max}^4}.
    \end{equation}
    Substituting $\eta^*$ back into Equation~\ref{eq:reform_lora}:
    \begin{equation}
        \begin{split}
            \gL_{t+1} &\leq \gL_t - \alpha s^2 \sigma_{\min}^2 \left(\frac{\sigma_{\min}^2}{2 \beta s^2 \sigma_{\max}^4}\right) ||\mG||_F^2 + \alpha \beta s^4 \sigma_{\max}^4 \left(\frac{\sigma_{\min}^2}{2 \beta s^2 \sigma_{\max}^4}\right)^2 ||\mG||_F^2 \\
            &= \gL_t - \frac{\alpha \sigma_{\min}^4}{2 \beta \sigma_{\max}^4} ||\mG||_F^2 + \frac{\alpha \sigma_{\min}^4}{4 \beta \sigma_{\max}^4} ||\mG||_F^2 \\
            &= \gL_t - \frac{\alpha \sigma_{\min}^4}{4 \beta \sigma_{\max}^4} ||\mG||_F^2.
        \end{split}
    \end{equation}
    Recognizing that $\frac{\sigma_{\max}^4}{\sigma_{\min}^4} = \kappa^4$ for $\kappa = \frac{\sigma_{\normalfont\text{max}}(\mA,\mB)}{\sigma_{\normalfont\text{min}}(\mA,\mB)}$, we rewrite this as $\gL_{t+1} \leq \gL_t - \frac{\alpha}{4 \beta \kappa^4} ||\mG||_F^2$.
    Applying the PL condition:
    \begin{equation}
        \begin{split}
            \gL_{t+1} &\leq \gL_t - \frac{\alpha}{4\beta\kappa^4}2\mu(\gL_t - \gL^*)\\
            \gL_{t+1} - \gL^*&\leq (1-\frac{\mu\alpha}{2\beta\kappa^4})(\gL_t-\gL^*).
        \end{split}
    \end{equation}

    \textbf{2. \methodname}
    
    Substitute the Descent and Penalty lemmas into Equation~\ref{eq:beta_smooth}:
    \begin{equation}\label{eq:reform_sdslora}
        \gL_{t+1} \leq \gL_t - \alpha \eta s^2 ||\mG||_F^2 + \alpha \beta \eta^2 s^4 ||\mG||_F^2.
    \end{equation}
    To find the optimal step size, we define $f(\eta) = -\alpha s^2 \eta + \alpha \beta s^4 \eta^2$ and solve $f^\prime(\eta^*) = -\alpha s^2 + 2\alpha\beta s^4 \eta = 0$.
    This gives us $\eta^* = \frac{s^2}{2\beta s^4} = \frac{1}{2\beta s^2}$.
    By substituting $\eta^*$ back into Equation~\ref{eq:reform_sdslora}, we obtain:
    \begin{equation}
        \gL_{t+1} \leq \gL_t - \frac{\alpha s^2}{2\beta s^2}||\mG||_F^2 + \frac{\alpha \beta s^4}{4\beta^2 s^4}||\mG||_F^2 = \gL_t - \frac{\alpha}{4\beta}||\mG||_F^2.
    \end{equation}
    Applying the PL condition, we obtain
    \begin{equation}
        \begin{split}
            \gL_{t+1} &\leq \gL_t- \frac{\alpha}{4\beta}2\mu(\gL_t - \gL^*)\\
            \gL_{t+1} - \gL^*& \leq (1-\frac{\mu\alpha}{2\beta})(\gL_t-\gL^*).
        \end{split}
    \end{equation}

\end{proof}

\section{Ablation Studies on the Update Interval}\label{sec:interval_sensitivity}
In this section, we provide ablation studies on the update interval for $\mQ_\mA$ and $\mQ_\mB$ in the training procedure of \methodname~described in Algorithm~\ref{alg:training}.
To examine the effectiveness of the proposed interval scheduling (i.e., $t \bmod \left\lceil \frac{kt}{T_\text{total}} \right\rceil = 0$), we compare it against a uniform schedule (i.e., $t \bmod k = 0$).
Table~\ref{tab:interval_sensitivity} demonstrates that for both schedules, performance degrades as the update interval increases.
This occurs because large update intervals cause the updates of $\mQ_{\mA}$ and $\mQ_{\mB}$ to significantly disrupt the model, resulting in unstable training.
We also observe that frequent updates do not always yield better performance, justifying the use of intermittent updates for efficiency. 
Furthermore, given the same total number of updates, our scheduling method outperforms the uniform schedule, demonstrating the necessity of frequent updates during the initial training phase.

\begin{table*}[b!]
\caption{\textbf{Ablation studies on the update interval of $\mQ_\mA$ and $\mQ_\mB$.}
Experiments are conducted on LLaMA3-8B with rank 32.
}
\vspace{-0.5em}
\label{tab:interval_sensitivity}
\centering
\resizebox{0.6\linewidth}{!}{
\begin{tabular}{lcccc}
     \toprule
     \multicolumn{2}{c}{\multirow{2}{*}{Update Condition}} & Number of Updates & \multicolumn{2}{c}{LLaMA3-8B}\\
     \cmidrule(lr){4-5}
     &&($\times T_\text{Total}$)& MATH & GSM8K\\
     \toprule
     \toprule
     \multirow{3}{*}{$t \bmod k = 0$} & $k=1$ & 1 & 26.42{\scriptsize $\pm$0.76}&76.91{\scriptsize $\pm$0.57} \\
     &$k=2$& 0.5 & 26.24{\scriptsize $\pm$0.81}&77.01{\scriptsize $\pm$0.73}  \\
     &$k=3$& 0.33 & 24.58{\scriptsize $\pm$0.42}&75.50{\scriptsize $\pm$0.74}  \\
     \midrule
     \multirow{5}{*}{$t \bmod \left\lceil \frac{kt}{T_\text{total}} \right\rceil = 0$}& $k=3$& 0.61 & 26.53{\scriptsize $\pm$0.66}&77.04{\scriptsize $\pm$0.72}\\
     & $k=4$& 0.52 & 26.42{\scriptsize $\pm$0.54}&77.16{\scriptsize $\pm$0.68}\\
     & $k=5$ (Default)& 0.46 & 26.36{\scriptsize$\pm$0.73} & 77.13{\scriptsize$\pm$0.84} \\
     & $k=6$& 0.41 & 25.88{\scriptsize $\pm$0.44}&76.65{\scriptsize $\pm$0.52}\\
     & $k=7$& 0.37 & 25.03{\scriptsize $\pm$0.48}&76.21{\scriptsize $\pm$0.78}\\
     \bottomrule
\end{tabular}}
\vspace{-0.5em}
\end{table*}

\section{Ablation Studies on the Warm-Up Stage and the Update of Orthonormal Bases}\label{sec:warmup_ablation}

\paragraph{Warm-up stage.}
The warm-up stage serves as a lightweight enabler that defines $\mQ_\mA$ and $\mQ_\mB$ for the \methodname~formulation, as described in Equation~\ref{eq:initialization}.
To examine whether the gain of \methodname~originates from this stage, we ablate both the initialization scheme and the warm-up length in Table~\ref{tab:warmup_ablation}.
Here, `Random' denotes an initialization in which $\mA$ and $\mB$ are randomly initialized and the orthonormal bases are computed from them, after which the resulting $\Delta \mW$ is subtracted from the pretrained weight so that training still starts from $\mW_0$.
Random initialization degrades performance regardless of the warm-up length, because a randomly defined subspace does not align well with the full fine-tuning gradient, whereas the warm-up stage defines the subspace from actual training examples.
A longer warm-up does not help either and slightly degrades performance, which we attribute to the increasing influence of the standard LoRA formulation.
Together with the `\methodname~w/o detachment' variant in Table~\ref{tab:orthogonal_lora}, which shares the same warm-up stage but performs substantially worse, these results indicate that the warm-up stage is a lightweight enabler for the \methodname~parameterization rather than a component to be tuned for performance.

\paragraph{Update of $\mQ_\mA$ and $\mQ_\mB$.}
We also examine what happens when $\mQ_\mA$ and $\mQ_\mB$ are never updated after the warm-up stage.
Table~\ref{tab:q_frozen} shows that performance drops substantially in this case, since freezing $\mQ_\mA$ and $\mQ_\mB$ confines the update to a fixed subspace that can no longer effectively capture the full gradient.
As reported in Table~\ref{tab:overhead}, updating them incurs approximately 5\% time and less than 1\% memory overhead, which is small relative to the resulting performance gain.

\begin{table}[t!]
\caption{\textbf{Ablation studies on the initialization scheme and the warm-up length.}
We report the average accuracy on the commonsense reasoning benchmarks with Gemma-2B at rank 32 from a single run.
}
\label{tab:warmup_ablation}
\centering
\resizebox{0.5\linewidth}{!}{
\begin{tabular}{lcc}
     \toprule
     Initialization & Warm-Up Iterations $T$ & Avg. Accuracy\\
     \toprule
     \toprule
     Random & 0 & 71.24\\
     Random & 10 & 71.16\\
     \midrule
     SVD (ours) & 10 (default) & \textbf{74.80}\\
     SVD (ours) & 50 & 74.67\\
     SVD (ours) & 100 & 74.54\\
     \bottomrule
\end{tabular}}
\vspace{-0.5em}
\end{table}

\begin{table}[t!]
\caption{\textbf{Ablation study on the update of $\mQ_\mA$ and $\mQ_\mB$.}
We report the average accuracy on the commonsense reasoning benchmarks with Gemma-2B at rank 32 from a single run.
}
\label{tab:q_frozen}
\centering
\resizebox{0.45\linewidth}{!}{
\begin{tabular}{lc}
     \toprule
     Update of $\mQ_\mA$ and $\mQ_\mB$ & Avg. Accuracy\\
     \toprule
     \toprule
     Never after warm-up & 64.78\\
     Periodic (ours) & \textbf{74.80}\\
     \bottomrule
\end{tabular}}
\vspace{-0.5em}
\end{table}

\section{Subspace Change in \methodname}\label{sec:subspace_change}
In Equation~\ref{eq:eff_grad_ours}, the term $\mE$, which arises from the change in $\mQ_\mA$ and $\mQ_\mB$, causes the effective gradient to deviate from the optimal effective gradient.
To assess the contribution of $\mE$, we examine the rate at which $\mQ_\mA$ and $\mQ_\mB$, which represent the subspace of $\mA$ and $\mB$, evolve during training.
Specifically, we measure the average cosine similarity between $\mQ_\mA$ and $\mQ_\mA + \Delta \mQ_\mA$ (or $\mQ_\mB$ and $\mQ_\mB + \Delta \mQ_\mB$) across all layers.
Figure~\ref{fig:subspace_change} shows that the cosine similarity remains close to 1 throughout training, except for a few initial iterations.
This suggests that the error arising from the subspace change is negligible.
Thus, \methodname~nearly achieves the worst-case-optimal gradient quality characterized in Theorem~\ref{theorem:approx}, providing empirical support for Assumption~\ref{assumption:slow_subspace} in our convergence analysis.

To assess the contribution of $\mE$ more directly, we additionally measure its relative magnitude within the effective gradient of Equation~\ref{eq:eff_grad_ours}, that is, the ratio $||\mE||_F / ||\Tilde{\mG}||_F$.
Since $\Delta \mW_\text{eff} = -\eta \Tilde{\mG}$, this ratio is exactly the fraction of the effective weight update that originates from the change in $\mQ_\mA$ and $\mQ_\mB$, as the common factor cancels between the numerator and the denominator.
Table~\ref{tab:residual_e} shows that the contribution of $\mE$ stays below $0.1$ throughout training and decreases further as training progresses.
We acknowledge that this may still be insufficient to strictly justify $\mE \approx 0$, and \methodname~is therefore best regarded as a practical realization of our theoretical formulation.

\begin{table}[t!]
\caption{\textbf{Contribution of the residual term $\mE$ to the effective gradient.}
We divide training into three phases and report the ratio averaged over all layers within each phase.
We fine-tune Gemma-2B using the Commonsense-170K dataset with rank 32.
}
\label{tab:residual_e}
\centering
\resizebox{0.6\linewidth}{!}{
\begin{tabular}{lccc}
     \toprule
     Training progress & Early & Mid & Late\\
     \toprule
     \toprule
     $||\mE||_F \: / \: ||s^2(\mG \mQ_\mA \mQ_\mA^\top + \mQ_\mB \mQ_\mB^\top \mG) + \mE||_F$ & 0.09 & 0.05 & 0.03\\
     \bottomrule
\end{tabular}}
\vspace{-0.5em}
\end{table}

\begin{figure}[b!]
    \centering
    \begin{subfigure}[b]{0.4\textwidth}
        \centering
        \includegraphics[width=\textwidth]{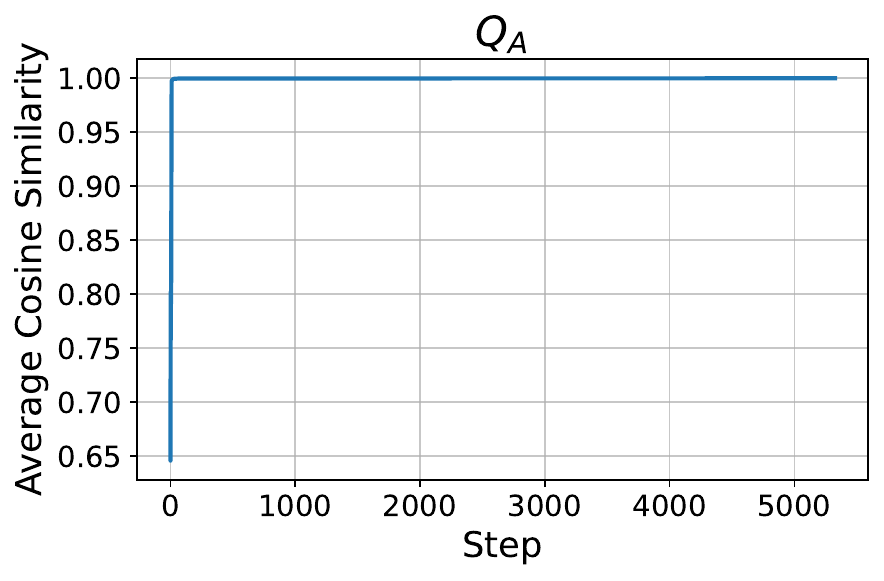}
        \vspace{-0.1em}
    \end{subfigure}
    \begin{subfigure}[b]{0.4\textwidth}
        \centering
        \includegraphics[width=\textwidth]{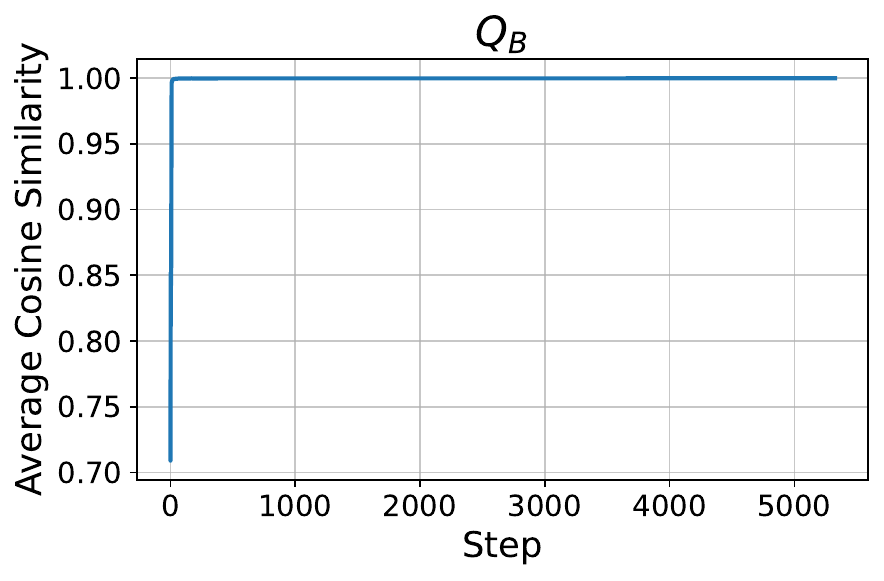}
        \vspace{-0.1em}
    \end{subfigure}
    \vspace{-2em}
    \caption{\textbf{Rate of change in $\mQ_\mA$ and $\mQ_\mB$ during training.}
    During training with \methodname, we monitor the average cosine similarity between $\mQ_\mA$ and $\mQ_\mA +\Delta \mQ_\mA$ and between $\mQ_\mB$ and $\mQ_\mB +\Delta \mQ_\mB$ across all layers.
    We fine-tune LLaMA3-8B using the Commonsense-170K dataset with rank 32.
    }
    \label{fig:subspace_change}
\end{figure}
\section{Singular Value Distribution of LoRA Matrices}\label{sec:stable_rank}

Our claim regarding the adverse effects of anisotropic gradient scaling, which is induced by the singular values of the LoRA matrices, holds only when the singular values exhibit a highly skewed spectrum.
To verify this, we monitor the stable rank of $\mA$ and $\mB$ throughout the LoRA training process, defined as $\frac{\|\mA\|_F^2}{\|\mA\|_2^2}$ and $\frac{\|\mB\|_F^2}{\|\mB\|_2^2}$, where $\|\cdot\|_2$ denotes the spectral norm.
Figure~\ref{fig:stable_rank} demonstrates that the average stable rank for both matrices is significantly lower than the intrinsic rank (i.e., 32).
This observation confirms that the singular values are indeed highly skewed, thereby validating our claim.

\begin{figure}[t!]
    \centering
    \begin{subfigure}[b]{0.4\textwidth}
        \centering
        \includegraphics[width=\textwidth]{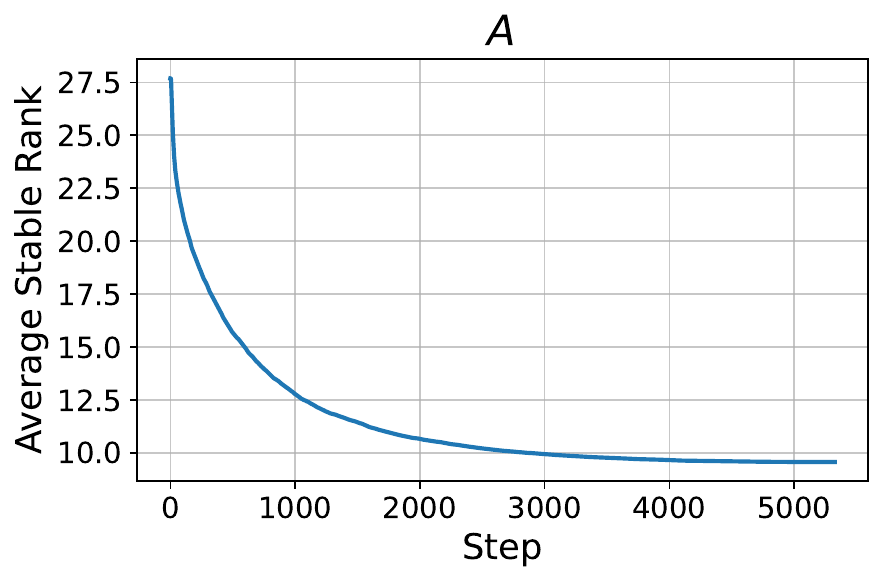}
        \vspace{-0.1em}
    \end{subfigure}
    \begin{subfigure}[b]{0.4\textwidth}
        \centering
        \includegraphics[width=\textwidth]{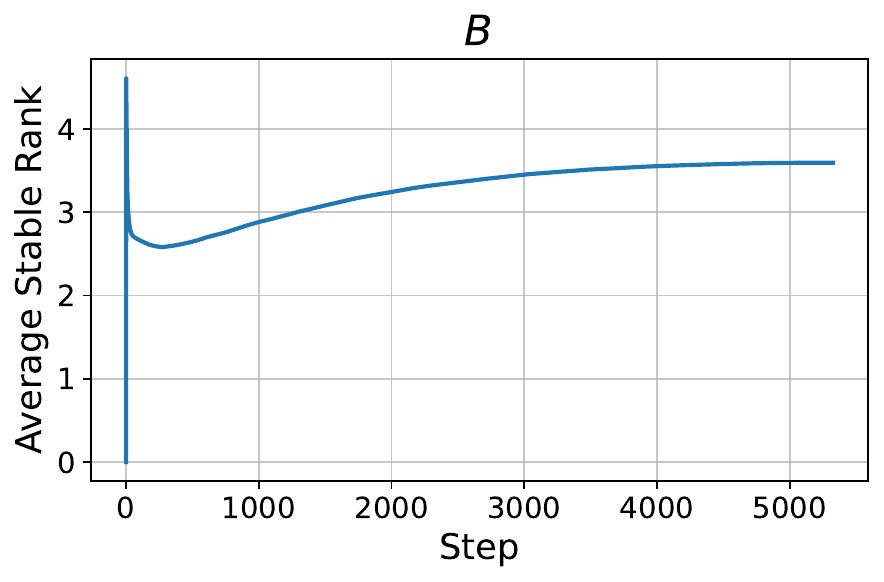}
        \vspace{-0.1em}
    \end{subfigure}
    \vspace{-0.5em}
    \caption{\textbf{Stable rank of $\mA$ and $\mB$.}
    We report the average stable rank of $\mA$ and $\mB$ across all layers.
    We fine-tune LLaMA3-8B using the Commonsense-170K dataset with rank 32.
    }
    \label{fig:stable_rank}
\end{figure}
\section{Experiments on Image Classification Tasks}\label{sec:vision}
For experiments on vision benchmarks, we use ViT-Base and ViT-Large~\citep{Dosovitskiy2021vit} trained with 224$\times$224 images and 16$\times$16 patch size.
We fine-tune and evaluate them on five datasets, including Cars~\citep{cars}, CUB200~\citep{cub200}, DTD~\citep{dtd}, Food101~\citep{food101}, and SUN397~\citep{sun397}.
We use the same setup as for the natural language processing tasks, except for the epochs, learning rate, and batch size, which are detailed in Table~\ref{tab:vision_detail}.
The results in Table~\ref{tab:vision_full} show that \methodname~achieves the best average performance across all models and ranks.
In particular, we observe that \methodname~significantly reduces the performance gap between LoRA and the full fine-tuning.
These results suggest that \methodname~generalizes well to vision tasks, highlighting the effectiveness of our method in various domains.

\begin{table*}
\caption{\textbf{Results on image classification tasks.}}
\label{tab:vision_full}
\centering
\resizebox{0.75\linewidth}{!}{
\begin{tabular}{ll>{\centering\arraybackslash}p{0.5cm}cccccc}
    \toprule
     Model & Method & Rank & Cars & CUB200 & DTD & Food101 & SUN397 & Avg.\\
     \toprule
     \toprule
     \multirow{13}{*}{ViT-Base} & Full FT. & - & 84.26{\scriptsize$\pm$0.19} & 86.35{\scriptsize$\pm$0.18} & 80.07{\scriptsize$\pm$0.32} & 89.27{\scriptsize$\pm$0.20} & 74.76{\scriptsize$\pm$0.26} & 82.94 \\
     \cmidrule{2-9}
     &LoRA&\multirow{6}{*}{8}&78.66{\scriptsize$\pm$0.23}&85.65{\scriptsize$\pm$0.05}&78.81{\scriptsize$\pm$0.95}&88.15{\scriptsize$\pm$0.07}&74.35{\scriptsize$\pm$0.28}&81.12\\
     &rsLoRA&&78.48{\scriptsize$\pm$0.07}&85.67{\scriptsize$\pm$0.41}&78.79{\scriptsize$\pm$0.48}&88.13{\scriptsize$\pm$0.16}&74.63{\scriptsize$\pm$0.13}&81.14\\
     &LoRA+&&79.05{\scriptsize$\pm$0.25}&85.28{\scriptsize$\pm$0.11}&78.32{\scriptsize$\pm$0.16}&88.27{\scriptsize$\pm$0.05}&72.87{\scriptsize$\pm$0.28}&80.76\\
     &PiSSA&&78.03{\scriptsize$\pm$0.32}&85.17{\scriptsize$\pm$0.24}&78.33{\scriptsize$\pm$0.46}&87.65{\scriptsize$\pm$0.07}&72.82{\scriptsize$\pm$0.10}&80.40\\
     &DoRA&&78.93{\scriptsize$\pm$0.30}&85.74{\scriptsize$\pm$0.35}&78.46{\scriptsize$\pm$0.16}&88.27{\scriptsize$\pm$0.17}&74.46{\scriptsize$\pm$0.09}&81.17\\
     \cmidrule{2-9}
     &\cellcolor{orange!25}\textbf{SDS-LoRA}&\cellcolor{orange!25}&\cellcolor{orange!25}\textbf{80.41}{\scriptsize $\pm$0.20}&\cellcolor{orange!25}\textbf{86.07}{\scriptsize $\pm$0.13}&\cellcolor{orange!25}\textbf{79.29}{\scriptsize $\pm$0.25}&\cellcolor{orange!25}\textbf{88.31}{\scriptsize $\pm$0.26}&\cellcolor{orange!25}\textbf{74.85}{\scriptsize $\pm$0.13}&\cellcolor{orange!25}\textbf{81.79}\\
     \cmidrule{2-9}
     &LoRA&\multirow{6}{*}{32}&81.48{\scriptsize$\pm$0.20}&85.75{\scriptsize$\pm$0.28}&79.34{\scriptsize$\pm$0.18}&88.37{\scriptsize$\pm$0.12}&70.82{\scriptsize$\pm$0.07}&81.15\\
     &rsLoRA&&80.57{\scriptsize$\pm$0.22}&85.42{\scriptsize$\pm$0.41}&79.50{\scriptsize$\pm$0.09}&88.81{\scriptsize$\pm$0.10}&73.92{\scriptsize$\pm$0.26}&81.64\\
     &LoRA+&&79.53{\scriptsize$\pm$0.60}&85.68{\scriptsize$\pm$0.31}&78.85{\scriptsize$\pm$0.39}&88.05{\scriptsize$\pm$0.01}&67.66{\scriptsize$\pm$0.06}&79.95\\
     &PiSSA&&80.43{\scriptsize$\pm$0.17}&84.90{\scriptsize$\pm$0.28}&78.90{\scriptsize$\pm$0.35}&87.91{\scriptsize$\pm$0.08}&69.73{\scriptsize$\pm$0.12}&80.37\\
     &DoRA&&81.07{\scriptsize$\pm$0.03}&85.67{\scriptsize$\pm$0.22}&79.56{\scriptsize$\pm$0.43}&88.83{\scriptsize$\pm$0.08}&74.05{\scriptsize$\pm$0.21}&81.84\\
     \cmidrule{2-9}
     &\cellcolor{orange!25}\textbf{SDS-LoRA}&\cellcolor{orange!25}&\cellcolor{orange!25}\textbf{82.95}{\scriptsize $\pm$0.25}&\cellcolor{orange!25}\textbf{85.99}{\scriptsize $\pm$0.22}&\cellcolor{orange!25}\textbf{80.10}{\scriptsize $\pm$0.11}&\cellcolor{orange!25}\textbf{89.02}{\scriptsize $\pm$0.16}&\cellcolor{orange!25}\textbf{74.89}{\scriptsize $\pm$0.14}&\cellcolor{orange!25}\textbf{82.59}\\
     \midrule
     \multirow{13}{*}{ViT-Large} & Full FT. & - & 88.19{\scriptsize$\pm$0.13} & 88.23{\scriptsize$\pm$0.19} & 81.71{\scriptsize$\pm$0.09} & 90.90{\scriptsize$\pm$0.17} & 76.20{\scriptsize$\pm$0.22} & 85.05 \\
     \cmidrule{2-9}
     &LoRA&\multirow{6}{*}{8}&85.07{\scriptsize$\pm$0.19}&87.53{\scriptsize$\pm$0.20}&80.64{\scriptsize$\pm$0.23}&89.96{\scriptsize$\pm$0.08}&76.30{\scriptsize$\pm$0.11}&83.90\\
     &rsLoRA&&84.61{\scriptsize$\pm$0.11}&87.75{\scriptsize$\pm$0.19}&80.87{\scriptsize$\pm$0.32}&89.92{\scriptsize$\pm$0.05}&76.45{\scriptsize$\pm$0.16}&83.92\\
     &LoRA+&&81.99{\scriptsize$\pm$0.97}&87.66{\scriptsize$\pm$0.05}&80.62{\scriptsize$\pm$0.55}&\textbf{90.06}{\scriptsize$\pm$0.03}&75.08{\scriptsize$\pm$0.14}&83.08\\
     &PiSSA&&85.59{\scriptsize$\pm$0.17}&87.55{\scriptsize$\pm$0.19}&80.34{\scriptsize$\pm$0.13}&89.60{\scriptsize$\pm$0.17}&75.41{\scriptsize$\pm$0.20}&83.70\\
     &DoRA&&84.67{\scriptsize$\pm$0.08}&87.58{\scriptsize$\pm$0.11}&80.73{\scriptsize$\pm$0.33}&90.03{\scriptsize$\pm$0.04}&76.50{\scriptsize$\pm$0.08}&83.90\\
     \cmidrule{2-9}
     &\cellcolor{orange!25}\textbf{SDS-LoRA}&\cellcolor{orange!25}&\cellcolor{orange!25}\textbf{85.73}{\scriptsize $\pm$0.11}&\cellcolor{orange!25}\textbf{87.88}{\scriptsize $\pm$0.22}&\cellcolor{orange!25}\textbf{81.00}{\scriptsize $\pm$0.13}&\cellcolor{orange!25}89.95{\scriptsize $\pm$0.16}&\cellcolor{orange!25}\textbf{76.58}{\scriptsize $\pm$0.18}&\cellcolor{orange!25}\textbf{84.23}\\
     \cmidrule{2-9}
     &LoRA&\multirow{6}{*}{32}&86.23{\scriptsize$\pm$0.05}&87.94{\scriptsize$\pm$0.22}&81.15{\scriptsize$\pm$0.32}&90.62{\scriptsize$\pm$0.03}&74.59{\scriptsize$\pm$0.25}&84.11\\
     &rsLoRA&&85.93{\scriptsize$\pm$0.30}&87.64{\scriptsize$\pm$0.06}&\textbf{81.56}{\scriptsize$\pm$0.07}&90.58{\scriptsize$\pm$0.13}&76.22{\scriptsize$\pm$0.13}&84.39\\
     &LoRA+&&82.40{\scriptsize$\pm$1.84}&87.92{\scriptsize$\pm$0.22}&80.92{\scriptsize$\pm$0.18}&90.48{\scriptsize$\pm$0.07}&75.19{\scriptsize$\pm$1.23}&83.38\\
     &PiSSA&&86.70{\scriptsize$\pm$0.15}&87.65{\scriptsize$\pm$0.15}&80.82{\scriptsize$\pm$0.26}&90.04{\scriptsize$\pm$0.09}&73.84{\scriptsize$\pm$0.08}&83.81\\
     &DoRA&&86.10{\scriptsize$\pm$0.36}&87.69{\scriptsize$\pm$0.11}&81.31{\scriptsize$\pm$0.13}&90.64{\scriptsize$\pm$0.04}&76.23{\scriptsize$\pm$0.20}&84.39\\
     \cmidrule{2-9}
     &\cellcolor{orange!25}\textbf{SDS-LoRA}&\cellcolor{orange!25}&\cellcolor{orange!25}\textbf{87.00}{\scriptsize $\pm$0.25}&\cellcolor{orange!25}\textbf{88.20}{\scriptsize $\pm$0.23}&\cellcolor{orange!25}81.35{\scriptsize $\pm$0.10}&\cellcolor{orange!25}\textbf{90.70}{\scriptsize $\pm$0.11}&\cellcolor{orange!25}\textbf{76.46}{\scriptsize $\pm$0.19}&\cellcolor{orange!25}\textbf{84.74}\\
     \bottomrule
\end{tabular}}
\end{table*}

\begin{table*}[t!]
    \caption{\textbf{Implementation details on image classification tasks.}}
    \vspace{-0.5em}
    \label{tab:vision_detail}
    \centering
    \resizebox{0.8\linewidth}{!}{
    \begin{tabular}{lcccccccccc}
        \toprule
         &\multicolumn{5}{c}{ViT-Base}& \multicolumn{5}{c}{ViT-Large} \\
         \cmidrule(lr){2-6} \cmidrule(lr){7-11}
         & Cars & CUB200 & DTD & Food101 & SUN397 & Cars & CUB200 & DTD & Food101 & SUN397 \\
         \midrule
         Epochs& 5&7 &7 &7 &5 &5 &7 &7 &7 &5\\
         Learning Rate&5e-3 &2e-3 &2e-3 &2e-3 &5e-3 &2.5e-3 &1e-3 &1e-3 &1e-3 &2.5e-3\\
         Batch Size& \multicolumn{10}{c}{64}\\
         Target Modules & \multicolumn{10}{c}{`query', `value'}\\
         \toprule
    \end{tabular}}
    \vspace{-0.5em}
\end{table*}

\section{Comparison to Preconditioning-Based Methods}\label{sec:precond_discussion}
Several previous works~\citep{riemannian2024zhang,wang2025lorapro,altlora2025yu,tastan2026loft} attempt to minimize the gap between the full fine-tuning gradient $\mG$ and its low-rank approximation $\Tilde{\mG}$ described in Equation~\ref{eq:eff_grad}.
Specifically, they modify the original gradient of the LoRA matrices as follows: $\Tilde{\nabla}_\mA \gL = s(\mB^\top\mB)^{-1}\mB^\top \mG$ and $\Tilde{\nabla}_\mB \gL = s\mG \mA^\top(\mA\mA^\top)^{-1}$ where $(\mB^\top\mB)^{-1}$ and $(\mA\mA^\top)^{-1}$ act as preconditioners.
Consequently, the effective gradient of $\mW_\text{eff}$ becomes $\Tilde{\mG} = s^2(\mG \mP_{\gR(\mA)} + \mP_{\gC(\mB)}\mG)$, attaining the worst-case-optimal form characterized in Theorem~\ref{theorem:approx}, up to a constant factor that is absorbed into the learning rate.

Although these preconditioning-based methods also achieve optimal gradient approximation, our work differs from theirs in several key aspects.
First, our work focuses on addressing the adverse effects of anisotropic scaling induced by singular values, whereas prior works primarily aim to improve the quality of gradient approximation.
While both approaches yield the same update rule in the effective weight space $\mW_\text{eff}$, preconditioning-based methods fail to eliminate the influence of singular values in the gradients of the LoRA matrices.
Substituting the SVD of the LoRA matrices into $\Tilde{\nabla}_\mA \gL$, we obtain $\Tilde{\nabla}_\mA \gL = s \mV_\mB \boldsymbol{\Sigma}^{-1}_\mB \mU_\mB^\top \mG$, indicating that these methods do not address the adverse effects of anisotropic scaling induced by singular values.
Second, because preconditioning-based methods modify the original gradients, they do not guarantee the steepest descent direction in the parameter space. The descent lemma suggests that deviations from the true gradient may lead to a weaker bound on loss reduction.
Indeed, Wang et al.~\citep{wang2025lorapro} show that performance degrades as the modified gradients diverge from the original gradients.
The results in Figure~\ref{fig:loss_curve} demonstrate that SDS-LoRA achieves superior loss convergence compared to preconditioning-based methods.
Third, modifying the original gradients requires explicitly tracking optimizer momentum (e.g., in Adam).
These methods introduce additional mechanisms to adjust the momentum based on the modified gradients, which may hinder practical adoption.
In contrast, \methodname~structurally decouples the influence of singular values from the gradients without modifying the original gradient, allowing straightforward adoption.

\paragraph{Optimality in the effective weight space does not determine the training trajectory.}
Theorem~\ref{theorem:approx} characterizes the optimal effective gradient $\Tilde{\mG}$, that is, the update applied to $\mW_\text{eff}$ at a single step given the current $\mA$ and $\mB$.
Preconditioning-based methods attain the same optimum, and are therefore indistinguishable from \methodname~by this criterion alone.
This criterion, however, does not pin down the optimization trajectory.
At the optimum, $\Tilde{\mG} = s^2(\mG \mP_{\gR(\mA)} + \mP_{\gC(\mB)}\mG)$ is determined entirely by the two subspaces $\gR(\mA)$ and $\gC(\mB)$, and how much of $\mG$ these subspaces capture is exactly the quantity $\alpha$ that governs the convergence rate in Theorem~\ref{theorem:convergence}.
These subspaces are in turn produced by the updates applied to $\mA$ and $\mB$, which differ across methods: preconditioning replaces the gradient of $\mA$ with $\Tilde{\nabla}_\mA \gL = s\mV_\mB \boldsymbol{\Sigma}_\mB^{-1} \mU_\mB^\top \mG$, which remains anisotropic and merely inverts the direction of the distortion, whereas \methodname~yields the isometric $s\mQ_\mB^\top \mG$.
Two methods that agree on the one-step optimum in the effective weight space can thus drive $\gR(\mA)$ and $\gC(\mB)$ along very different paths.
Figure~\ref{fig:cossim} makes this concrete: although all the compared preconditioning-based methods attain the same optimal form of $\Tilde{\mG}$, \methodname~sustains a markedly higher alignment between $\mG$ and $\Tilde{\mG}$ throughout training, a gap that we attribute to the difference in the subspaces the methods reach.
The `\methodname~w/o detachment' variant in Table~\ref{tab:orthogonal_lora} isolates the same effect within our own formulation.
In short, the two questions are distinct: Theorem~\ref{theorem:approx} states which $\Tilde{\mG}$ one should aim for, whereas \methodname~additionally governs the quality of the gradient that $\mA$ and $\mB$ actually receive on the way there.

\paragraph{Comparison to LoRA-RITE and RefLoRA.}
LoRA-RITE~\citep{yen2025lora} and RefLoRA~\citep{zhang2025reflora} address a closely related problem: the update of one low-rank factor is scaled by the magnitude of the other, which makes training sensitive to the particular factorization chosen.\footnote{Both works write $\Delta \mW = \mA\mB^\top$, so their $\mA$ plays the role of our $\mB$. We follow their own notation in this paragraph.}
RefLoRA observes that many pairs $(\mA_t, \mB_t)$ represent the same $\Delta \mW$ and, at every step, selects the best balanced pair among them.
It computes a matrix $\Tilde{\mS}_t$ from $(\mA_t^\top \mA_t)^{-1}$ and $\mB_t^\top \mB_t$, and preconditions the gradients fed to AdamW as $\mG \mB_t \Tilde{\mS}_t^{-1}$ for $\mA$ and $\mG^\top \mA_t \Tilde{\mS}_t$ for $\mB$, so that the resulting update to $\Delta \mW$ is the same regardless of which valid pair one starts from.
The gradient with respect to $\mA$, however, still carries $\mB_t$ and $\Tilde{\mS}_t$: in the balanced state $\mA_t^\top \mA_t = \mB_t^\top \mB_t$, which is exactly the state RefLoRA steers toward, $\Tilde{\mS}_t = \mI$ and the preconditioned gradient reduces to LoRA's own $\mG \mB_t$, leaving anisotropic gradient scaling untouched.
LoRA-RITE first builds a scale-free unmagnified gradient $\bar{\nabla}_\mA = \nabla_\mZ \mU_\mB$, where $\mU_\mB$ is obtained from the polar decomposition $\mB = \mU_\mB \mR_\mB$, so that the magnitude of $\mB$ is carried by $\mR_\mB$ and dropped.
The applied update, however, is $\delta \mA_t = -\eta \bar{\mM}_{\mA_t} \mR_\mB^{-\top}$, which reintroduces the magnitude of $\mB$ through $\mR_\mB^{-\top}$.
Since $\mR_\mB^{-\top} \mB^\top = \mU_\mB^\top$, this reintroduced magnitude cancels once $\delta \mA_t$ is folded into $\Delta \mW = \delta \mA_t \mB^\top$, but the update applied to $\mA$ itself still depends on the magnitude of $\mB$, only inverted.
Table~\ref{tab:precond_mechanism} summarizes these mechanism-level differences, and Table~\ref{tab:reflora_comparison} compares the fine-tuning performance.
\methodname~achieves the highest average accuracy on both models, supporting our claim that it is important to resolve anisotropic gradient scaling in the update of $\mA$ and $\mB$, not only in the update of $\Delta \mW$.

\begin{table}[t!]
\caption{\textbf{Mechanism-level comparison to gradient-based methods.}
`Invariant $\Delta \mW$ update' indicates whether the update of $\Delta \mW$ is independent of the scaling of $\mA$ and $\mB$, `AGS in $\mA, \mB$ resolved' indicates whether anisotropic gradient scaling is removed from the updates of $\mA$ and $\mB$, and `Original gradient' indicates whether the original gradients of $\mA$ and $\mB$ are used without modification.
Among these methods, LoRA-Pro, ScaledAdamW, AltLoRA, and LoRA-RITE additionally attain the optimal form $\Tilde{\mG} = s^2(\mG \mP_{\gR(\mA)} + \mP_{\gC(\mB)}\mG)$, and LoRA-GA attains it at the first training step only, whereas RefLoRA targets invariance to the choice of factorization rather than the gradient approximation quality itself.
}
\label{tab:precond_mechanism}
\centering
\resizebox{0.85\linewidth}{!}{
\begin{tabular}{lccc}
     \toprule
     Method & Scale-Invariant $\Delta \mW$ update & AGS in $\mA, \mB$ resolved & Original gradient\\
     \toprule
     \toprule
     LoRA-GA & \textcolor{red}{\ding{56}} & \textcolor{red}{\ding{56}} & \textcolor{green}{\ding{52}}\\
     LoRA-Pro / ScaledAdamW / AltLoRA & \textcolor{green}{\ding{52}} & \textcolor{red}{\ding{56}} & \textcolor{red}{\ding{56}}\\
     RefLoRA & \textcolor{green}{\ding{52}} & \textcolor{red}{\ding{56}} & \textcolor{red}{\ding{56}}\\
     LoRA-RITE & \textcolor{green}{\ding{52}} & \textcolor{red}{\ding{56}} & \textcolor{red}{\ding{56}}\\
     \midrule
     \cellcolor{orange!25}\textbf{\methodname} & \cellcolor{orange!25}\textcolor{green}{\ding{52}} & \cellcolor{orange!25}\textcolor{green}{\ding{52}} & \cellcolor{orange!25}\textcolor{green}{\ding{52}}\\
     \bottomrule
\end{tabular}}
\vspace{-0.5em}
\end{table}

\begin{table}[t!]
\caption{\textbf{Comparison to LoRA-RITE and RefLoRA.}
We report the average accuracy on the commonsense reasoning benchmarks with rank 32 from a single run.
The reported numbers are taken from RefLoRA~\citep{zhang2025reflora}, whose experimental setup may differ from ours.
In our reproduction, all aspects of the experimental setup follow ours except for the learning rate, which is searched over $\{$1e-4, 2e-4, 5e-4, 1e-3$\}$.
}
\label{tab:reflora_comparison}
\centering
\resizebox{0.55\linewidth}{!}{
\begin{tabular}{lcc}
     \toprule
     Method & Gemma-2B & LLaMA3-8B\\
     \toprule
     \toprule
     LoRA-RITE (reported) & - & 85.56\\
     RefLoRA (reported) & - & 85.75\\
     \midrule
     LoRA-RITE (reproduced) & 72.75 & 85.49\\
     RefLoRA (reproduced) & 73.97 & 85.83\\
     \midrule
     \cellcolor{orange!25}\textbf{\methodname} & \cellcolor{orange!25}\textbf{74.80} & \cellcolor{orange!25}\textbf{86.08}\\
     \bottomrule
\end{tabular}}
\vspace{-0.5em}
\end{table}

\section{Comparison to Orthogonalization-Based LoRA Variants}\label{sec:orthogonal_lora}
Several LoRA variants also employ orthonormal bases obtained by QR decomposition.
Our contribution, however, is not the use of orthonormal bases per se, but the structural decoupling of the singular values from the backward pass.
We clarify this distinction at the mechanism level and validate it empirically.

\paragraph{OLoRA.}
OLoRA~\citep{buyukakyuz2024olora} applies QR decomposition once, at initialization, to the pretrained weight $\mW_0 = \mQ\mR$, sets $\mB$ to the first $r$ columns of $\mQ$ and $\mA$ to the first $r$ rows of $\mR$, and then trains the standard LoRA formulation.
Its backward pass is therefore identical to that of LoRA and still carries $\boldsymbol{\Sigma}_\mB$.
Since $\mB$ has orthonormal columns, $\boldsymbol{\Sigma}_\mB = \mI_r$ holds only at the first iteration, and nothing constrains $\boldsymbol{\Sigma}_\mB$ once training proceeds.
OLoRA thus addresses anisotropic gradient scaling only at initialization.

\paragraph{QR-LoRA.}
QR-LoRA~\citep{yang2025qrlora} applies QR decomposition to the transpose of a rank-$r$ core of the pretrained weight, $\mW_\text{core}^\top = \mQ\mR$, and parameterizes the weight update as $\Delta \mW = (\mQ \Delta \mR)^\top$, where the orthonormal factor $\mQ$ is frozen and only the increment $\Delta \mR$ to the triangular factor is trained.
Because the gradient reaches $\Delta \mR$ only through the orthonormal $\mQ$, anisotropic gradient scaling is indeed removed.
However, the fixed $\mQ$ confines the row space of $\Delta \mW$ to a subspace determined by the pretrained weight, so QR-LoRA cannot represent an arbitrary rank-$r$ update.

\paragraph{Comparison.}
As summarized in Table~\ref{tab:orthogonal_lora}, neither method satisfies both desirable properties at once, whereas \methodname~removes anisotropic gradient scaling from the backward pass without restricting the representational capacity of the forward pass.
To further isolate the source of the gain, we evaluate two additional variants: QR-LoRA with a trainable $\mQ$, which tests whether its fixed $\mQ$ causes the collapse, and \methodname~without detachment, which allows gradients to flow through the QR decomposition and thereby reintroduces anisotropic gradient scaling.
\methodname~and \methodname~without detachment differ only in whether $\mQ_\mA$ and $\mQ_\mB$ are detached during the backward pass, and the gap between them indicates that the gain comes from removing anisotropic gradient scaling rather than from the use of orthonormal bases.

\begin{table}[t!]
\caption{\textbf{Comparison to orthogonalization-based LoRA variants.}
`Removes AGS' indicates whether anisotropic gradient scaling is removed throughout training, and `Full RC of $\Delta \mW$' indicates the full representational capacity of $\Delta \mW$.
We report the average accuracy on the commonsense reasoning benchmarks with Gemma-2B at rank 32 from a single run.
}
\label{tab:orthogonal_lora}
\centering
\resizebox{0.7\linewidth}{!}{
\begin{tabular}{lccc}
     \toprule
     Method & Removes AGS & Full RC of $\Delta \mW$ & Avg. Accuracy\\
     \toprule
     \toprule
     OLoRA & \textcolor{red}{\ding{56}} (only at init.) & \textcolor{green}{\ding{52}} & 71.24\\
     QR-LoRA & \textcolor{green}{\ding{52}} & \textcolor{red}{\ding{56}} & 4.67\\
     QR-LoRA (trainable $\mQ$) & \textcolor{red}{\ding{56}} & \textcolor{green}{\ding{52}} & 69.88\\
     \methodname~w/o detachment & \textcolor{red}{\ding{56}} & \textcolor{green}{\ding{52}} & 72.86\\
     \midrule
     \cellcolor{orange!25}\textbf{\methodname} & \cellcolor{orange!25}\textcolor{green}{\ding{52}} & \cellcolor{orange!25}\textcolor{green}{\ding{52}} & \cellcolor{orange!25}\textbf{74.80}\\
     \bottomrule
\end{tabular}}
\vspace{-0.5em}
\end{table}

\begin{table*}[t!]
    \caption{\textbf{Additional implementation details.}}
    \vspace{-0.5em}
    \label{tab:additional_detail}
    \centering
    \resizebox{0.7\linewidth}{!}{
    \begin{tabular}{lccccc}
        \toprule
        & \multicolumn{2}{c}{Full FT.} & \multicolumn{3}{c}{LoRA-based}\\
        \cmidrule(lr){2-3} \cmidrule(lr){4-6}
        & Gemma-2B & LLaMA3-8B & Gemma-2B & LLaMA2-7B & LLaMA3-8B\\
         \midrule
         Learning Rate&2e-5&1e-5&2e-4&2e-4&1e-4\\
         Learning Rate Scheduler& \multicolumn{5}{c}{cosine scheduler}\\
         Epochs& \multicolumn{5}{c}{1}\\
         Batch Size&\multicolumn{5}{c}{32}\\
         Target Modules & \multicolumn{5}{c}{`q\_proj', `k\_proj', `v\_proj', `up\_proj', `down\_proj', `o\_proj', `gate\_proj'}\\
         \toprule
    \end{tabular}}
    \vspace{-0.5em}
\end{table*}

\newpage
\section*{NeurIPS Paper Checklist}

The checklist is designed to encourage best practices for responsible machine learning research, addressing issues of reproducibility, transparency, research ethics, and societal impact. Do not remove the checklist: {\bf The papers not including the checklist will be desk rejected.} The checklist should follow the references and follow the (optional) supplemental material.  The checklist does NOT count towards the page
limit. 

Please read the checklist guidelines carefully for information on how to answer these questions. For each question in the checklist:
\begin{itemize}
    \item You should answer \answerYes{}, \answerNo{}, or \answerNA{}.
    \item \answerNA{} means either that the question is Not Applicable for that particular paper or the relevant information is Not Available.
    \item Please provide a short (1--2 sentence) justification right after your answer (even for \answerNA). 
\end{itemize}

{\bf The checklist answers are an integral part of your paper submission.} They are visible to the reviewers, area chairs, senior area chairs, and ethics reviewers. You will also be asked to include it (after eventual revisions) with the final version of your paper, and its final version will be published with the paper.

The reviewers of your paper will be asked to use the checklist as one of the factors in their evaluation. While \answerYes{} is generally preferable to \answerNo{}, it is perfectly acceptable to answer \answerNo{} provided a proper justification is given (e.g., error bars are not reported because it would be too computationally expensive'' or ``we were unable to find the license for the dataset we used''). In general, answering \answerNo{} or \answerNA{} is not grounds for rejection. While the questions are phrased in a binary way, we acknowledge that the true answer is often more nuanced, so please just use your best judgment and write a justification to elaborate. All supporting evidence can appear either in the main paper or the supplemental material, provided in appendix. If you answer \answerYes{} to a question, in the justification please point to the section(s) where related material for the question can be found.

IMPORTANT, please:
\begin{itemize}
    \item {\bf Delete this instruction block, but keep the section heading ``NeurIPS Paper Checklist"},
    \item  {\bf Keep the checklist subsection headings, questions/answers and guidelines below.}
    \item {\bf Do not modify the questions and only use the provided macros for your answers}.
\end{itemize}


\begin{enumerate}

\item {\bf Claims}
    \item[] Question: Do the main claims made in the abstract and introduction accurately reflect the paper's contributions and scope?
    \item[] Answer: \answerYes{} 
    \item[] Justification: Throughout the paper, we consistently claim that LoRA suffers from anisotropic gradient scaling (Figure~\ref{fig:motivation} and Theorem~\ref{theorem:approx}) and our method addresses this successfully (Theorem~\ref{theorem:convergence}). The optimality in Theorem~\ref{theorem:approx} is stated for arbitrary full fine-tuning gradients, and the abstract and introduction are worded accordingly.
    \item[] Guidelines:
    \begin{itemize}
        \item The answer \answerNA{} means that the abstract and introduction do not include the claims made in the paper.
        \item The abstract and/or introduction should clearly state the claims made, including the contributions made in the paper and important assumptions and limitations. A \answerNo{} or \answerNA{} answer to this question will not be perceived well by the reviewers. 
        \item The claims made should match theoretical and experimental results, and reflect how much the results can be expected to generalize to other settings. 
        \item It is fine to include aspirational goals as motivation as long as it is clear that these goals are not attained by the paper. 
    \end{itemize}

\item {\bf Limitations}
    \item[] Question: Does the paper discuss the limitations of the work performed by the authors?
    \item[] Answer: \answerYes{} 
    \item[] Justification: We include a limitations section at the end of the Appendix.
    \item[] Guidelines:
    \begin{itemize}
        \item The answer \answerNA{} means that the paper has no limitation while the answer \answerNo{} means that the paper has limitations, but those are not discussed in the paper. 
        \item The authors are encouraged to create a separate ``Limitations'' section in their paper.
        \item The paper should point out any strong assumptions and how robust the results are to violations of these assumptions (e.g., independence assumptions, noiseless settings, model well-specification, asymptotic approximations only holding locally). The authors should reflect on how these assumptions might be violated in practice and what the implications would be.
        \item The authors should reflect on the scope of the claims made, e.g., if the approach was only tested on a few datasets or with a few runs. In general, empirical results often depend on implicit assumptions, which should be articulated.
        \item The authors should reflect on the factors that influence the performance of the approach. For example, a facial recognition algorithm may perform poorly when image resolution is low or images are taken in low lighting. Or a speech-to-text system might not be used reliably to provide closed captions for online lectures because it fails to handle technical jargon.
        \item The authors should discuss the computational efficiency of the proposed algorithms and how they scale with dataset size.
        \item If applicable, the authors should discuss possible limitations of their approach to address problems of privacy and fairness.
        \item While the authors might fear that complete honesty about limitations might be used by reviewers as grounds for rejection, a worse outcome might be that reviewers discover limitations that aren't acknowledged in the paper. The authors should use their best judgment and recognize that individual actions in favor of transparency play an important role in developing norms that preserve the integrity of the community. Reviewers will be specifically instructed to not penalize honesty concerning limitations.
    \end{itemize}

\item {\bf Theory assumptions and proofs}
    \item[] Question: For each theoretical result, does the paper provide the full set of assumptions and a complete (and correct) proof?
    \item[] Answer: \answerYes{} 
    \item[] Justification: We explicitly state the assumptions in Theorem~\ref{theorem:approx} and Theorem~\ref{theorem:convergence}, and provide the complete proofs, including the supporting Lemma~\ref{lemma:best_attainable}, in Section~\ref{sec:approx_proof} and Section~\ref{sec:convergence_proof}.
    \item[] Guidelines:
    \begin{itemize}
        \item The answer \answerNA{} means that the paper does not include theoretical results. 
        \item All the theorems, formulas, and proofs in the paper should be numbered and cross-referenced.
        \item All assumptions should be clearly stated or referenced in the statement of any theorems.
        \item The proofs can either appear in the main paper or the supplemental material, but if they appear in the supplemental material, the authors are encouraged to provide a short proof sketch to provide intuition. 
        \item Inversely, any informal proof provided in the core of the paper should be complemented by formal proofs provided in appendix or supplemental material.
        \item Theorems and Lemmas that the proof relies upon should be properly referenced. 
    \end{itemize}

    \item {\bf Experimental result reproducibility}
    \item[] Question: Does the paper fully disclose all the information needed to reproduce the main experimental results of the paper to the extent that it affects the main claims and/or conclusions of the paper (regardless of whether the code and data are provided or not)?
    \item[] Answer: \answerYes{} 
    \item[] Justification: We provide a detailed algorithm for the proposed method in Algorithm~\ref{alg:training}, and we describe the settings used to reproduce the compared methods in Section~\ref{sec:precond_discussion} and Section~\ref{sec:orthogonal_lora}.
    \item[] Guidelines:
    \begin{itemize}
        \item The answer \answerNA{} means that the paper does not include experiments.
        \item If the paper includes experiments, a \answerNo{} answer to this question will not be perceived well by the reviewers: Making the paper reproducible is important, regardless of whether the code and data are provided or not.
        \item If the contribution is a dataset and\slash or model, the authors should describe the steps taken to make their results reproducible or verifiable. 
        \item Depending on the contribution, reproducibility can be accomplished in various ways. For example, if the contribution is a novel architecture, describing the architecture fully might suffice, or if the contribution is a specific model and empirical evaluation, it may be necessary to either make it possible for others to replicate the model with the same dataset, or provide access to the model. In general. releasing code and data is often one good way to accomplish this, but reproducibility can also be provided via detailed instructions for how to replicate the results, access to a hosted model (e.g., in the case of a large language model), releasing of a model checkpoint, or other means that are appropriate to the research performed.
        \item While NeurIPS does not require releasing code, the conference does require all submissions to provide some reasonable avenue for reproducibility, which may depend on the nature of the contribution. For example
        \begin{enumerate}
            \item If the contribution is primarily a new algorithm, the paper should make it clear how to reproduce that algorithm.
            \item If the contribution is primarily a new model architecture, the paper should describe the architecture clearly and fully.
            \item If the contribution is a new model (e.g., a large language model), then there should either be a way to access this model for reproducing the results or a way to reproduce the model (e.g., with an open-source dataset or instructions for how to construct the dataset).
            \item We recognize that reproducibility may be tricky in some cases, in which case authors are welcome to describe the particular way they provide for reproducibility. In the case of closed-source models, it may be that access to the model is limited in some way (e.g., to registered users), but it should be possible for other researchers to have some path to reproducing or verifying the results.
        \end{enumerate}
    \end{itemize}

\item {\bf Open access to data and code}
    \item[] Question: Does the paper provide open access to the data and code, with sufficient instructions to faithfully reproduce the main experimental results, as described in supplemental material?
    \item[] Answer: \answerNo{} 
    \item[] Justification: While the code is not presently released, it will be made publicly accessible should the paper be accepted.
    \item[] Guidelines:
    \begin{itemize}
        \item The answer \answerNA{} means that paper does not include experiments requiring code.
        \item Please see the NeurIPS code and data submission guidelines (\url{https://neurips.cc/public/guides/CodeSubmissionPolicy}) for more details.
        \item While we encourage the release of code and data, we understand that this might not be possible, so \answerNo{} is an acceptable answer. Papers cannot be rejected simply for not including code, unless this is central to the contribution (e.g., for a new open-source benchmark).
        \item The instructions should contain the exact command and environment needed to run to reproduce the results. See the NeurIPS code and data submission guidelines (\url{https://neurips.cc/public/guides/CodeSubmissionPolicy}) for more details.
        \item The authors should provide instructions on data access and preparation, including how to access the raw data, preprocessed data, intermediate data, and generated data, etc.
        \item The authors should provide scripts to reproduce all experimental results for the new proposed method and baselines. If only a subset of experiments are reproducible, they should state which ones are omitted from the script and why.
        \item At submission time, to preserve anonymity, the authors should release anonymized versions (if applicable).
        \item Providing as much information as possible in supplemental material (appended to the paper) is recommended, but including URLs to data and code is permitted.
    \end{itemize}

\item {\bf Experimental setting/details}
    \item[] Question: Does the paper specify all the training and test details (e.g., data splits, hyperparameters, how they were chosen, type of optimizer) necessary to understand the results?
    \item[] Answer: \answerYes{} 
    \item[] Justification: We provide experimental details in Section~\ref{sec:experimental_setup} and Table~\ref{tab:additional_detail}. For the methods we reproduce, the searched learning rates are reported in the caption of Table~\ref{tab:reflora_comparison}.
    \item[] Guidelines:
    \begin{itemize}
        \item The answer \answerNA{} means that the paper does not include experiments.
        \item The experimental setting should be presented in the core of the paper to a level of detail that is necessary to appreciate the results and make sense of them.
        \item The full details can be provided either with the code, in appendix, or as supplemental material.
    \end{itemize}

\item {\bf Experiment statistical significance}
    \item[] Question: Does the paper report error bars suitably and correctly defined or other appropriate information about the statistical significance of the experiments?
    \item[] Answer: \answerYes{} 
    \item[] Justification: For the main results, we report the mean and standard deviation across three independent runs. The additional ablation studies in the appendix (Tables~\ref{tab:orthogonal_lora}, \ref{tab:warmup_ablation}, \ref{tab:q_frozen}, and \ref{tab:residual_e}) report a single run.
    \item[] Guidelines:
    \begin{itemize}
        \item The answer \answerNA{} means that the paper does not include experiments.
        \item The authors should answer \answerYes{} if the results are accompanied by error bars, confidence intervals, or statistical significance tests, at least for the experiments that support the main claims of the paper.
        \item The factors of variability that the error bars are capturing should be clearly stated (for example, train/test split, initialization, random drawing of some parameter, or overall run with given experimental conditions).
        \item The method for calculating the error bars should be explained (closed form formula, call to a library function, bootstrap, etc.)
        \item The assumptions made should be given (e.g., Normally distributed errors).
        \item It should be clear whether the error bar is the standard deviation or the standard error of the mean.
        \item It is OK to report 1-sigma error bars, but one should state it. The authors should preferably report a 2-sigma error bar than state that they have a 96\% CI, if the hypothesis of Normality of errors is not verified.
        \item For asymmetric distributions, the authors should be careful not to show in tables or figures symmetric error bars that would yield results that are out of range (e.g., negative error rates).
        \item If error bars are reported in tables or plots, the authors should explain in the text how they were calculated and reference the corresponding figures or tables in the text.
    \end{itemize}

\item {\bf Experiments compute resources}
    \item[] Question: For each experiment, does the paper provide sufficient information on the computer resources (type of compute workers, memory, time of execution) needed to reproduce the experiments?
    \item[] Answer: \answerYes{} 
    \item[] Justification: The specific GPU used for our experiments is described in Section~\ref{sec:experimental_setup}.
    We provide an analysis of training overhead in Table~\ref{tab:overhead}.
    \item[] Guidelines:
    \begin{itemize}
        \item The answer \answerNA{} means that the paper does not include experiments.
        \item The paper should indicate the type of compute workers CPU or GPU, internal cluster, or cloud provider, including relevant memory and storage.
        \item The paper should provide the amount of compute required for each of the individual experimental runs as well as estimate the total compute. 
        \item The paper should disclose whether the full research project required more compute than the experiments reported in the paper (e.g., preliminary or failed experiments that didn't make it into the paper). 
    \end{itemize}
    
\item {\bf Code of ethics}
    \item[] Question: Does the research conducted in the paper conform, in every respect, with the NeurIPS Code of Ethics \url{https://neurips.cc/public/EthicsGuidelines}?
    \item[] Answer: \answerYes{} 
    \item[] Justification: We confirm that this work strictly adheres to the Code of Ethics.
    \item[] Guidelines:
    \begin{itemize}
        \item The answer \answerNA{} means that the authors have not reviewed the NeurIPS Code of Ethics.
        \item If the authors answer \answerNo, they should explain the special circumstances that require a deviation from the Code of Ethics.
        \item The authors should make sure to preserve anonymity (e.g., if there is a special consideration due to laws or regulations in their jurisdiction).
    \end{itemize}

\item {\bf Broader impacts}
    \item[] Question: Does the paper discuss both potential positive societal impacts and negative societal impacts of the work performed?
    \item[] Answer: \answerYes{} 
    \item[] Justification: We provide the broader impacts of our work at the end of the Appendix.
    \item[] Guidelines:
    \begin{itemize}
        \item The answer \answerNA{} means that there is no societal impact of the work performed.
        \item If the authors answer \answerNA{} or \answerNo, they should explain why their work has no societal impact or why the paper does not address societal impact.
        \item Examples of negative societal impacts include potential malicious or unintended uses (e.g., disinformation, generating fake profiles, surveillance), fairness considerations (e.g., deployment of technologies that could make decisions that unfairly impact specific groups), privacy considerations, and security considerations.
        \item The conference expects that many papers will be foundational research and not tied to particular applications, let alone deployments. However, if there is a direct path to any negative applications, the authors should point it out. For example, it is legitimate to point out that an improvement in the quality of generative models could be used to generate Deepfakes for disinformation. On the other hand, it is not needed to point out that a generic algorithm for optimizing neural networks could enable people to train models that generate Deepfakes faster.
        \item The authors should consider possible harms that could arise when the technology is being used as intended and functioning correctly, harms that could arise when the technology is being used as intended but gives incorrect results, and harms following from (intentional or unintentional) misuse of the technology.
        \item If there are negative societal impacts, the authors could also discuss possible mitigation strategies (e.g., gated release of models, providing defenses in addition to attacks, mechanisms for monitoring misuse, mechanisms to monitor how a system learns from feedback over time, improving the efficiency and accessibility of ML).
    \end{itemize}
    
\item {\bf Safeguards}
    \item[] Question: Does the paper describe safeguards that have been put in place for responsible release of data or models that have a high risk for misuse (e.g., pre-trained language models, image generators, or scraped datasets)?
    \item[] Answer: \answerNA{} 
    \item[] Justification: 
    \item[] Guidelines:
    \begin{itemize}
        \item The answer \answerNA{} means that the paper poses no such risks.
        \item Released models that have a high risk for misuse or dual-use should be released with necessary safeguards to allow for controlled use of the model, for example by requiring that users adhere to usage guidelines or restrictions to access the model or implementing safety filters. 
        \item Datasets that have been scraped from the Internet could pose safety risks. The authors should describe how they avoided releasing unsafe images.
        \item We recognize that providing effective safeguards is challenging, and many papers do not require this, but we encourage authors to take this into account and make a best faith effort.
    \end{itemize}

\item {\bf Licenses for existing assets}
    \item[] Question: Are the creators or original owners of assets (e.g., code, data, models), used in the paper, properly credited and are the license and terms of use explicitly mentioned and properly respected?
    \item[] Answer: \answerYes{} 
    \item[] Justification: We have cited the pre-trained models and datasets used in this study.
    \item[] Guidelines:
    \begin{itemize}
        \item The answer \answerNA{} means that the paper does not use existing assets.
        \item The authors should cite the original paper that produced the code package or dataset.
        \item The authors should state which version of the asset is used and, if possible, include a URL.
        \item The name of the license (e.g., CC-BY 4.0) should be included for each asset.
        \item For scraped data from a particular source (e.g., website), the copyright and terms of service of that source should be provided.
        \item If assets are released, the license, copyright information, and terms of use in the package should be provided. For popular datasets, \url{paperswithcode.com/datasets} has curated licenses for some datasets. Their licensing guide can help determine the license of a dataset.
        \item For existing datasets that are re-packaged, both the original license and the license of the derived asset (if it has changed) should be provided.
        \item If this information is not available online, the authors are encouraged to reach out to the asset's creators.
    \end{itemize}

\item {\bf New assets}
    \item[] Question: Are new assets introduced in the paper well documented and is the documentation provided alongside the assets?
    \item[] Answer: \answerNA{} 
    \item[] Justification:
    \item[] Guidelines:
    \begin{itemize}
        \item The answer \answerNA{} means that the paper does not release new assets.
        \item Researchers should communicate the details of the dataset\slash code\slash model as part of their submissions via structured templates. This includes details about training, license, limitations, etc. 
        \item The paper should discuss whether and how consent was obtained from people whose asset is used.
        \item At submission time, remember to anonymize your assets (if applicable). You can either create an anonymized URL or include an anonymized zip file.
    \end{itemize}

\item {\bf Crowdsourcing and research with human subjects}
    \item[] Question: For crowdsourcing experiments and research with human subjects, does the paper include the full text of instructions given to participants and screenshots, if applicable, as well as details about compensation (if any)? 
    \item[] Answer: \answerNA{} 
    \item[] Justification: 
    \item[] Guidelines:
    \begin{itemize}
        \item The answer \answerNA{} means that the paper does not involve crowdsourcing nor research with human subjects.
        \item Including this information in the supplemental material is fine, but if the main contribution of the paper involves human subjects, then as much detail as possible should be included in the main paper. 
        \item According to the NeurIPS Code of Ethics, workers involved in data collection, curation, or other labor should be paid at least the minimum wage in the country of the data collector. 
    \end{itemize}

\item {\bf Institutional review board (IRB) approvals or equivalent for research with human subjects}
    \item[] Question: Does the paper describe potential risks incurred by study participants, whether such risks were disclosed to the subjects, and whether Institutional Review Board (IRB) approvals (or an equivalent approval/review based on the requirements of your country or institution) were obtained?
    \item[] Answer: \answerNA{} 
    \item[] Justification: 
    \item[] Guidelines:
    \begin{itemize}
        \item The answer \answerNA{} means that the paper does not involve crowdsourcing nor research with human subjects.
        \item Depending on the country in which research is conducted, IRB approval (or equivalent) may be required for any human subjects research. If you obtained IRB approval, you should clearly state this in the paper. 
        \item We recognize that the procedures for this may vary significantly between institutions and locations, and we expect authors to adhere to the NeurIPS Code of Ethics and the guidelines for their institution. 
        \item For initial submissions, do not include any information that would break anonymity (if applicable), such as the institution conducting the review.
    \end{itemize}

\item {\bf Declaration of LLM usage}
    \item[] Question: Does the paper describe the usage of LLMs if it is an important, original, or non-standard component of the core methods in this research? Note that if the LLM is used only for writing, editing, or formatting purposes and does \emph{not} impact the core methodology, scientific rigor, or originality of the research, declaration is not required.
    \item[] Answer: \answerNA{} 
    \item[] Justification: 
    \item[] Guidelines:
    \begin{itemize}
        \item The answer \answerNA{} means that the core method development in this research does not involve LLMs as any important, original, or non-standard components.
        \item Please refer to our LLM policy in the NeurIPS handbook for what should or should not be described.
    \end{itemize}

\end{enumerate}

\end{document}